%% file: main.tex
\documentclass{article}

\usepackage{microtype}
\usepackage{graphicx}
\usepackage{subcaption}
\usepackage{booktabs} 

\usepackage{hyperref}

\usepackage[accepted]{icml2026}

\usepackage{amsmath}
\usepackage{amssymb}
\usepackage{mathtools}
\usepackage{amsthm}
\usepackage{multirow}
\usepackage{graphicx}
\usepackage{xcolor}
\usepackage{colortbl}

\definecolor{IdentBlue}{HTML}{1f77b4}
\definecolor{UnidentRed}{HTML}{d62728}

\usepackage[capitalize,noabbrev]{cleveref}

\theoremstyle{plain}
\newtheorem{theorem}{Theorem}[section]
\newtheorem{proposition}[theorem]{Proposition}
\newtheorem{lemma}[theorem]{Lemma}

\theoremstyle{definition}
\newtheorem{definition}[theorem]{Definition}

\theoremstyle{remark}
\newtheorem{remark}[theorem]{Remark}

\usepackage[disable,textsize=tiny]{todonotes}

\icmltitlerunning{Identifiable Second-Order ODEs from Video}

\begin{document}

\twocolumn[
  \icmltitle{Physics from Video: Identifiability of Time-Invariant Second-Order ODEs under Minimal Trajectory Conditions}



  \icmlsetsymbol{equal}{*}

  \begin{icmlauthorlist}
    \icmlauthor{Yuanyuan Wang}{equal,mbzuai}
    \icmlauthor{Wenjie Wang}{equal,melb}
    \icmlauthor{Kun Zhang}{mbzuai,cmu}
    \icmlauthor{Mingming Gong}{mbzuai,melb}
  \end{icmlauthorlist}

  \icmlaffiliation{mbzuai}{Mohamed bin Zayed University of Artificial Intelligence, Abu Dhabi, UAE}
  \icmlaffiliation{melb}{University of Melbourne, Melbourne, Australia}
  \icmlaffiliation{cmu}{Carnegie Mellon University, Pittsburgh, USA}

  \icmlcorrespondingauthor{Mingming Gong}{mingming.gong@unimelb.edu.au}

  \icmlkeywords{Structural Identifiability, Second-Order Linear ODE, Physics from Video, Decoder-Free Learning, Causal Representation Learning}

  \vskip 0.3in
  ]



\printAffiliationsAndNotice{\icmlEqualContribution}

\input{pages/abstract}
\input{pages/intro3}

\input{pages/related_work}

\input{pages/problem_setting}
\input{pages/main_results}

\input{pages/experiments}
\input{pages/conclusion}


\newpage
\section*{Acknowledgements}
KZ would like to acknowledge the support from NSF Award No.~2229881, AI Institute for Societal Decision Making (AI-SDM), the National Institutes of Health (NIH) under Contract R01HL159805, and grants from Quris AI, Florin Court Capital, MBZUAI-WIS Joint Program, and the Al Deira Causal Education project. MG was supported by ARC DP240102088 and WIS-MBZUAI 142571.

\section*{Impact Statement}
This paper presents work whose goal is to advance the field
of Machine Learning. There are many potential societal
consequences of our work, none which we feel must be
specifically highlighted here.

\bibliography{main}
\bibliographystyle{icml2026}

\newpage
\appendix
\onecolumn

\input{appendix/solution_of_target_ODE}
\newpage
\input{appendix/multi-clip_bound}
\newpage
\input{appendix/proofs}
\newpage
\input{appendix/design_conditioning}

\newpage
\input{appendix/constant_term_g}
\newpage
\input{appendix/experimental_settings_and_additional_results}


\end{document}

%% file: pages/abstract.tex

\begin{abstract}
Bridging the gap between visual realism and physical understanding is a core challenge for video-based world models. We study the structural identifiability of continuous-time physical laws from raw pixels, focusing on whether an encoder-only pipeline can uniquely recover the parameters of second-order linear ODEs. We prove that a level-set slope-coverage condition ensures the learned latent space is locally affine to the true physical state, enabling exact parameter recovery. Our theory provides the first characterization of minimal data requirements across damping regimes, establishing that underdamped systems are identifiable from a single video clip, whereas other regimes require three diverse trajectories. We further introduce a variance-floor regularizer to stabilize the decoder-free objective and prevent latent collapse. Validated on synthetic and real-world data, our approach demonstrates that interpretable physical constants can be reliably estimated from video without the need for compute-intensive pixel reconstruction, ensuring both physical correctness and transparency. Code is available at \url{https://github.com/wenjiewang3/PhysicsFromVideo}.
\end{abstract}

%% file: pages/intro3.tex
\section{Introduction}

The quest for \emph{world models} has renewed a central debate in AI: do modern generative video models truly internalize physical laws, or do they primarily act as powerful pixel predictors that achieve visual realism without reliable physical grounding?
Recent physics-centric evaluations, such as Physics-IQ \cite{motamed2025generative} (and related benchmarks \cite{kang2024far}), suggest that while these models produce visually stunning videos, they exhibit a fundamental decoupling between visual fidelity and physical correctness. Specifically, they frequently violate basic physical invariants, revealing that high-fidelity synthesis can mask a systematic lack of grounding in the causal laws of the observed world. This discrepancy highlights a critical gap: achieving visual plausibility does not imply the recovery of the true, interpretable physical laws that govern the observed world.

This gap motivates a complementary goal: recovering \emph{interpretable} physical laws and parameters directly from raw videos.
Commodity cameras are increasingly used as scalable, non-contact sensors across infrastructure monitoring \cite{bai2023bridge}, digital health and biomechanics \cite{uhlrich2023opencap,boswell2023smartphone}, and robotics/graphics \cite{zhu2025one,zhao2025toward}.
If pixels can be mapped to reliable continuous-time dynamical models, video can enable monitoring, prediction, diagnosis, and design where dedicated instrumentation is unavailable or costly.
Such recovered laws can also act as explicit constraints or priors for physics-grounded video prediction and generation, helping bridge the disconnect between visual realism and physical correctness.
Yet identification from video is fundamentally harder than classical system identification: the state is latent, supervision is limited, and the compact coordinate system in which the dynamics are simple is unknown. This leads to a foundational, yet unresolved, theoretical question: 

\begin{center}
\emph{Can we identify the true underlying physical parameters from raw video, and if so, under what formal conditions?}
\end{center}

Supervised regression from video to states or parameters requires high-quality ground truth that is often expensive or infeasible to obtain at scale \cite{meijering2016imagining,varoquaux2022machine}, motivating unsupervised approaches that incorporate known governing equations as inductive bias.
Many pipelines nevertheless rely on \emph{reconstruction-driven} objectives, either via autoencoder--decoder designs \cite{kandukuri2020learning,kandukuri2022physical} or analysis-by-synthesis through differentiable rendering and photometric matching \cite{ma2022risp,stotko2024physics,zhao2025toward}. Many such methods mitigate appearance ambiguity in practice through structured decoders, restricted functional forms, or stronger rendering priors. Their identifiability, however, depends jointly on the dynamics model and the observation-model priors: when the reconstruction loss operates in pixel space and the decoder or renderer is expressive enough, it can in principle compensate for physically incorrect parameters, so the recovered parameters need not be unique without additional structural assumptions. Furthermore, high-fidelity reconstruction is data- and compute-intensive and can dominate training, even when the downstream task does not require pixel-level reconstruction.
Recent decoder-free, encoder-only formulations \cite{garcia2025learning} take a bold step by dropping reconstruction and enforcing physics directly in latent space, substantially reducing reliance on pixel synthesis. However, while removing the decoder mitigates appearance confounding, it exposes an even more fundamental theoretical vacuum: without the pixel-level constraint, the latent coordinate system is defined only up to an arbitrary reparameterization. Consequently, a good fit in latent space provides no assurance that the recovered parameters are unique or match the true physical system. Establishing \emph{structural identifiability} is therefore a prerequisite for reliable physics-from-video, especially in ill-conditioned regimes \cite{gutenkunst2007universally}.


In this work, we study identifiability in an \emph{encoder-only}, \emph{decoder-free} setting, isolating when a video trajectory itself pins down a physically meaningful coordinate system and yields provable parameter recovery.
Since identifying physical parameters from raw video is intrinsically challenging, we start with a simple yet widely used continuous-time model that enables sharp analysis: the homogeneous second-order linear time-invariant (LTI) ODE $z''(t) + \gamma_1 z'(t) + \gamma_0 z(t) = 0$,
a canonical abstraction for damped resonance and single-mode transients that underlies ring-down/modal analysis and damping estimation in mechanical vibration and structural monitoring \cite{ewins2009modal,li2022robust,yokoyama2023experimental,noh2025damping}.
Because many applications rely on the \emph{interpretability} of $(\gamma_1,\gamma_0)$, understanding when these parameters are uniquely recoverable from raw video is a prerequisite for using decoder-free vision as a reliable diagnostic instrument.

\paragraph{Contributions.}
We (i) provide, to the best of our knowledge, the first \emph{structural identifiability} guarantees for learning continuous-time physical parameters \emph{from raw video alone} in an encoder-only framework; specifically, for the homogeneous second-order linear ODE, we characterize when a single clip suffices and when multiple trajectories are necessary across damping regimes;
(ii) derive non-asymptotic finite-sample error bounds that separate measurement noise, finite-difference bias, and design conditioning;
(iii) propose a simple variance-floor regularizer that improves conditioning and stabilizes training.

%% file: pages/related_work.tex
\section{Related Work}
\noindent\textbf{Reconstruction-driven physics and parameter inference from video.}
A large class of methods learns dynamics and physical parameters from raw video by coupling visual representation learning with differentiable time evolution and supervising via pixel-space reconstruction.
In \emph{autoencoder--decoder} pipelines, an encoder maps frames to latents, a dynamics module predicts their temporal rollout, and a decoder reconstructs future frames (often with masks, spatial transformers, or object-centric structure to encourage factorized explanations) \cite{watters2017visual,de2018end,asenov2019vid2param,kandukuri2020learning,kandukuri2022physical}.
A related \emph{analysis-by-synthesis} line treats physical parameters as optimization variables and fits them through differentiable simulation and rendering or photometric alignment \cite{jaques2020physics,murthy2020gradsim,ma2022risp,stotko2024physics,zhao2025toward}.
While these approaches can avoid manual labels, parameter estimation is fundamentally mediated by the observation model: nuisance factors such as lighting, texture, background and viewpoint can compensate for incorrect dynamics yet still achieve low pixel loss.
As a result, the inverse problem is often not identifiable, leading to shortcut solutions, sensitivity to initialization, and degraded transfer under domain shift.

\noindent\textbf{Decoder-free latent-space physics from video.}
To reduce reliance on rendering, the LPFV approach \cite{garcia2025learning} enforces physical constraints directly in latent space, removing pixel reconstruction.
Related single-video formulations can estimate per-scene parameters but still depend on photometric objectives \cite{hofherr2023neural}.
Relative to LPFV, our method is intentionally close at the architectural level: both are encoder-only and enforce physics directly in latent space.
The main contribution of this paper is therefore the \emph{identifiability} analysis: we provide explicit trajectory-level certificates that determine when continuous-time parameters are uniquely recoverable from video.
Methodologically, LPFV uses an Euler/one-sided derivative construction, whereas we use centered finite differences for both first and second derivatives, which reduces the discretization bias in the latent ODE residual from first order to second order in $\Delta t$.
LPFV also uses a KL-style anti-collapse term, whereas we use a variance-floor regularizer.

\noindent\textbf{Trajectory-based scientific machine learning and system identification.}
When trajectories (states/derivatives) are observed, scientific machine learning provides powerful tools for discovering and fitting dynamics, including sparse equation discovery \cite{brunton2016discovering,champion2019data},
physics-informed inverse problem solvers \cite{raissi2019physics,karniadakis2021physics},
universal/neural differential equation frameworks \cite{chen2018neural,rackauckas2020universal},
and structure-preserving Lagrangian/Hamiltonian formulations \cite{greydanus2019hamiltonian,zhong2019symplectic,lutter2019deep,toth2019hamiltonian,cranmer2020lagrangian}.
In parallel, the identifiability of linear ODE systems from discrete samples (including single-trajectory settings) has been studied extensively \cite{stanhope2014identifiability,qiu2022identifiability,wang2023generator,wang2024confounders,wang2024identifiability}.
These works typically assume direct access to the state (or a fixed observation model), whereas physics-from-video must infer the state from pixels, introducing latent reparameterization ambiguity.
Our coverage-based analysis resolves this ambiguity in an encoder-only setting and yields both exact structural identification and finite-sample guarantees, together with practical single-clip versus multi-trajectory data-collection guidance.

%% file: pages/problem_setting.tex
\section{Problem setting}
\label{sec:problem}

This section introduces the notation, learning setup, and standing assumptions used throughout.

\begin{figure}[t]
  \centering
  \includegraphics[width=\columnwidth]{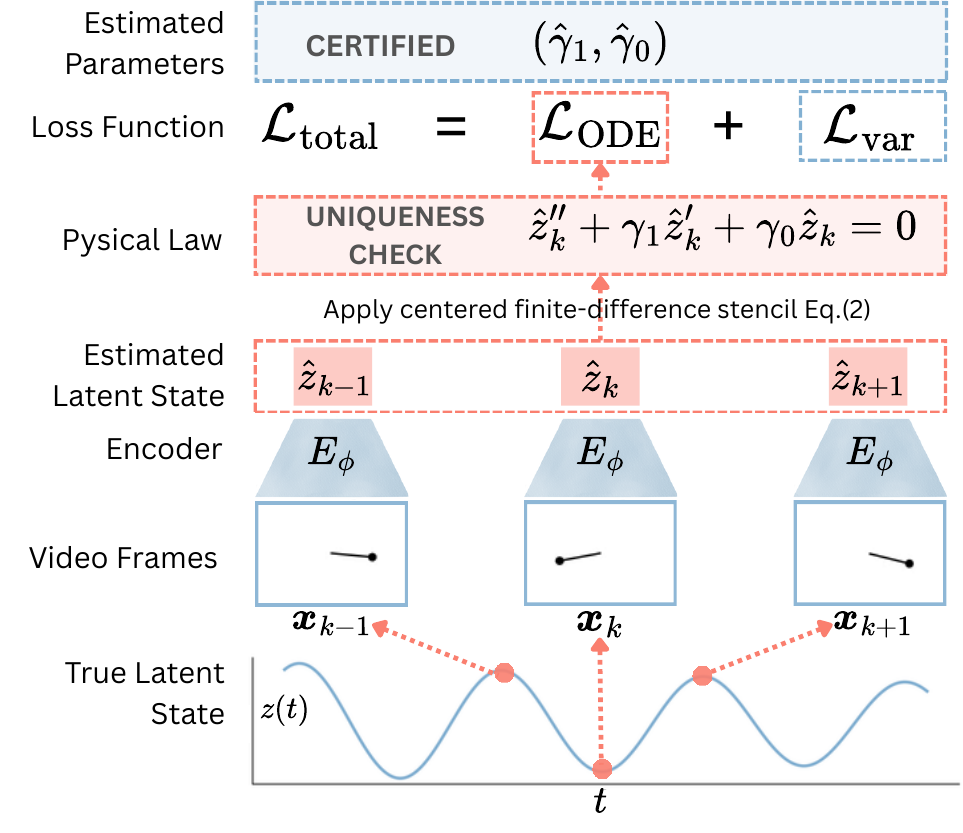}
\caption{
\textbf{Encoder-only pipeline with identifiability-aware parameter estimation.}
A shared encoder $E_\phi$ maps video frames $\boldsymbol{x}_k$ to scalar latents $\hat z_k$.
Using the centered finite-difference stencil (Eq.~\eqref{eq:fd}), we compute $\hat z'_k$ and $\hat z''_k$ and enforce the ODE~\eqref{eq:hom-ode} with a residual loss $\mathcal{L}_{\mathrm{ODE}}$.
A variance-floor regularizer $\mathcal{L}_{\mathrm{var}}$ prevents latent collapse and improves conditioning, yielding parameter estimates $(\hat\gamma_1,\hat\gamma_0)$.
The \emph{UNIQUENESS CHECK} box denotes the trajectory-level diagnostic from Sec.~\ref{sec:theory}: it tests whether the current clip/window satisfies the coverage condition required for uniqueness.
The \emph{CERTIFIED} label means that this diagnostic passes, so the recovered parameters are uniquely determined by the observed video, rather than merely achieving low residual loss.
}
\label{fig:pipeline}
\end{figure}

\paragraph{Video and encoder.}
We observe a video clip $\{\boldsymbol{x}_k\}_{k=0}^{T}$ with frames $\boldsymbol{x}_k\in\mathbb{R}^{w\times h\times c}$
sampled at a known step size $\Delta t>0$.
The absolute time origin $t_0$ is unknown and immaterial for autonomous dynamics; we write $t_k=t_0+k\Delta t$.
We adopt an \emph{encoder-only} pipeline: a deterministic encoder $E_\phi$ maps each frame to a scalar latent
$\hat z_k := E_\phi(\boldsymbol{x}_k)$,
intended to represent the physical state at time $t_k$ (up to an unknown reparameterization).
Throughout the paper, the hat marks quantities produced by the encoder or estimated from data, as opposed to the underlying ground truth.
The encoder is applied independently to each frame with shared parameters across time, producing a 1D latent $\hat z_k$. Our theoretical results do not depend on a particular architecture beyond determinism and sufficient smoothness implied by the state-consistency assumption (Def.~\ref{def:state-consistency}).

\paragraph{ODE model.}
Our analysis considers the homogeneous second-order linear time-invariant (LTI) ODE
\begin{equation}
\label{eq:hom-ode}
z''(t) + \gamma_1\,z'(t) + \gamma_0\,z(t) = 0,
\quad (\gamma_1,\gamma_0)\in\mathbb{R}^2.
\end{equation}
Closed-form solutions across damping regimes are summarized in App.~\ref{app:ode_solutions}. A non-homogeneous constant term $g$ in $z''+\gamma_1 z'+\gamma_0 z + g=0$ only shifts the equilibrium and does not affect
identifiability of $(\gamma_1,\gamma_0)$ for the homogeneous dynamics; we defer this extension to App.~\ref{app:g-term}.

\paragraph{Observation model.}
We model each frame as
\begin{equation*}
\boldsymbol{x}(t)=\mathcal{X}\!\big(z(t),\,\sigma(z(t)),\,e(t);\vartheta\big),
\end{equation*}
where $z(t)$ is the physical state, $\sigma(z)$ denotes \emph{state-coupled} nuisances (e.g., shading/visibility varying with state),
$e(t)$ collects \emph{exogenous} factors (e.g., background, exposure, illumination), and $\vartheta$ represents time-invariant scene/camera parameters.
In our data collection we aim to keep exogenous factors effectively fixed within each clip,
so that appearance varies primarily through the state trajectory.


\begin{definition}[State consistency]
\label{def:state-consistency}
Let $I$ be an observed time interval and define the realized state range
$\mathcal{R}_z := z(I)$. We say the video is \emph{state-consistent} on $I$ if there exists
an open interval $\mathcal{O}\subset\mathbb{R}$ with $\mathcal{R}_z\subset \mathcal{O}$ and
a function $f:\mathcal{O}\to\mathbb{R}$ with $f\in C^2(\mathcal{O})$, i.e., $f$ is twice continuously differentiable on $\mathcal{O}$, such that
\[
\hat z(t) = f\bigl(z(t)\bigr)\quad \text{for all } t\in I,
\]
where $\hat z(t) := E_\phi(\boldsymbol{x}(t))$.
\end{definition}

Def.~\ref{def:state-consistency} formalizes when the learned coordinate $\hat z$ behaves as a smooth reparameterization of the
true state on the realized range. All identifiability results reduce to geometric properties of the trajectory under this
reparameterization, certified by coverage-type conditions on a state interval $U$ introduced in Sec.~\ref{sec:theory}.

\begin{remark}[Practical guidance]
In practice, controlled capture (static camera, fixed exposure/white balance, approximately constant lighting, uniform background)
often helps make Def.~\ref{def:state-consistency} a reasonable modeling assumption.
\end{remark}

Def.~\ref{def:state-consistency} is hard to verify from raw pixels alone, and our theory does not claim arbitrary real video satisfies it. To probe the practical boundary, we run a synthetic stress test in App.~\ref{app:state_consistency_stress}: it keeps the underdamped ODE fixed and perturbs only the observation model with exogenous nuisance (static background noise, moving clutter, camera jitter, dynamic lighting). The finite-sample estimator retains nontrivial robustness to mild-to-moderate violations (e.g., background noise and camera jitter remain accurate across all severities), but stronger time-varying nuisance can break recovery. So the right reading of Def.~\ref{def:state-consistency} is not all-or-nothing: it is the exact-theorem regime, around which the practical estimator survives modest deviations.

All structural identifiability results assume state consistency (Def.~\ref{def:state-consistency}); our finite-sample analysis additionally allows additive encoder noise around the state-consistent map.

\paragraph{Encoder-only loss.}
Given a clip $\{\boldsymbol{x}_k\}_{k=0}^{T}$ sampled at step $\Delta t$, the encoder produces latents $\hat z_k:=E_\phi(\boldsymbol{x}_k)$.
We approximate derivatives using a centered finite-difference stencil:
\begin{equation}\label{eq:fd}
\hat z'_k = \frac{\hat z_{k+1}-\hat z_{k-1}}{2\,\Delta t},\qquad
\hat z''_k = \frac{\hat z_{k+1}-2\hat z_k+\hat z_{k-1}}{\Delta t^2}, 
\end{equation}
for $k=1,\dots,T-1$.
Given coefficients $(\gamma_1,\gamma_0)$, define the discrete ODE residual
\begin{equation}
\label{eq:resid-ec}
r_k(\gamma_1,\gamma_0)
:=\hat z''_k+\gamma_1\,\hat z'_k+\gamma_0\,\hat z_k,
\end{equation}
and the data-fitting loss
\begin{equation}
\label{eq:loss-ode}
\mathcal{L}_{\mathrm{ODE}}(\phi,\gamma_1,\gamma_0)
:=
\frac{1}{T-1}\sum_{k=1}^{T-1} r_k(\gamma_1,\gamma_0)^2.
\end{equation}

\paragraph{Variance floor regularizer.}
To prevent latent collapse, we impose a lower bound on the empirical standard deviation of $\{\hat z_k\}_{k=0}^{T}$.
Define the empirical mean and variance
\begin{equation*}
\widehat\mu
:=
\frac{1}{T+1}\sum_{k=0}^T \hat z_k,
\quad
\widehat{\operatorname{Var}}(\hat z)
:=
\frac{1}{T+1}\sum_{k=0}^T (\hat z_k-\widehat\mu)^2 .
\end{equation*}
Given a target standard deviation $\tau>0$, we penalize deviation below $\tau$ via
\begin{equation}
\label{eq:loss-var-floor}
\mathcal{L}_{\mathrm{var}}(\hat z_{0:T})
:=
\Big(\max\Big\{0,\ \tau-\sqrt{\widehat{\operatorname{Var}}(\hat z)+\varepsilon}\Big\}\Big)^2,
\end{equation}
where $\varepsilon>0$ is a small constant for numerical stability (we use $\varepsilon=10^{-8}$). The variance floor regularizer serves a dual purpose.
Empirically, it prevents latent collapse to the trivial solution $\hat z_k\equiv 0$, which attains zero ODE residual under
$\mathcal{L}_{\mathrm{ODE}}$.
Theoretically, it is essential for our finite-sample analysis: Lem.~\ref{lem:conditioning} shows that a nontrivial lower bound on
$\widehat{\operatorname{Var}}(\hat z)$ yields a conditioning parameter $\psi_{\min}>0$ for the associated design matrix.

\paragraph{Total objective.}
Our encoder-only training objective is
\begin{equation}
\label{eq:loss-total}
\mathcal{L}_{\mathrm{total}}(\phi,\!\gamma_1,\!\gamma_0)
\!:=\!
\mathcal{L}_{\mathrm{ODE}}(\phi,\!\gamma_1,\!\gamma_0)
\!+\!
\lambda_{\mathrm{var}}\,\mathcal{L}_{\mathrm{var}}(\hat z_{0:T}),
\end{equation}
where $\lambda_{\mathrm{var}}>0$ controls the variance floor penalty. An overview of our method is shown in Fig.~\ref{fig:pipeline}.

\paragraph{Multi-trajectory extension (shared physics).}
For $M$ clips $\{\boldsymbol{x}^{(m)}_k\}_{k=0}^{T^{(m)}}$ encoded by a shared $E_\phi$ and sharing parameters $(\gamma_1,\gamma_0)$,
define latents $\hat z^{(m)}_k:=E_\phi(\boldsymbol{x}^{(m)}_k)$ and residuals $r_k^{(m)}$ as in \eqref{eq:resid-ec}, with derivatives
computed by the centered stencil \eqref{eq:fd} in each clip. Let $N:=\sum_{m=1}^M(T^{(m)}-1)$ be the total number of interior indices. We minimize
\begin{equation}
\begin{split}
\label{eq:loss-total-multi}
&\mathcal{L}_{\mathrm{total}}^{\mathrm{multi}}(\phi,\gamma_1,\gamma_0)
=
\frac{1}{N}\sum_{m=1}^M\sum_{k=1}^{T^{(m)}-1}\!\big(r^{(m)}_k(\gamma_1,\gamma_0)\big)^2\\
&\quad+
\frac{\lambda_{\mathrm{var}}}{M}\sum_{m=1}^M
\!\Big(\!\max\Big\{0,\ \tau-\sqrt{\widehat{\operatorname{Var}}(\hat z^{(m)})+\varepsilon}\Big\}\!\Big)^2,
\end{split}
\end{equation}
where $\widehat{\operatorname{Var}}(\hat z^{(m)})$ is computed over $k=0,\dots,T^{(m)}$ in clip $m$. In the multi-clip setting, we assume Def.~\ref{def:state-consistency} holds for each clip and that the induced reparameterization
$f$ is \emph{shared} across clips on the union of their state ranges.

%% file: pages/main_results.tex
\section{Main theoretical results}
\label{sec:theory}
In this section we present our main theoretical results and adopt the following convention to streamline notation.


\textbf{Ideal continuous and noiseless regime (structural identifiability).}
When studying \emph{structural} properties (Thms.~\ref{thm:single-coverage}--\ref{thm:three}),
we conceptually work with continuous-time, noiseless latents and write
\begin{equation*}
\hat z(t) = f\big(z(t)\big), \qquad t\in[t_0,t_0+L].
\end{equation*}
Here the hat is a placeholder for ``the latent produced by an encoder"; in the ideal regime it coincides with the deterministic map \(f\circ z\).
Under this regime, exact derivatives and ODE equalities are used to derive identifiability results.

\textbf{Practical discrete and noisy regime (finite-sample analysis).}
In finite-sample analysis (Thms.~\ref{thm:finite-sample-ode} and \ref{thm:finite-sample-ode-multi}),
we work with the actually observed, discrete sequence
\begin{equation*}
\hat z_k = f(z_k) + \epsilon_k, \qquad k=0,1,\ldots,T,
\end{equation*}
sampled with step \(\Delta t\) and differentiated by the centered stencil in \eqref{eq:fd}.
Here \(\epsilon_k\) captures random fluctuations from nuisances/sensors and is modeled as zero-mean sub-Gaussian (temporal \(\beta\)-mixing allowed).

When \(\epsilon_k\equiv 0\) and \(\Delta t\!\to\!0\) under level-set slope coverage,
the finite-sample conclusions reduce to the structural ones.

\subsection{Identifiability with continuous-time observations}
\subsubsection{A geometric coverage condition}

\begin{definition}[Level-set slope coverage]
\label{def:coverage}
Let $U\subset \mathcal{R}_z$ be a nonempty open interval. We say that a trajectory
$z(\cdot)$ has \emph{level-set slope coverage} on $U$ if, for every $u\in U$, there
exist at least three times $t_1(u),t_2(u),t_3(u)$ such
that $z(t_i(u))=u$ for $i=1,2,3$, and the corresponding time-slopes are pairwise
distinct:
\[
z'(t_1(u)),\ z'(t_2(u)),\ z'(t_3(u)) \ \text{are all different}.
\]
Equivalently, each level $u\in U$ is attained at least three times with three distinct
instantaneous velocities.
\end{definition}


\subsubsection{Single-trajectory identifiability}

\begin{theorem}[Single-trajectory identifiability]
\label{thm:single-coverage}
Assume state consistency on $I$, i.e., $\hat z(t)=f(z(t))$ with $f\in C^2$ on the
realized range. Suppose the true state $z$ satisfies the
homogeneous second-order linear ODE \eqref{eq:hom-ode} on $I$.
Assume further that the learned latent $\hat z$ satisfies a homogeneous second-order linear ODE
\begin{equation}
\label{eq:latent-hom-recall}
\hat z''(t)+\eta_1 \hat z'(t)+\eta_0 \hat z(t)=0\quad\text{on }I.
\end{equation}
If level-set slope coverage holds on a nonempty open interval $U\subset z(I)$ and $f'\not\equiv 0$ on $U$, then
$f$ is affine on $U$ and the parameters are identified:
$(\eta_1,\eta_0)=(\gamma_1,\gamma_0)$.
\end{theorem}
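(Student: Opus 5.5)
Substitute $\hat z=f\circ z$ into the latent ODE \eqref{eq:latent-hom-recall} and use the true ODE \eqref{eq:hom-ode} to eliminate $z''$. This gives a pointwise identity in $(z,z')$. Then exploit the level-set coverage to turn it into a polynomial identity in the velocity variable.

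By the chain rule,
\[
\hat z' = f'(z)\,z', \qquad \hat z'' = f''(z)\,(z')^2 + f'(z)\,z'' = f''(z)(z')^2 - f'(z)\bigl(\gamma_1 z' + \gamma_0 z\bigr).
\]
Plugging into \eqref{eq:latent-hom-recall}, for all $t\in I$ with $u=z(t)$, $v=z'(t)$ we get
\[
f''(u)\,v^2 + (\eta_1-\gamma_1)f'(u)\,v + \bigl(\eta_0 f(u) - \gamma_0 u f'(u)\bigr) = 0 .
\]
For fixed $u\in U$, this is a polynomial of degree at most two in $v$. Its coefficients depend only on $u$. By Def.~\ref{def:coverage} it vanishes at three distinct values $v=z'(t_i(u))$. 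Hence all three coefficients vanish for every $u\in U$:
\[
f''\equiv 0,\qquad (\eta_1-\gamma_1)f'\equiv 0,\qquad \eta_0 f(u)=\gamma_0 u f'(u)\quad\text{on } U.
\]
This is the key step, and it is where the three-distinct-slopes requirement is used. I expect no real obstacle there, since a nonzero quadratic has at most two roots.

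From $f''\equiv0$ on the open interval $U$, $f(u)=au+b$ is affine. Since $f'\not\equiv 0$, we have $a\neq 0$. The second identity then gives $\eta_1=\gamma_1$. The third gives $\eta_0(au+b)=\gamma_0 a u$ for all $u\in U$. Since $U$ is an open interval (so it contains at least two points), comparing coefficients of $u$ and the constant term yields $\eta_0 a=\gamma_0 a$ and $\eta_0 b=0$. Thus $\eta_0=\gamma_0$, and moreover $b=0$ unless $\gamma_0=0$.

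The main subtlety I would check is that the identity holds at each crossing time. This requires the times $t_i(u)$ to lie in $I$ where both ODEs hold, and $z$ to be $C^2$, which it is as an ODE solution. It also requires $f\in C^2$ on a neighborhood of $U$, which Def.~\ref{def:state-consistency} provides.
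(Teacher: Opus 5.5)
Your proposal is correct and follows the paper's proof essentially step for step: substitute via the chain rule to get a quadratic in the velocity $v$, use the three distinct slopes to force all its coefficients to vanish on $U$, then compare coefficients to get $f$ affine with $a\neq 0$, $(\eta_1,\eta_0)=(\gamma_1,\gamma_0)$, and $\gamma_0 b=0$. Your remark that the crossing times $t_i(u)$ must lie in $I$, where both ODEs hold, is a small point the paper leaves implicit.
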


A detailed proof of Thm.~\ref{thm:single-coverage} appears in App.~\ref{proof:thm single-coverage}. Thm.~\ref{thm:single-coverage} highlights that identifiability
is driven by trajectory geometry: once each level $u\in U$ is attained with three
distinct time-slopes, the only $C^2$ reparameterizations consistent with both ODEs
are affine, forcing the physical parameters to align. This perspective lets us
replace hard-to-verify model constraints with verifiable data coverage conditions.

\subsubsection{Coverage hold: Underdamped dynamics}

We next give simple dynamical conditions under which level-set slope coverage holds.

\begin{theorem}[Underdamped single-trajectory sufficiency]
\label{thm:underdamped-one-clip}
Consider a nontrivial solution $z$ of ODE~\eqref{eq:hom-ode} on $I$ (i.e., $z$ is not
identically zero) and assume the strictly underdamped regime
$\gamma_1>0$ and $\gamma_1^2<4\gamma_0$.
Let $\zeta:=\gamma_1/2$, $\omega:=\sqrt{\gamma_0-\zeta^2}$, and $P:=2\pi/\omega$.
For any time window $[t_a,t_b]\subset I$ of length $L:=t_b-t_a\ge 2P$, there exists a nonempty
open interval $U$ around the equilibrium such that level-set slope coverage holds on $U$ within
$[t_a,t_b]$. Consequently, under the latent assumptions of Thm.~\ref{thm:single-coverage},
the parameters $(\gamma_1,\gamma_0)$ are identifiable from this single trajectory.
\end{theorem}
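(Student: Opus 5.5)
The plan is to use the explicit underdamped solution and show directly that levels near the equilibrium are hit at least three times with three distinct velocities. Identifiability then follows from Thm.~\ref{thm:single-coverage}.

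\textbf{Setup.} Write any nontrivial solution as $z(t)=A e^{-\zeta t}\cos(\omega t+\varphi)$ with $A>0$ (App.~\ref{app:ode_solutions}). Then
\[
z'(t)=-A e^{-\zeta t}\bigl(\zeta\cos(\omega t+\varphi)+\omega\sin(\omega t+\varphi)\bigr).
\]
The zeros of $z$ are the times $s_j$ with $\omega s_j+\varphi=\pi/2+j\pi$. They are spaced $P/2$ apart. At these zeros
\[
z'(s_j)=(-1)^{j+1}A\omega e^{-\zeta s_j},
\]
so the signs alternate and $|z'(s_j)|$ is strictly decreasing in $j$ because $\zeta>0$.

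\textbf{Counting zeros in the window.} A window of length $L\ge 2P$ contains at least four consecutive zeros, since at least $\lfloor 2P/(P/2)\rfloor=4$ of them fall in it. Take three consecutive interior zeros $s_j<s_{j+1}<s_{j+2}$, all lying strictly inside $(t_a,t_b)$. Their slopes have signs $+,-,+$ or $-,+,-$. The two slopes of equal sign, at $s_j$ and $s_{j+2}$, differ in magnitude by the factor $e^{-\zeta P}<1$. Hence the three slopes $z'(s_j),z'(s_{j+1}),z'(s_{j+2})$ are pairwise distinct and all nonzero.

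\textbf{Extending from the level $0$ to nearby levels.} Because $z'(s_i)\neq0$, the implicit function theorem (or simply strict local monotonicity plus continuity of $z'$) gives, for each $i\in\{j,j+1,j+2\}$, a neighbourhood $V_i$ of $s_i$ inside $(t_a,t_b)$ and a $\delta_i>0$ with the following properties.
\begin{itemize}
\item $z$ restricted to $V_i$ is a $C^1$ diffeomorphism onto a set containing $(-\delta_i,\delta_i)$.
\item $z'$ on $V_i$ stays within any prescribed $\eta$ of $z'(s_i)$.
\end{itemize}
Choose $\eta$ less than half the minimum pairwise gap between the three slopes, and take the $V_i$ disjoint. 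Set $U:=(-\delta,\delta)$ with $\delta=\min_i\delta_i$. Then every $u\in U$ has preimages $t_i(u)\in V_i$, and the slopes $z'(t_i(u))$ lie in disjoint $\eta$-balls, so they are pairwise distinct. This is exactly level-set slope coverage (Def.~\ref{def:coverage}) on $U$. Also $U\subset z([t_a,t_b])$, and $U$ is an open interval around the equilibrium.

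\textbf{Conclusion and main obstacle.} Applying Thm.~\ref{thm:single-coverage} with $I=[t_a,t_b]$ gives $(\eta_1,\eta_0)=(\gamma_1,\gamma_0)$, provided $f'\not\equiv0$ on $U$, which is part of the latent assumptions carried over. The main care point is the counting step. With only $L\ge 2P$ and an arbitrary phase, one must guarantee three zeros with open neighbourhoods strictly inside the window. Four zeros are available, so three interior ones can be chosen. Strict damping $\zeta>0$ is essential for separating the two same-sign slopes; with $\zeta=0$ they would coincide.
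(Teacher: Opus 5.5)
Your proof is correct, but it takes a different route from the paper's. The paper argues globally. It fixes the explicit interval $U=(-R(t_b),R(t_b))$ with envelope $R(t)=Ae^{-\zeta t}$. For each $u\in U$ it applies the intermediate value theorem to $\cos s-u/R(\theta^{-1}(s))$ between four consecutive multiples of $\pi$ in phase; this is exactly where $\omega L\ge 4\pi$, i.e.\ $L\ge 2P$, is used. It then writes the slope at any crossing in closed form as $-\zeta u\mp\omega\sqrt{R(t)^2-u^2}$, so the alternating sign of $\sin\theta$ and the strict decrease of $R$ give pairwise distinctness.

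You instead check coverage only at the level $u=0$, where the slopes $(-1)^{j+1}A\omega e^{-\zeta s_j}$ at consecutive zeros are explicit. You then spread it to nearby levels by transversality ($z'(s_i)\neq 0$) and continuity of $z'$. Your route is shorter and more portable. It is really the general principle that one level crossed three times transversally with distinct slopes already yields coverage on an open interval of levels, using the closed form only to locate the zeros. Because it needs only three consecutive zeros strictly inside the window, it works for every phase once $L>3P/2$, which makes Remark~\ref{rem:length-not-necessary-main} transparent.

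What it gives up is quantitative control. Your $\delta$ is non-explicit and possibly tiny, whereas the paper's $U$ contains every level below the smallest envelope value in the window and comes with explicit slope gaps. That matters downstream: the nondegeneracy hypothesis $f'\not\equiv 0$ in Thm.~\ref{thm:single-coverage} is imposed on $U$, and the conditioning bound of Lemma~\ref{lem:conditioning} depends on the fraction $\rho$ of samples that land in $U$.
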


A detailed proof of Thm.~\ref{thm:underdamped-one-clip} appears in App.~\ref{proof:thm of underdamped-one-clip}.

\begin{remark}[Sufficiency vs.\ necessity of the window length]
\label{rem:length-not-necessary-main}
The condition $L\ge 2P$ in Thm.~\ref{thm:underdamped-one-clip} ensures (independent of the unknown initial phase)
that each attainable level near equilibrium is crossed at least three times within the window, so level-set slope coverage
holds on some nonempty open interval $U$.
It is \emph{not necessary}: identification follows as soon as there exists a nonempty open interval $U$ such that every
$u\in U$ is attained at least three times with pairwise distinct derivatives, which may occur even when $P<L<2P$ under
favorable phase alignment.
\end{remark}

\subsubsection{Coverage fail: Non-oscillatory and undamped regimes}

Outside the strictly underdamped regime, the required slope diversity is absent,
hence single-trajectory coverage fails.

\begin{theorem}[Single-trajectory insufficiency in non-oscillatory and undamped regimes]
\label{thm:single-insufficient other cases}
Consider ODE~\eqref{eq:hom-ode}.

\textbf{Critical/overdamped} ($\gamma_1^2\ge 4\gamma_0$).
For any solution $z$ of \eqref{eq:hom-ode}  on $I$, either $z$ is constant (equivalently,
$z'(t)\equiv 0$), in which case level-set slope coverage fails trivially, or else $z(t)$ has at most one critical point. Consequently,
if $z$ is not constant, then for any fixed level $u\in\mathbb{R}$, the equation $z(t)=u$ has at most two solutions in $t$.
Therefore level-set slope coverage fails for a single trajectory.

\textbf{Undamped} ($\gamma_1=0$, $\gamma_0>0$).
For any nontrivial trajectory and any level $u$ with $|u|$ smaller than the amplitude, the set
$\{t:\, z(t)=u\}$ is infinite, but the corresponding derivatives take only two values,
$\pm \omega\sqrt{A^2-u^2}$ (where $\omega=\sqrt{\gamma_0}$ and $A$ is the amplitude). Thus
level-set slope coverage fails for a single trajectory.

Consequently, in these regimes the parameters $(\gamma_1,\gamma_0)$ are not identifiable from
one trajectory via Thm.~\ref{thm:single-coverage} in general.
\end{theorem}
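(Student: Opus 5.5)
The argument treats the two regimes separately, using the closed-form solutions of App.~\ref{app:ode_solutions}. In each case the aim is to show directly that, for every level $u$, the set of slopes $\{z'(t): z(t)=u\}$ has at most two elements. Coverage then fails on every nonempty open $U$, so the hypothesis of Thm.~\ref{thm:single-coverage} cannot be met.

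\textbf{Critical/overdamped.} The first step is to write $z'$ explicitly. If $\gamma_1^2>4\gamma_0$ the characteristic roots $r_1\neq r_2$ are real, so $z=c_1e^{r_1t}+c_2e^{r_2t}$ and
\[
z'(t)=c_1r_1e^{r_1t}+c_2r_2e^{r_2t}.
\]
If $c_1r_1=c_2r_2=0$, then $z'\equiv0$ and $z$ is constant. Otherwise, if one coefficient vanishes, $z'$ has no zero. If neither vanishes, setting $z'=0$ gives
\[
e^{(r_1-r_2)t}=-\frac{c_2r_2}{c_1r_1},
\]
which has at most one solution because $t\mapsto e^{(r_1-r_2)t}$ is strictly monotone. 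In the critical case $z=(a+bt)e^{rt}$, so $z'=(ar+b+brt)e^{rt}$. The factor $ar+b+brt$ is affine in $t$, so it is either identically zero, which forces $z$ constant, or it has at most one zero. Hence a nonconstant $z$ has at most one critical point, and $z$ is strictly monotone on each of at most two subintervals of $I$. Each level $u$ is therefore attained at most twice, so three distinct attainments, and a fortiori three distinct slopes, are impossible. If $z$ is constant, $z'\equiv0$ and the slopes are never distinct.

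\textbf{Undamped.} Here $z=A\cos(\omega t+\varphi)$ with $A>0$ and $\omega=\sqrt{\gamma_0}$. The key step is the conserved energy $(z')^2+\omega^2z^2=\omega^2A^2$, which I would verify directly by differentiation using $z''=-\omega^2z$. It shows that at any time with $z(t)=u$ we have $z'(t)=\pm\omega\sqrt{A^2-u^2}$, so at most two slope values occur. For $|u|<A$, the level set on an unbounded interval is infinite, since $\cos$ is periodic and hits the value $u/A$ in every period. Even so, only two distinct slopes are available, so coverage fails for every $u$. For $|u|\ge A$ the slope set is just $\{0\}$ or the level is not attained at all.

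\textbf{Conclusion and main obstacle.} Since no nonempty open $U$ can satisfy Def.~\ref{def:coverage}, Thm.~\ref{thm:single-coverage} does not apply, which gives the stated non-identifiability via that route. I expect the main obstacle to be careful bookkeeping of the degenerate subcases in the overdamped regime: the cases $r_i=0$ (when $\gamma_0=0$) and vanishing coefficients. All of these collapse to ``$z'\equiv0$ or $z'$ has at most one zero.'' I would handle them uniformly by observing that $z'$ is itself a solution of the same ODE with real characteristic roots, and that such a solution is either identically zero or has at most one zero.
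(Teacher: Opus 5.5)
Your proposal is correct and follows the paper's proof essentially step for step. The structure matches: closed-form solutions, at most one critical point and hence at most two crossings of any level in the real-root regimes, and slopes $\pm\omega\sqrt{A^2-u^2}$ in the undamped case, where your energy identity is just a repackaging of the paper's direct computation $z'=-\omega\tilde A\sin(\omega t+\psi)$. Your bookkeeping is slightly cleaner than the paper's: working with $c_1r_1, c_2r_2$, or noting that $z'$ solves the same ODE, covers the case $r_i=0$, which the paper overlooks when it divides by $r_1C_1$.
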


A detailed proof of Thm.~\ref{thm:single-insufficient other cases} appears in
App.~\ref{proof:thm of single-insufficient other cases}. This lack of identifiability is
structural rather than statistical: the required slope diversity is absent by construction.
In practice, this motivates combining multiple trajectories so that distinct velocities occur
at the same state level.

\subsubsection{Multi-trajectory identifiability}

Replacing ``three crossings on one path'' with ``three slopes across three paths'' yields the same algebraic elimination. 

\begin{theorem}[Three trajectories suffice]
\label{thm:three}
Consider ODE~\eqref{eq:hom-ode} and let $z^{(m)}$ ($m=1,2,3$) be three solutions.
Assume there exists a nonempty open interval $U\subset\mathcal{R}_z$ such that for every $u\in U$
there are times $t_m(u)$ with $z^{(m)}(t_m(u))=u$, and $v_m(u):=z^{(m)\,'}(t_m(u))$ pairwise distinct for m=1,2,3.
Assume a shared encoder $f\in C^2(U)$ and shared latent coefficients $(\eta_1,\eta_0)$ such that,
for each $m=1,2,3$ and all $t$ with $z^{(m)}(t)\in U$,
\begin{equation*}
\begin{split}
&\hat z^{(m)}(t):=f(z^{(m)}(t)),\\
&\hat z^{(m)\,''}(t)+\eta_1\hat z^{(m)\,'}(t)+\eta_0\hat z^{(m)}(t)=0.
\end{split}
\end{equation*}
If $f'\not\equiv 0$, then $f$ is affine on $U$ and $(\eta_1,\eta_0)=(\gamma_1,\gamma_0)$.
\end{theorem}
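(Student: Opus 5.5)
The plan is to reproduce the algebraic elimination behind Thm.~\ref{thm:single-coverage}. The three level-crossings of a single path are replaced by one crossing on each of the three paths $z^{(m)}$.

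\textbf{Pointwise quadratic identity.} Fix $u\in U$ and $m\in\{1,2,3\}$, and evaluate at $t=t_m(u)$. By the chain rule, $\hat z^{(m)\,'}=f'(u)v_m$ and $\hat z^{(m)\,''}=f''(u)v_m^2+f'(u)z^{(m)\,''}$. The true ODE gives $z^{(m)\,''}=-\gamma_1 v_m-\gamma_0 u$. Substituting into the latent ODE yields
\begin{equation*}
f''(u)\,v_m^2+\bigl(\eta_1-\gamma_1\bigr)f'(u)\,v_m+\bigl(\eta_0 f(u)-\gamma_0 u f'(u)\bigr)=0 .
\end{equation*}
For fixed $u$ this says $Q_u(v):=a(u)v^2+b(u)v+c(u)$ vanishes at $v=v_1(u),v_2(u),v_3(u)$. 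Here
\begin{equation*}
a=f'',\qquad b=(\eta_1-\gamma_1)f',\qquad c=\eta_0 f-\gamma_0 u f'
\end{equation*}
are independent of $m$, because $f$ and $(\eta_1,\eta_0)$ are shared. A polynomial of degree at most two with three distinct roots is identically zero. Equivalently, the Vandermonde matrix in $(v_1,v_2,v_3)$ is invertible. Hence, for every $u\in U$,
\begin{equation*}
f''(u)=0,\qquad (\eta_1-\gamma_1)f'(u)=0,\qquad \eta_0 f(u)=\gamma_0 u f'(u).
\end{equation*}

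\textbf{Conclusions.} From $f''\equiv0$ on the interval $U$, $f(u)=\alpha u+\beta$ is affine. Since $f'\not\equiv0$, we have $\alpha\neq0$, so the second relation gives $\eta_1=\gamma_1$. The third relation becomes $\eta_0(\alpha u+\beta)=\gamma_0\alpha u$ for all $u\in U$. Comparing the coefficients of $u$ (valid since $U$ is an open interval, so this affine identity forces equal coefficients) gives $\eta_0\alpha=\gamma_0\alpha$, hence $\eta_0=\gamma_0$. It also gives $\eta_0\beta=0$ as a byproduct.

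\textbf{Main obstacle.} The only delicate point is the chain-rule step: the derivatives of $\hat z^{(m)}$ must be taken at the specific time $t_m(u)$. This requires $z^{(m)}(t)\in U$ on a neighborhood of $t_m(u)$, which holds because $U$ is open and $z^{(m)}$ is continuous, so the latent ODE holds on an open time set around $t_m(u)$. It also requires $f\in C^2$ there, which is assumed. I do not need $t_m(u)$ to depend continuously on $u$, since the argument is entirely pointwise in $u$. Once the pointwise identity is in place, the rest is the same three-root Vandermonde argument as in the single-trajectory case.
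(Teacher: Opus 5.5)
Your proposal is correct and follows the paper's proof essentially step for step. Both use the chain rule to reduce the two ODEs to a quadratic $Q(u,\cdot)$ in the velocity whose coefficients do not depend on $m$. The three pairwise distinct velocities $v_m(u)$ then annihilate it, which gives $f''\equiv 0$, and matching coefficients yields $(\eta_1,\eta_0)=(\gamma_1,\gamma_0)$. Your remark that the latent ODE holds on an open time neighborhood of $t_m(u)$, so the chain rule can be applied there, is a small rigor point the paper leaves implicit.
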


A detailed proof of Thm.~\ref{thm:three} appears in App.~\ref{proof:thm of three}.
Diversity across trajectories substitutes for long observation windows: if each $u\in U$ is realized
with three distinct velocities across the dataset, then Thm.~\ref{thm:three} yields the same
identifiability conclusion as Thm.~\ref{thm:single-coverage}.

\subsection{Finite-sample convergence with discrete-time observations}
\label{subsec: finite sample}

We bound the estimation error incurred when fitting the latent ODE from discrete-time latents using the residuals
\eqref{eq:resid-ec} computed via centered differences \eqref{eq:fd}. 

\subsubsection{Setup and proof strategy}

\paragraph{A noiseless discrete proxy (bridge only).}
To cleanly separate sampling/noise effects from deterministic encoder mismatch, we introduce a noiseless proxy sequence
\[
\tilde z_k := f\big(z(t_k)\big),\qquad t_k=t_0+k\,\Delta t,
\]
and denote by $\tilde z'_k$ and $\tilde z''_k$ the \emph{true} time derivatives of the continuous-time proxy
$\tilde z(t):=f(z(t))$, evaluated at $t=t_k$. We assume $\tilde z(t)$ is sufficiently smooth on $I$ so that the centered
stencil is second-order accurate.

The observed latents satisfy
\[
\hat z_k = \tilde z_k + \epsilon_k,\qquad \mathbb{E}[\epsilon_k\mid z_k]=0.
\]
We use $\tilde z$ only within the proof to separate (i) statistical error from $\{\epsilon_k\}$ and discretization, and
(ii) deterministic mismatch captured by $E_{\mathrm{enc}}$. Outside this bridge we write $\hat z$ throughout to avoid
symbol proliferation.

\begin{definition}[Pseudotrue parameter and encoder approximation error]
\label{def:encoder-bias}
Let $\tilde X_k:=[\,\tilde z'_k,\ \tilde z_k\,]$ and $\tilde y_k:=-\tilde z''_k$.
Define the (empirical, noiseless) \emph{pseudotrue parameter}
\[
\eta^\star \in \arg\min_{\eta\in\mathbb{R}^2}\ \frac{1}{T-1}\sum_{k=1}^{T-1}\big(\tilde y_k-\tilde X_k\eta\big)^2,
\]
and the \emph{encoder-induced approximation error}
\[
E_{\mathrm{enc}}:=\|\eta^\star-\gamma\|_2\,,
\]
where $\gamma = (\gamma_1, \gamma_0)$.
If $f$ is affine on the coverage interval $U$, then our structural results imply $\eta^\star=\gamma$ and $E_{\mathrm{enc}}=0$.
In general, $E_{\mathrm{enc}}$ is deterministic model mismatch (App.~\ref{app:explanation of E_enc}).
\end{definition}

\subsubsection{Single-clip bound}

\begin{theorem}[Finite-sample error]
\label{thm:finite-sample-ode}
Let $\hat z_k=E_\phi(\boldsymbol{x}_k)$ be the latents of a single clip and define discrete derivatives by \eqref{eq:fd}.
Let the ODE residual be $r_k(\eta_1,\eta_0)$ as in \eqref{eq:resid-ec}, and let $\hat\eta=(\hat\eta_1,\hat\eta_0)$ minimize
\eqref{eq:loss-ode} (equivalently, it minimizes $\mathcal{L}_{\mathrm{total}}(\phi,\eta)$ since $\mathcal{L}_{\mathrm{var}}$ is
$\eta$-independent). Assume the chosen window satisfies level-set slope coverage (Def.~\ref{def:coverage}).
Suppose
\begin{equation*}
\begin{split}
\hat z_k &\!=\! f(z_k)\!+\!\epsilon_k,\\
f(u)&\!=\!a u\!+\!b\!+\!h(u),\ a\!\neq \!0,\ b\!=\!O(1),\ h\!\in\! C^2(U),
\end{split}
\end{equation*}
where $\{\epsilon_k\}$ are zero-mean sub-Gaussian (temporal $\beta$-mixing allowed) with noise scale $\sigma_0$.
Assume $\Delta t\le 1$, and let $\sigma=\sigma_{\mathrm{fd}}(\Delta t)$ be the corresponding effective sub-Gaussian noise scale
after applying the finite-difference stencils in \eqref{eq:fd} (see \eqref{eq:sigma-fd}).
Let $X_k=[\,\hat z'_k,\ \hat z_k\,]$ and
\[
\psi_{\min}:=\lambda_{\min}\!\Big(\frac{1}{T-1}\sum_{k=1}^{T-1} X_k^\top X_k\Big)>0.
\]
Let $E_{\mathrm{enc}}$ be as in Def.~\ref{def:encoder-bias}. Then for any $\delta\in(0,1)$, with probability at least $1-\delta$,
\begin{equation}
\label{eq:single bound}
\|\hat\eta-\gamma\|_2\le
\frac{C_1\sigma}{\psi_{\min}}\sqrt{\frac{\log(3/\delta)}{T-1}}
+
\frac{C_2\sigma^2}{\psi_{\min}}
+
\frac{C_3\Delta t^{\,2}}{\psi_{\min}}
+
E_{\mathrm{enc}}.
\end{equation}

Here $\lambda_{\min}(\cdot)$ denotes smallest eigenvalue. The constants depend only on bounded moments of $\{z_k\}$, bounds on the relevant derivatives of
$\tilde z(t)=f(z(t))$ over the window, and the fixed stencil coefficients in \eqref{eq:fd}.
The noise scale $\sigma=\sigma_{\mathrm{fd}}(\Delta t)$ is an effective (post-stencil) sub-Gaussian scale and may increase as $\Delta t$
decreases due to finite-difference amplification.
The term $C_3\Delta t^{\,2}$ is the centered-stencil discretization bias.
Under slope coverage and a latent variance floor, $\psi_{\min}$ is guaranteed to be bounded away from zero; App.~\ref{app: design conditioning} (Lem.~\ref{lem:conditioning}) provides a practical, checkable lower bound.
\end{theorem}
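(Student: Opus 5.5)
The plan is to treat $\hat\eta$ as an ordinary least-squares estimator and split its deviation from $\gamma$ through the pseudotrue parameter $\eta^\star$ of Def.~\ref{def:encoder-bias}:
\[
\|\hat\eta-\gamma\|_2\le \|\hat\eta-\eta^\star\|_2+E_{\mathrm{enc}} .
\]
This isolates the last term of \eqref{eq:single bound}, so the remaining work is to bound $\|\hat\eta-\eta^\star\|_2$.

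\textbf{Least-squares identity.} Since $\mathcal{L}_{\mathrm{ODE}}$ is quadratic in $\eta$, set $y_k:=-\hat z''_k$ and $\hat G:=\frac{1}{T-1}\sum_k X_k^\top X_k$. Because $\psi_{\min}>0$, the minimizer is $\hat\eta=\hat G^{-1}\frac{1}{T-1}\sum_k X_k^\top y_k$. I would write
\[
y_k=X_k\eta^\star+\xi_k,
\]
so that $\hat\eta-\eta^\star=\hat G^{-1}\frac{1}{T-1}\sum_k X_k^\top\xi_k$ and hence $\|\hat\eta-\eta^\star\|_2\le \psi_{\min}^{-1}\bigl\|\frac{1}{T-1}\sum_k X_k^\top \xi_k\bigr\|_2$. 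Every term in \eqref{eq:single bound} other than $E_{\mathrm{enc}}$ then carries the factor $1/\psi_{\min}$, as it should.

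\textbf{Decomposing the residual.} Write $\hat z_k=\tilde z_k+\epsilon_k$. Taylor expansion of the centered stencil \eqref{eq:fd} gives
\[
\hat z'_k=\tilde z'_k+d'_k+\nu'_k,\qquad \hat z''_k=\tilde z''_k+d''_k+\nu''_k,
\]
where:
\begin{itemize}
\item $|d'_k|,|d''_k|\le c\,\Delta t^2$, using bounds on $\tilde z^{(3)}$ and $\tilde z^{(4)}$ over the window;
\item $\nu'_k=(\epsilon_{k+1}-\epsilon_{k-1})/(2\Delta t)$ and $\nu''_k=(\epsilon_{k+1}-2\epsilon_k+\epsilon_{k-1})/\Delta t^2$ are linear in the noise, with sub-Gaussian scale $\sigma=\sigma_{\mathrm{fd}}(\Delta t)$.
\end{itemize}
Set $X_k=\tilde X_k+D_k+N_k$, where $D_k=[d'_k,0]$ and $N_k=[\nu'_k,\epsilon_k]$. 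Then
\[
\xi_k=\tilde\xi_k-(d''_k+\nu''_k)-(D_k+N_k)\eta^\star,
\]
where $\tilde\xi_k=\tilde y_k-\tilde X_k\eta^\star$ satisfies the normal equations $\sum_k\tilde X_k^\top\tilde\xi_k=0$.

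Expanding $\frac{1}{T-1}\sum_k X_k^\top\xi_k$ gives four groups of terms:
\begin{enumerate}
\item Cross terms that are linear in the noise, such as $\tilde X_k^\top\nu''_k$ and $\tilde X_k^\top N_k\eta^\star$. These are zero-mean sums with bounded deterministic weights.
\item Quadratic noise terms such as $N_k^\top\nu''_k$ and $N_k^\top N_k\eta^\star$. Their expectations are $O(\sigma^2)$; this is the bias that gives $C_2\sigma^2$.
\item Deterministic discretization terms, $O(\Delta t^2)$, using $\Delta t\le1$ and bounded moments of $\tilde X_k$.
\item The term $\frac{1}{T-1}\sum_k (D_k+N_k)^\top\tilde\xi_k$, which, since $\tilde\xi_k$ is bounded, falls into groups 1 and 3.
\end{enumerate}

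\textbf{Concentration, the main obstacle.} For group 1 I would apply a Hoeffding/Bernstein-type inequality for $\beta$-mixing sub-Gaussian sequences, using a blocking argument that splits the index set into nearly independent blocks. This gives, with probability $1-\delta/3$ for each of the two coordinates and the quadratic fluctuation, a deviation $C\sigma\sqrt{\log(3/\delta)/(T-1)}$; a union bound then produces the $\log(3/\delta)$. For group 2, the fluctuation around the $O(\sigma^2)$ mean is a Hanson--Wright-type quadratic form. Its deviation is of lower order, or it can be absorbed into the $C_1$ and $C_2$ terms because $\log(3/\delta)/(T-1)\lesssim\sqrt{\log(3/\delta)/(T-1)}$ for $T$ large relative to $\log(1/\delta)$.

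The delicate points in this step are:
\begin{itemize}
\item the stencil noises $\nu'_k,\nu''_k$ are 2-dependent even without mixing, so the mixing-blocking argument must absorb this extra dependence;
\item the random matrix $\hat G$ appears in the inverse, which I handle by working directly with the realized $\psi_{\min}$ as the statement does;
\item the $\beta$-mixing sub-Gaussian tail for sums of products requires sub-exponential control.
\end{itemize}

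Collecting the four groups, dividing by $\psi_{\min}$, and adding $E_{\mathrm{enc}}$ yields \eqref{eq:single bound}. The constants depend only on the derivative bounds of $\tilde z$, the moments of $z_k$, $\|\eta^\star\|$ (which is bounded by $\|\gamma\|+E_{\mathrm{enc}}$), and the stencil coefficients.
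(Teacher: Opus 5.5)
Your proposal is correct and follows essentially the same route as the paper. The shared steps are the triangle inequality through $\eta^\star$, the normal-equation bound $\|\hat\eta-\eta^\star\|_2\le\psi_{\min}^{-1}\bigl\|\frac{1}{T-1}X^\top r(\eta^\star)\bigr\|_2$, and a residual split into noiseless, stochastic and truncation parts that cancels the noiseless piece via the proxy normal equations. Both arguments then finish with $\beta$-mixing concentration for the linear noise terms and an $O(\sigma^2)$ error-in-variables bias for the quadratic ones.

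Your separation of the first-derivative truncation $D_k$ from the noise $N_k$ in the design matrix is slightly cleaner than the paper's bookkeeping. The paper lumps $\delta^{(1)}_k$ into $\Delta X$, so the deterministic $O(\Delta t^2)$ piece $\frac{1}{T-1}\sum_k \delta^{(1)}_k\tilde r_k(\eta^\star)$ ends up inside a term it bounds by concentration alone. You correctly route that piece to the discretization group.
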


The proof appears in App.~\ref{proof:thm of finite-sample-ode}. The bound in \eqref{eq:single bound} tightens as (i) the trajectory revisits the same latent levels with more distinct velocities, quantified by a larger $\psi_{\min}$, and (ii) the sample size $T$ increases. Read the bound in App.~\ref{appx:reading-bounds}.

\textbf{Multi-clip bound.} Pooling $M$ clips that share the same physics $(\gamma_1,\gamma_0)$ increases the effective sample size to
$N=\sum_{m=1}^M (T^{(m)}-1)$ and can improve conditioning $\psi_{\min}$ through cross-trajectory slope diversity.
The full multi-clip finite-sample guarantee is stated in App.~\ref{appx:multi-clip-bound}.




%% file: pages/experiments.tex
\section{Experiments}
\label{sec:experiments}

We instantiate the identifiability theory of Sec.~\ref{sec:theory} using a shared per-frame CNN encoder across all synthetic and real-world experiments.
Unless otherwise stated, we set the variance-floor regularization weight to $\lambda_{\mathrm{var}}=1.0$, use $\tau=1$, and initialize the ODE parameters as $(\gamma_1,\gamma_0)=(1,1)$.
Synthetic experiments and the wheel-mounted-phone pendulum are reported as mean$\pm$std over 5 random seeds.
For the clean pendulum benchmark, we follow the 5-video protocol of \cite{garcia2025learning} and report both the original mean$\pm$std length estimates and string-length RMSE.
Detailed architecture, learning rates, and benchmark-specific settings appear in App.~\ref{app:experiment-settings}.

\subsection{Simulations}
\label{sec:simulations}

All synthetic systems render video frames from a scalar state $z(t)$ governed by ODE~\eqref{eq:hom-ode}
under four damping regimes. Tab.~\ref{tab:gt} lists the ground-truth used throughout.
Example frames are shown in Fig.~\ref{fig:simulation frames}.


\begin{table}[hb]
  \centering
  \caption{Ground-truth parameters used in simulations. Under = underdamped, Un = undamped, Cri = critical, Over = overdamped.}
  \label{tab:gt}
  \vspace{-0.6em}
  \begin{small}
  \begin{sc}
  \setlength{\tabcolsep}{3.5pt}
  \begin{tabular}{lcccc}
    \toprule
      & Under & Un & Cri & Over \\
    \midrule
    $\gamma_0$ & 4.0016 & 4 & 4 & 4 \\
    $\gamma_1$ & 0.08   & 0 & 4 & 5 \\
    \bottomrule
  \end{tabular}
  \end{sc}
  \end{small}
  \vspace{-0.9em}
\end{table}

\begin{figure}[t]
  \centering
  \includegraphics[width=\columnwidth]{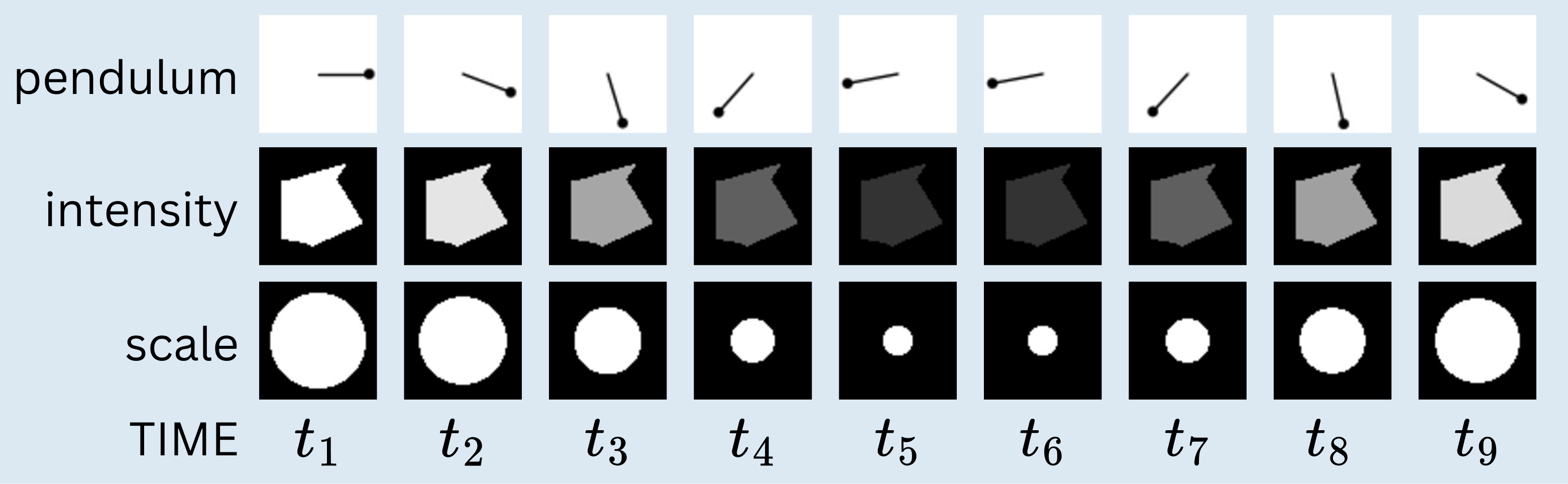}
  \caption{Example frames in the underdamped regime.}
  \label{fig:simulation frames}
\end{figure}

\subsubsection{Motion dynamics: pendulum systems}
\label{sec:pendulum_ident}

\begin{figure*}[t]
  \centering
  \includegraphics[width=0.91\textwidth]
  {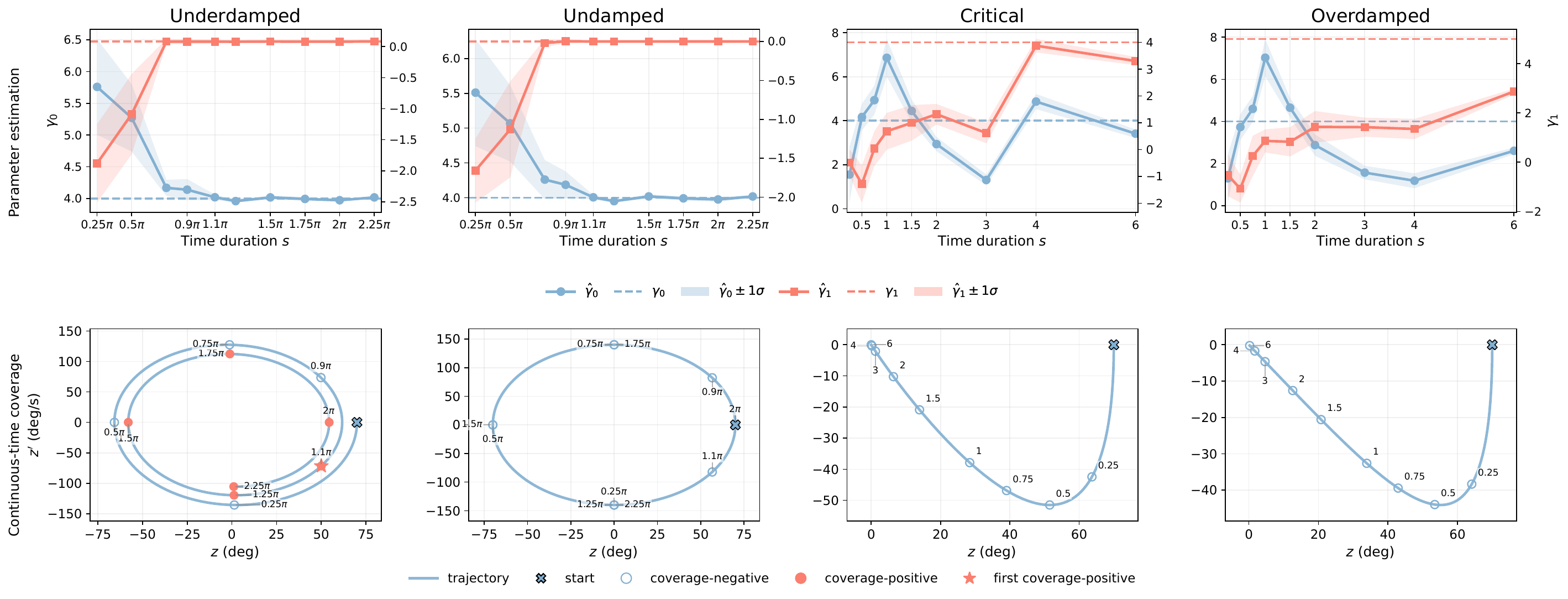}
    \caption{
    Single-trajectory pendulum results across four damping regimes.
    Top: recovered parameters versus tested video length
    (mean $\pm 1\sigma$ over 5 seeds; dashed lines show the ground-truth values).
    The left-side $\gamma_0$ and right-side $\gamma_1$ tick values are shown in all four panels for readability.
    Bottom: phase portraits of the true continuous-time state trajectory in the $(z, z')$ plane for the same initial condition.
    The blue curve traces $(z(t), z'(t))$ from $t=0$.
    The ``x'' marker denotes the start of the trajectory.
    Each labeled endpoint corresponds to one tested video length.
    Hollow blue circles denote coverage-negative endpoints, filled red circles denote coverage-positive endpoints, and the red star marks the first coverage-positive endpoint.
    }
  \label{fig:pendulum_single_all}
\end{figure*}

\paragraph{Single trajectory across regimes.}
Thms.~\ref{thm:single-coverage}, \ref{thm:underdamped-one-clip}, and \ref{thm:single-insufficient other cases} imply that single-trajectory
identification hinges on whether level-set slope coverage is realized within the clip.
Fig.~\ref{fig:pendulum_single_all} overlays parameter recovery (top) with a numerical continuous-time coverage check on the true state trajectory (bottom).
In the underdamped regime, the first coverage-positive window appears at $1.1\pi$ (star), coinciding with the onset of accurate recovery.
This does not contradict Thm.~\ref{thm:underdamped-one-clip}, since the $2P$ condition is sufficient rather than necessary.
In the critical and overdamped regimes, no single window is coverage-positive and estimates remain unreliable.
In the ideal continuous undamped regime, no single window is coverage-positive, consistent with the structural non-identifiability result of Thm.~\ref{thm:single-insufficient other cases}.
This still does not preclude accurate recovery in the top row, because the actual estimator does not fit the continuous-time problem directly; it fits a centered-difference regression to sampled latents.
For an ideal sampled undamped sinusoid, Prop.~\ref{prop:undamped-discrete-pseudotrue} shows that this discrete regression has a unique zero-loss target
\[
(\gamma_1^\Delta,\gamma_0^\Delta)=(0,\gamma_0^\Delta),
\qquad
\gamma_0^\Delta=\gamma_0+O(\Delta t^2),
\]
whenever the corresponding discrete design matrix has full column rank. Therefore, when $\Delta t$ is small and the sampled window spans enough phase to make the discrete regression well conditioned, the estimator can stabilize around a parameter pair that is very close to $(0,\gamma_0)$.
This explains why long undamped clips can yield accurate estimates in the finite-sample pipeline without contradicting the continuous-time structural result.
Additional sweeps over initial conditions are in App.~\ref{app:additional results} (Tabs.~\ref{tab:underdamped-1traj}, \ref{tab:undamped-1traj}, \ref{tab:critical-1traj}, \ref{tab:overdamped-1traj}).

\paragraph{Multi-trajectory in non-oscillatory regimes.}
When single-trajectory coverage fails, Thm.~\ref{thm:three} shows that cross-trajectory diversity can substitute for long windows:
under state consistency, three trajectories that reach a common level with three distinct instantaneous velocities guarantee identifiability.
We test this in the critical and overdamped regimes by jointly fitting a shared $(\gamma_0,\gamma_1)$ across three clips.
Tab.~\ref{tab:pendulum-3traj-coverage} shows accurate recovery when cross-trajectory coverage holds and unstable estimates otherwise;
broader sweeps appear in Tabs.~\ref{tab:critical-3traj} and \ref{tab:overdamped-3traj}.

\begin{table}[t]
  \centering
\caption{Pendulum: estimation with three trajectories in the non-oscillatory regimes. Ground truth is $(\gamma_0,\gamma_1)=(4,4)$ for \textsc{Cri} and $(4,5)$ for \textsc{Over}. \textsc{True}/\textsc{False} indicates whether the selected three trajectories satisfy the cross-trajectory coverage condition.}
  \label{tab:pendulum-3traj-coverage}
  \setlength{\tabcolsep}{3pt}
  \begin{small}
  \begin{sc}
  \begin{tabular}{@{}llcc@{}}
    \toprule
    Regime & Coverage & $\hat{\gamma}_0$ & $\hat{\gamma}_1$ \\
    \midrule
    \multirow{2}{*}{Cri}
      & True   & \textcolor{IdentBlue}{$4.0218 \pm 0.0040$} & \textcolor{IdentBlue}{$4.0352 \pm 0.0017$} \\
      & False & $2.3462 \pm 0.0437$ & $2.7642 \pm 0.0787$ \\
    \midrule
    \multirow{2}{*}{Over}
      & True   & \textcolor{IdentBlue}{$3.9733 \pm 0.0033$} & \textcolor{IdentBlue}{$4.9723 \pm 0.0010$} \\
      & False & $2.6314 \pm 0.0080$ & $4.0396 \pm 0.0095$ \\
    \bottomrule
  \end{tabular}
  \end{sc}
  \end{small}
\end{table}

\subsubsection{Non-motion dynamics: intensity and scale systems}
\label{sec:nonmotion}

To show that our identifiability results are not specific to geometric motion, we also evaluate two \emph{non-motion} video generators
that follow the same latent ODE:
(i) \textbf{intensity}, where the state controls the grayscale intensity of an irregular shape (normalized to $z\in[0.2,1.0]$);
(ii) \textbf{scale}, where the state controls the radius of a filled circle (normalized to $z\in[-10,10]$).
Example frames are shown in Fig.~\ref{fig:simulation frames}.

Tab.~\ref{tab:intensity_scale_result} reports representative parameter estimates across damping regimes for both systems.
Whenever the theorem conditions for identifiability are satisfied, the recovered parameters closely match the ground truth,
demonstrating that the encoder-only pipeline and the accompanying theory extend beyond motion cues.
Additional sweeps over initial conditions are provided in App.~\ref{app:additional results}
(Tabs.~\ref{tab:intensity_more_results} and \ref{tab:scale_more_results}).

\begin{table}[t]
  \centering
\caption{Estimated parameters for the intensity and scale systems across damping regimes (with coverage condition). Ground truth by regime is \textsc{Under} $(\gamma_0,\gamma_1)=(4.0016,0.08)$, \textsc{Un} $(4,0)$, \textsc{Cri} $(4,4)$, and \textsc{Over} $(4,5)$.}
  \label{tab:intensity_scale_result}
  \setlength{\tabcolsep}{4pt}
  \begin{small}
  \begin{sc}
  \renewcommand{\arraystretch}{1.1}
  \begin{tabular}{c l c c}
    \toprule
    & Regime & $\hat{\gamma}_0$ & $\hat{\gamma}_1$ \\
    \midrule
    \multirow{4}{*}{\rotatebox{90}{Intensity}}
      & Under & $4.007 \pm 0.007$   & $0.088 \pm 0.004$ \\
      & Un    & $3.970 \pm 0.001$   & $0.002 \pm 0.004$ \\
      & Cri   & $4.031 \pm 0.022$   & $3.964 \pm 0.012$ \\
      & Over  & $3.962 \pm 0.030$   & $4.968 \pm 0.059$ \\
    \midrule
    \multirow{4}{*}{\rotatebox{90}{Scale}}
      & Under & $4.020 \pm 0.0005$  & $0.074 \pm 0.0003$ \\
      & Un    & $3.983 \pm 0.0003$  & $4\times10^{-5} \pm 0.0001$ \\
      & Cri   & $4.025 \pm 0.0001$  & $4.074 \pm 0.0002$ \\
      & Over  & $4.128 \pm 0.0001$  & $4.918 \pm 0.0001$ \\
    \bottomrule
  \end{tabular}
  \renewcommand{\arraystretch}{1.1}
  \end{sc}
  \end{small}
\end{table}

\subsubsection{Regularization and robustness}
\label{sec:ablation}

\paragraph{Avoiding latent collapse.}
Because the ODE residual objective alone admits the trivial collapsed solution $\hat z_k \equiv 0$, regularization is required in practice.
Tab.~\ref{tab:loss_ablation_pendulum} compares three choices: (i) no regularizer, (ii) a KL-to-$\mathcal{N}(0,1)$ regularizer as in
\citet{garcia2025learning}, and (iii) our variance-floor regularizer.
Without regularization, estimates are consistently poor across regimes.
In the oscillatory regimes (underdamped/undamped), both KL and variance-floor regularizers yield accurate recovery.
In the critical/overdamped regimes, KL regularization fails while the variance-floor regularizer recovers the correct parameters.

\begin{table}[t]
    \centering
\caption{Ablation study of loss functions (pendulum). Var-regu denotes the variance-floor regularized loss proposed in this paper. Ground truth by regime is \textsc{Under} $(\gamma_0,\gamma_1)=(4.0016,0.08)$, \textsc{Un} $(4,0)$, \textsc{Cri} $(4,4)$, and \textsc{Over} $(4,5)$.}
    \label{tab:loss_ablation_pendulum}
    \setlength{\tabcolsep}{2.8pt}
    \begin{small}
    \begin{sc}
    \begin{tabular}{llcc}
        \toprule
        Regime & Loss & $\hat{\gamma}_0$ & $\hat{\gamma}_1$ \\
        \midrule
        \multirow{3}{*}{Under}
            & No-regu         & $-0.0004 \pm 0.0009$ & $0.0425 \pm 0.1194$ \\
            & KL-regu         & \textcolor{IdentBlue}{$4.0037 \pm 0.0009$}
                               & \textcolor{IdentBlue}{$0.0798 \pm 0.0003$} \\
            & Var-regu        & \textcolor{IdentBlue}{$4.0037 \pm 0.0003$}
                               & \textcolor{IdentBlue}{$0.0800 \pm 0.0003$} \\
        \midrule
        \multirow{3}{*}{Un}
            & No-regu         & $0.0582 \pm 0.1627$ & $-0.1144 \pm 0.1764$ \\
            & KL-regu         & \textcolor{IdentBlue}{$4.0031 \pm 0.0006$}
                               & \textcolor{IdentBlue}{$0.0005 \pm 0.0006$} \\
            & Var-regu        & \textcolor{IdentBlue}{$4.0032 \pm 0.0010$}
                               & \textcolor{IdentBlue}{$0.0002 \pm 0.0004$} \\
        \midrule
        \multirow{3}{*}{Cri}
            & No-regu         & $0.0013 \pm 0.0018$ & $2.2204 \pm 0.7833$ \\
            & KL-regu         & $9.2659 \pm 1.0532$ & $6.2857 \pm 0.4848$ \\
            & Var-regu        & \textcolor{IdentBlue}{$4.0218 \pm 0.0040$}
                               & \textcolor{IdentBlue}{$4.0352 \pm 0.0017$} \\
        \midrule
        \multirow{3}{*}{Over}
            & No-regu         & $0.0048 \pm 0.0108$ & $2.5409 \pm 0.4765$ \\
            & KL-regu         & $6.8248 \pm 0.1909$ & $7.3831 \pm 0.0906$ \\
            & Var-regu        & \textcolor{IdentBlue}{$3.9733 \pm 0.0033$}
                               & \textcolor{IdentBlue}{$4.9723 \pm 0.0010$} \\
        \bottomrule
    \end{tabular}
    \end{sc}
    \end{small}
\end{table}

A plausible explanation is that KL-to-$\mathcal{N}(0,1)$ regularization encourages the latent marginal to match a fixed reference
distribution, which can conflict with the strongly non-Gaussian, nonstationary, and often low-variance trajectories encountered in
non-oscillatory regimes. This mismatch can distort the learned representation and degrade the effective regression design used to
estimate $(\gamma_0,\gamma_1)$. In contrast, the variance-floor regularizer enforces only a minimal scale for $\{\hat z_k\}$ while
otherwise leaving the trajectory shape unconstrained, directly addressing the collapse degeneracy and aligning with our finite-sample
conditioning analysis.


\paragraph{Sensitivity to the variance-floor target $\tau$.}
Varying the target latent standard deviation $\tau$ yields nearly identical parameter estimates across damping regimes, indicating low
sensitivity. This is expected: rescaling the latent trajectory does not affect the coefficients of a homogeneous linear ODE, so $\tau$
primarily sets the latent scale rather than identifiability. See App.~\ref{app:additional results} (Tab.~\ref{tab:tau_sensitivity})
for the full results.

\begin{table*}[!t]
  \caption{Real-video pendulum: string-length recovery. For each ground-truth length, we report the original string-length estimate (mean$\pm$std over 5 clips) and the corresponding RMSE (m).}
  \label{tab:real_pendulum_length_rmse}
  \setlength{\tabcolsep}{5pt}
  \centering
  \begin{small}
  \begin{sc}
  \begin{tabular}{lcccccc}
    \toprule
    & \multicolumn{2}{c}{$L^\star=0.45$\,m}
    & \multicolumn{2}{c}{$L^\star=0.90$\,m}
    & \multicolumn{2}{c}{$L^\star=1.50$\,m} \\
    \cmidrule(lr){2-3} \cmidrule(lr){4-5} \cmidrule(lr){6-7}
    Method & Estimate & RMSE & Estimate & RMSE & Estimate & RMSE \\
    \midrule
    PAIG
      & $1.01 \pm 0.03$ & $0.561$
      & $1.01 \pm 0.04$ & $0.116$
      & $1.01 \pm 0.04$ & $0.491$ \\
    NIRPI
      & $0.77 \pm 0.33$ & $0.435$
      & $0.84 \pm 0.53$ & $0.478$
      & $0.63 \pm 0.38$ & $0.934$ \\
    LPFV
      & $0.51 \pm 0.01$ & $0.061$
      & $1.07 \pm 0.20$ & $0.247$
      & $1.30 \pm 0.02$ & $0.201$ \\
    \midrule
    \textsc{Ours}
      & $0.50 \pm 0.001$ & $\mathbf{0.050}$
      & $0.97 \pm 0.003$ & $\mathbf{0.070}$
      & $1.60 \pm 0.015$ & $\mathbf{0.101}$ \\
    \bottomrule
  \end{tabular}
  \end{sc}
  \end{small}
\end{table*}

\subsection{Real-world experiments}
\label{sec:real_world}

We complement the synthetic studies with two real videos to test the robustness of our decoder-free, single-trajectory identification pipeline as visual nuisance increases.

\subsubsection{Clean pendulum benchmark}
\label{sec:real_clean_pendulum}

\paragraph{Setup.}
We follow the real-video pendulum benchmark and protocol of \cite{garcia2025learning}. Baseline results (PAIG \cite{jaques2020physics}, NIRPI \cite{hofherr2023neural}, LPFV \cite{garcia2025learning}) are taken from \cite{garcia2025learning} Tab.~2(a); only \textsc{Ours} is computed by our implementation.
The dataset contains three string lengths ($L^\star=0.45/0.90/1.50$\,m), filmed with a static camera and clean background (App.~Fig.~\ref{fig:appendix_frames_pendulum_lpfv}). For each length, five clips with different initial conditions are provided.
To present the results in a more standard error-based form, we additionally report root mean square error (RMSE) in string length. Since the published baselines are available as mean$\pm$std string-length estimates over the five benchmark clips, we compute
\[
\mathrm{RMSE}
=
\sqrt{\frac{1}{n}\sum_{i=1}^{n}(\hat L_i-L^\star)^2}
=
\sqrt{(\bar L-L^\star)^2+\frac{n-1}{n}s^2},
\]
where $n=5$, $\bar L$ and $s$ are the reported mean and sample standard deviation, respectively.

\paragraph{Results.}
Tab.~\ref{tab:real_pendulum_length_rmse} reports string-length RMSE. \textsc{Ours} achieves the lowest RMSE for all three lengths. PAIG appears competitive at $L^\star=0.90$\,m, but Tab.~\ref{tab:real_pendulum_length_rmse} shows that it predicts nearly the same length ($\approx 1.01$\,m) for all three pendulums, so its middle-length score is largely incidental rather than evidence of reliable recovery. Overall, these results are consistent with our underdamped theory: sufficiently exciting single trajectories can support accurate recovery when the learned latent follows the correct dynamics. Compared with LPFV, the lower RMSE of our method is consistent with the use of a higher-accuracy derivative discretization in the discrete ODE residual (App.~\ref{app:lpfv_comparison}).

\subsubsection{Wheel-mounted phone pendulum}
\label{sec:real_wheel_phone}

\paragraph{Setup.}
We next consider a more challenging real capture where a smartphone mounted near a bicycle wheel rim swings in a vertical plane (App. Fig.~\ref{fig:appendix_frames_vertical}), from \citet{monteiro2014exploring}. Relative to the clean benchmark, this video includes substantial nuisance variation (clutter, partial occlusions, and appearance changes). We therefore run our method on a cropped/resized version (second row of App. Fig.~\ref{fig:appendix_frames_vertical}). This preprocessing should be read as a step that moves the clip closer to the modeling regime of Def.~\ref{def:state-consistency} (see App.~\ref{app:state_consistency_stress} for a synthetic probe of practical tolerance), not as evidence that the theory itself handles arbitrary nuisance.
Assuming small oscillations, we estimate the ODE parameters in Eq.~\eqref{eq:hom-ode}. We compute a physics-based ground truth for $\gamma_0 (=8.26)$ from geometry and inertia (App. Sec.~\ref{app:gt_vertical}); $\gamma_1$ is not uniquely determined from geometry and is omitted.

\paragraph{Results.}
Our estimated $\hat{\gamma}_0 = 8.998 \pm 0.009 $, which is close in magnitude to the physics-based value and stable across $5$ trials. The residual gap is expected given approximate small-angle/effective-damping assumptions and pixel-based finite-difference derivatives under heavy nuisance.


%% file: pages/conclusion.tex
\section{Conclusion}
We studied decoder-free physical parameter identification from video in an encoder-only pipeline where a scalar latent is constrained to satisfy a homogeneous second-order linear ODE.
Under a state-consistent observation model, we introduced the geometric \emph{level-set slope coverage} condition and showed that it forces the latent reparameterization to be locally affine on an open set, enabling exact recovery of continuous-time parameters from pixels.
We further derived finite-sample guarantees for discrete-time training that expose conditioning-driven error amplification, motivating a variance-floor regularizer that prevents latent collapse and improves numerical stability.

A limitation is that our analysis focuses on one-dimensional latents and homogeneous second-order LTI dynamics.
While this setting enables sharp identifiability certificates, extending the theory is an important direction.
Promising next steps include (i) extending coverage-based certificates to \emph{multi-dimensional} latents and coupled/multi-mode linear systems, and (ii) handling \emph{forced} dynamics (unknown inputs) and mild departures from LTI (time variation or weak nonlinearity).
On the deployment side, our results suggest capture- and learning-time protocols that promote coverage, enabling reliable video-based estimation of frequency and damping in applications such as construction monitoring and infrastructure vibration assessment.

%% file: appendix/solution_of_target_ODE.tex
\section{Solutions of the Second-Order Homogeneous LTI ODE}
\label{app:ode_solutions}

This appendix summarizes closed-form solutions of the target ODE used throughout the paper,
\begin{equation}
\label{eq:app_ode}
z''(t) + \gamma_1 z'(t) + \gamma_0 z(t) = 0,
\end{equation}
under the standing assumption $\gamma_0>0$ (oscillator stiffness) and $\gamma_1\ge 0$ (damping).
We use the same notation as in the main text: $(\gamma_0,\gamma_1)$ denote the true continuous-time parameters, and $z(t)$ is the scalar latent/state. We also use the discriminant
\begin{equation*}
\label{eq:app_disc}
D \;:=\; \gamma_1^2 - 4\gamma_0,
\end{equation*}
and define the damping rate $\zeta := \gamma_1/2$.

\paragraph{Characteristic equation.}
Eq.~\eqref{eq:app_ode} has characteristic polynomial
\begin{equation*}
r^2 + \gamma_1 r + \gamma_0 = 0,
\end{equation*}
with roots
\begin{equation*}
\label{eq:app_roots}
r_{1,2} = \frac{-\gamma_1 \pm \sqrt{D}}{2}.
\end{equation*}
The qualitative behavior of solutions is determined by the sign of $D$, corresponding to the standard damping regimes below.

\subsection{Underdamped regime ($D<0$)}
\label{app:underdamped}

When $D<0$ (equivalently $\gamma_1^2<4\gamma_0$), the roots are complex conjugates
\[
r_{1,2} = -\zeta \pm i\omega,
\qquad
\omega := \sqrt{\gamma_0 - \zeta^2}.
\]
The general solution is
\begin{equation*}
\label{eq:app_under_general}
z(t) = e^{-\zeta t}\Big(A\cos(\omega t) + B\sin(\omega t)\Big),
\end{equation*}
for constants $(A,B)\in\mathbb{R}^2$ determined by initial conditions.
Given $z(0)=z_0$ and $z'(0)=v_0$, we obtain
\begin{equation*}
\label{eq:app_under_AB}
A = z_0,
\qquad
B = \frac{v_0 + \zeta z_0}{\omega}.
\end{equation*}

\paragraph{Period and amplitude decay.}
The oscillation period is
\begin{equation*}
\label{eq:app_period}
P = \frac{2\pi}{\omega},
\end{equation*}
and the envelope decays as $e^{-\zeta t}$.

\subsection{Undamped regime ($\gamma_1=0$)}
\label{app:undamped}

The undamped case is the special subcase of \S\ref{app:underdamped} with $\gamma_1=0$, hence $\zeta=0$ and $\omega=\sqrt{\gamma_0}$.
The general solution reduces to a pure sinusoid:
\begin{equation*}
\label{eq:app_undamped_general}
z(t) = A\cos(\omega t) + B\sin(\omega t),
\qquad
\omega=\sqrt{\gamma_0}.
\end{equation*}
Under initial conditions $z(0)=z_0$ and $z'(0)=v_0$, we have
\begin{equation*}
\label{eq:app_undamped_AB}
A = z_0,
\qquad
B = \frac{v_0}{\omega}.
\end{equation*}
The period is $P=2\pi/\omega$, and the amplitude is constant over time.

\subsection{Critically damped regime ($D=0$)}
\label{app:critical}

When $D=0$ (equivalently $\gamma_1^2=4\gamma_0$), the characteristic polynomial has a repeated real root $r=-\zeta=-\gamma_1/2$.
The general solution is
\begin{equation*}
\label{eq:app_critical_general}
z(t) = (A + Bt)e^{-\zeta t},
\qquad \zeta=\gamma_1/2.
\end{equation*}
Given $z(0)=z_0$ and $z'(0)=v_0$, we obtain
\begin{equation*}
\label{eq:app_critical_AB}
A=z_0,
\qquad
B=v_0+\zeta z_0.
\end{equation*}
Solutions do not oscillate and decay to zero at the fastest rate among all non-oscillatory regimes.

\subsection{Overdamped regime ($D>0$)}
\label{app:overdamped}

When $D>0$ (equivalently $\gamma_1^2>4\gamma_0$), the roots $r_1>r_2$ in \eqref{eq:app_roots} are distinct and real, both negative for $\gamma_0>0,\gamma_1>0$.
The general solution is a sum of two exponentials:
\begin{equation*}
\label{eq:app_over_general}
z(t) = C_1 e^{r_1 t} + C_2 e^{r_2 t}.
\end{equation*}
Given $z(0)=z_0$ and $z'(0)=v_0$, the coefficients are
\begin{equation*}
\label{eq:app_over_C}
C_1 = \frac{v_0 - r_2 z_0}{r_1-r_2},
\qquad
C_2 = \frac{r_1 z_0 - v_0}{r_1-r_2}.
\end{equation*}
As in the critical case, solutions are non-oscillatory and converge to zero.

\subsection{Connection to level-set slope coverage intuition}
\label{app:coverage_intuition}

Our identifiability theory in Sec.~\ref{sec:theory} is phrased in terms of \emph{level-set slope coverage}: for a given state value $u$, consider the set of times at which $z(t)=u$, and the corresponding instantaneous slopes $\{z'(t): z(t)=u\}$. Coverage requires that on a nontrivial interval of levels $u$, the same level is attained multiple times with sufficiently diverse slopes.

The closed-form solutions above make the regime-dependent behavior transparent.

\paragraph{Underdamped ($D<0$).}
From \eqref{eq:app_under_general}, $z(t)$ oscillates while its envelope decays as $e^{-\zeta t}$. For levels $u$ away from the extrema, the trajectory typically crosses $u$ repeatedly over time. Moreover, because crossings occur at different phases and amplitudes, the associated slopes $z'(t)$ at $z(t)=u$ can vary across crossings. This repeated-crossing behavior is what enables level-set slope coverage on an interval and underpins single-trajectory identifiability in the underdamped regime under sufficient time span (e.g., the two-period sufficient rule in Sec.~\ref{sec:theory}).

\paragraph{Undamped ($\gamma_1=0$).}
In the ideal undamped case \eqref{eq:app_undamped_general}, the motion is purely periodic with constant amplitude. For any level $u$ strictly between the extrema, each period yields two crossings with slopes of opposite sign, and in the continuous noiseless setting this induces at most two possible slopes per level (up to periodic repetition). This limited slope diversity is precisely why the level-set slope coverage certificate does not hold in general for a single trajectory in the undamped regime. In practice, discrete sampling, finite-difference estimation, and modeling/optimization biases may still lead to accurate recovery in simulations, but this does not contradict the structural statement that the coverage-based sufficient condition fails in the idealized setting.

\paragraph{Critical and overdamped ($D\ge 0$).}
For $D=0$ (critical) and $D>0$ (overdamped), the solutions \eqref{eq:app_critical_general}--\eqref{eq:app_over_general} are non-oscillatory and decay toward zero without repeated cycles. As a result, for a typical initialization the trajectory reaches any given level $u$ at most a small number of times (often once, and at most twice), which sharply limits the number of distinct slopes observable at that level within a single clip. This explains why single-trajectory level-set slope coverage is generically unavailable in these regimes, motivating the multi-trajectory diversity condition in Sec.~\ref{sec:theory}: by combining multiple clips with different initial conditions, one can realize the same level $u$ with multiple distinct velocities, restoring the slope diversity required for identification.

%% file: appendix/multi-clip_bound.tex
\section{Additional finite-sample results}
\label{appx:finite-sample-extra}

\subsection{Further explanation of $E_{\mathrm{enc}}$}
\label{app:explanation of E_enc}
\begin{remark}[Affine-plus-remainder decomposition of $f$]
\label{rem:affine-plus-h}
On the coverage interval $U$, we decompose
\[
f(u)=a\,u+b+h(u),\qquad h\in C^2(U).
\]
Here $a u+b$ captures the dominant affine alignment between the latent and the ODE state, while $h$ is the deterministic
non-affine remainder (not noise). This notation avoids conflict with the residual $r_k(\eta)$.
One may take $(a,b)$ as the best affine approximation of $f$ over the current window and define $h(u):=f(u)-(a u+b)$.
Under the ideal structural regime, our identifiability results imply $h\equiv0$ on $U$.
In finite-sample practice, $h$ may be small but nonzero; its effect is summarized by $E_{\mathrm{enc}}$
(Def.~\ref{def:encoder-bias}).
\end{remark}

\subsection{Multi-clip bound (shared physics)}
\label{appx:multi-clip-bound}

Stacking $M$ clips encoded by a shared $E_\phi$ with common parameters $(\gamma_1,\gamma_0)$ increases the effective sample size to
$N=\sum_m(T^{(m)}-1)$ and typically improves conditioning. Geometrically, multi-clip coverage (e.g., three trajectories reaching the
same level with distinct instantaneous velocities) recovers the same continuous-time identifiability conclusion (Thm.~\ref{thm:three})
and empirically increases $\psi_{\min}$.

\paragraph{A noiseless discrete proxy for multiple clips (bridge only).}
For clip $m$ with step $\Delta t^{(m)}$ and times $t_k^{(m)}=t_0^{(m)}+k\,\Delta t^{(m)}$, define
\[
\tilde z_k^{(m)} := f\big(z^{(m)}(t_k^{(m)})\big),
\]
and let $\tilde z_k'^{(m)}$ and $\tilde z_k''^{(m)}$ denote the corresponding \emph{true} time derivatives of
$\tilde z^{(m)}(t)=f(z^{(m)}(t))$ evaluated at $t=t_k^{(m)}$. Observed latents decompose as
\[
\hat z_k^{(m)}=\tilde z_k^{(m)}+\epsilon_k^{(m)},\qquad
\mathbb{E}[\epsilon_k^{(m)}\mid z_k^{(m)}]=0,
\]
where $\{\epsilon_k^{(m)}\}$ are zero-mean sub-Gaussian (temporal $\beta$-mixing allowed). This bridge again separates statistical
effects (noise/discretization) from deterministic encoder mismatch summarized by $E_{\mathrm{enc}}$.

\begin{definition}[Multi-clip pseudotrue parameter and encoder approximation error]
\label{def:encoder-bias-multi}
Let $\tilde X_k^{(m)}:=[\,\tilde z_k'^{(m)},\ \tilde z_k^{(m)}\,]$ and $\tilde y_k^{(m)}:=-\tilde z_k''^{(m)}$.
With $N:=\sum_{m=1}^M (T^{(m)}-1)$, define the (empirical, noiseless, multi-clip) \emph{pseudotrue parameter}
\[
\eta^\star \in \arg\min_{\eta\in\mathbb{R}^2}\ \frac{1}{N}\sum_{m=1}^M\sum_{k=1}^{T^{(m)}-1}
\big(\tilde y_k^{(m)}-\tilde X_k^{(m)}\eta\big)^2,
\]
and the encoder-induced approximation error $E_{\mathrm{enc}}:=\|\eta^\star-\gamma\|_2$.
If $f$ is affine on the joint coverage interval, then $\eta^\star=\gamma$ and $E_{\mathrm{enc}}=0$.
In general, $E_{\mathrm{enc}}$ is deterministic model mismatch and is independent of finite-sample inversion; hence it is not scaled
by $1/\psi_{\min}$.
\end{definition}

\begin{theorem}[Multi-trajectory finite-sample error (shared physics)]
\label{thm:finite-sample-ode-multi}
Let $\{\boldsymbol{x}^{(m)}_k\}_{k=0}^{T^{(m)}}$ be $M$ clips encoded by the same $E_\phi$ to latents $\hat z^{(m)}_k$, sharing
parameters $(\gamma_1,\gamma_0)$. For each clip form discrete derivatives and residuals by \eqref{eq:fd} and \eqref{eq:resid-ec},
and set $N:=\sum_{m=1}^M(T^{(m)}-1)$. Let
\[
\hat\eta=\arg\min_{\eta}\ \frac{1}{N}\sum_{m=1}^M\sum_{k=1}^{T^{(m)}-1}\big(r^{(m)}_k(\eta)\big)^2.
\]
Define $X_k^{(m)}=[\,\hat z_k'^{(m)},\ \hat z_k^{(m)}\,]$ and
\[
\psi_{\min}:=\lambda_{\min}\!\Big(\frac{1}{N}\sum_{m=1}^M\sum_{k=1}^{T^{(m)}-1}(X_k^{(m)})^\top X_k^{(m)}\Big)>0.
\]
Assume $\hat z_k^{(m)}=f(z_k^{(m)})+\epsilon_k^{(m)}$, where $\{\epsilon_k^{(m)}\}$ are zero-mean sub-Gaussian
(temporal $\beta$-mixing allowed) with base noise scale $\sigma_0$, and assume $\Delta t^{(m)}\le 1$ for all $m$.
Let $\Delta t_{\min}:=\min_m \Delta t^{(m)}$ and define $\sigma:=\sigma_{\mathrm{fd}}(\Delta t_{\min})$
as the corresponding effective sub-Gaussian noise scale after applying the finite-difference stencils in \eqref{eq:fd}
(see Eq.~\eqref{eq:fd-noise-multi}).

Let $E_{\mathrm{enc}}$ be as in Def.~\ref{def:encoder-bias-multi}. Then for any $\delta\in(0,1)$, with
probability at least $1-\delta$,
\begin{equation}
\label{eq:multiple bound}
\|\hat\eta-\gamma\|_2 \le
\frac{C_1\sigma}{\psi_{\min}}\sqrt{\frac{\log(3/\delta)}{N}}
\!+\!
\frac{C_2\sigma^2}{\psi_{\min}}
\!+\!
\frac{C_3\Delta t_{\max}^{\,2}}{\psi_{\min}}
\!+\!
E_{\mathrm{enc}},
\end{equation}
where $\Delta t_{\max}=\max_m\Delta t^{(m)}$.
\end{theorem}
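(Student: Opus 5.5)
The proof reduces the multi-clip statement to the single-clip argument of Thm.~\ref{thm:finite-sample-ode} by working with the pooled least-squares problem. Stack all interior indices across clips into a design $X\in\mathbb{R}^{N\times 2}$ with rows $X_k^{(m)}$ and a response $y$ with entries $-\hat z_k''^{(m)}$. Then $\hat\eta=(X^\top X)^{-1}X^\top y$, which is well defined because $\psi_{\min}>0$. Write $\hat\eta-\gamma=(\hat\eta-\eta^\star)+(\eta^\star-\gamma)$. The second piece is exactly $E_{\mathrm{enc}}$ from Def.~\ref{def:encoder-bias-multi} and is left unscaled. All the work is in bounding $\|\hat\eta-\eta^\star\|_2$.

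\textbf{Perturbation decomposition.} For each clip, split the centered differences of the observed latents into three parts: the true derivatives of the proxy $\tilde z^{(m)}$, the Taylor remainder of the stencil, and the stencil applied to the noise. Concretely,
\[
\hat z_k'^{(m)}=\tilde z_k'^{(m)}+\rho_k'^{(m)}+\xi_k'^{(m)},\qquad \hat z_k''^{(m)}=\tilde z_k''^{(m)}+\rho_k''^{(m)}+\xi_k''^{(m)}.
\]
Because $\tilde z^{(m)}$ has bounded derivatives up to fourth order (assumed in the bridge), the remainders satisfy $|\rho|\le C(\Delta t^{(m)})^2\le C\Delta t_{\max}^2$. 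The noise terms $\xi$ are linear combinations of at most three $\epsilon$'s with coefficients of order $(\Delta t^{(m)})^{-1}$ or $(\Delta t^{(m)})^{-2}$. They are therefore sub-Gaussian with scale at most $\sigma_{\mathrm{fd}}(\Delta t^{(m)})\le\sigma_{\mathrm{fd}}(\Delta t_{\min})=\sigma$, using that $\sigma_{\mathrm{fd}}$ is decreasing in $\Delta t$. Writing $X=\tilde X+R+\Xi$ and $y=\tilde y+r+\xi$, and noting that $\eta^\star$ satisfies the normal equations $\tilde X^\top(\tilde y-\tilde X\eta^\star)=0$, we get
\[
\hat\eta-\eta^\star=(X^\top X)^{-1}X^\top\big(y-X\eta^\star\big)
=(X^\top X)^{-1}X^\top\big(e^\star+(r+\xi)-(R+\Xi)\eta^\star\big),
\]
with $e^\star=\tilde y-\tilde X\eta^\star$. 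Dividing by $N$, the factor $\|(X^\top X/N)^{-1}\|\le 1/\psi_{\min}$ accounts for the common $1/\psi_{\min}$ prefactor.

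\textbf{Bounding the pieces.} Split $N^{-1}X^\top(\cdot)$ into three groups.

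\begin{enumerate}
\item \emph{Linear noise terms.} These are $N^{-1}\tilde X^\top(\xi-\Xi\eta^\star)$. Apply a Bernstein/Hoeffding inequality for $\beta$-mixing sub-Gaussian sequences to each of the two coordinates, with a union bound. This gives $C_1\sigma\sqrt{\log(3/\delta)/N}$, with the design bounded by the moment/derivative bounds on the proxy. Pooling across clips is harmless: the clips' noises are independent of one another, or at worst mixing within each clip, so the concentration uses $N$ total terms.
\item \emph{Quadratic noise terms.} These are $N^{-1}\Xi^\top(\xi-\Xi\eta^\star)$. Their expectation is $O(\sigma^2)$, because the stencil noises share $\epsilon$ terms and are correlated. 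Their fluctuation is lower order, giving $C_2\sigma^2$ on the same event. This is the third slot of the $3/\delta$ budget.
\item \emph{Discretization terms.} These involve $R$ and $r$ together with their cross terms with $\tilde X$ and $\Xi$. They are bounded deterministically by $C_3\Delta t_{\max}^2$, with cross terms against noise absorbed into $C_1$ or $C_2$ since $\Delta t\le 1$.
\end{enumerate}

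The term $N^{-1}X^\top e^\star$ needs care. Its component $\tilde X^\top e^\star$ vanishes by the normal equations. Its perturbation components $(R+\Xi)^\top e^\star$ fall into the groups above, since $e^\star$ is bounded. Summing the bounds and adding $E_{\mathrm{enc}}$ gives \eqref{eq:multiple bound}.

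\textbf{Main obstacle.} The hardest step is the concentration in the first group under heterogeneous step sizes and $\beta$-mixing. The clips have different $\Delta t^{(m)}$, so the stencil noises are non-identically distributed and serially dependent; for example, $\xi_k'$ and $\xi_{k+2}'$ share an $\epsilon$. I would handle this with a blocking argument: split the indices into three interleaved subsequences so that each subsequence inherits the mixing of $\epsilon$. I would then use a mixing Bernstein inequality for each subsequence and take the worst-case scale $\sigma$ over clips. The constants then depend only on the mixing coefficients and on derivative and moment bounds uniform in $m$, as in the single-clip theorem.
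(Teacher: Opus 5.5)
Your proposal is correct and follows essentially the same route as the paper. The shared steps are the triangle inequality isolating $E_{\mathrm{enc}}$, the stacked normal equations giving the $1/\psi_{\min}$ master inequality, and the noiseless normal equations for $\eta^\star$ to eliminate $\tilde X^\top e^\star$; after that, both bound the linear-noise terms at rate $\sigma\sqrt{\log(3/\delta)/N}$, the error-in-variables terms as an $O(\sigma^2)$ mean plus fluctuation, and the $\Delta t_{\max}^2$ truncation terms, combined by a three-way union bound. Grouping by order in the noise rather than by the paper's (A)/(B)/(C) split of $\Delta X=X-\tilde X$ is only bookkeeping, and sending the deterministic piece $R^\top e^\star$ into the $C_3\Delta t_{\max}^2$ term is slightly cleaner than the paper's treatment of $\Delta X$ in term (B) as a pure mean-zero noise transform.
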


The proof appears in Appx.~\ref{proof:thm of finite-sample-ode-multi}. Aggregating clips increases $N$ and can improve
$\psi_{\min}$, yielding a transparent design-to-accuracy prescription: collect clips that jointly realize distinct slopes at
shared state levels and maintain nondegenerate variance within each window. Note that the effective noise scale
$\sigma=\sigma_{\mathrm{fd}}(\Delta t_{\min})$ depends on the sampling step via the finite-difference stencils, so $\Delta t$
controls a bias--variance tradeoff rather than monotonically tightening the bound.

\subsection{Interpreting the bounds}
\label{appx:reading-bounds}

\noindent\textit{Reading the bounds.}\;
Both \eqref{eq:single bound} and \eqref{eq:multiple bound} have the same structure: each is the sum of four contributions.
The first three are amplified by $1/\psi_{\min}$, while $E_{\mathrm{enc}}$ is not.

\begin{itemize}
\item \textbf{Noise term.} Decays as $1/\sqrt{\text{samples}}$ (use $T{-}1$ for one clip, $N$ for many). The confidence level
$1-\delta$ appears only through $\log(3/\delta)$. Importantly, the noise prefactor is the \emph{effective post-stencil scale}
$\sigma=\sigma_{\mathrm{fd}}(\Delta t)$ (or $\sigma_{\mathrm{fd}}(\Delta t_{\min})$ in the multi-clip case), which may increase
as $\Delta t$ decreases due to finite-difference amplification.

\item \textbf{Error-in-variables term.} The $C_2\sigma^2/\psi_{\min}$ term is a bias floor induced by noisy finite-difference
features $X_k$; it vanishes only as the underlying measurement noise decreases (equivalently, as $\sigma_{\mathrm{fd}}(\Delta t)\to 0$).

\item \textbf{Discretization term.} Scales as $\Delta t^2$ for one clip, or $\Delta t_{\max}^2$ across multiple clips.
Smaller time steps reduce this \emph{truncation bias}, but they can simultaneously increase $\sigma_{\mathrm{fd}}(\Delta t)$ and
therefore enlarge the stochastic terms. Overall, $\Delta t$ controls a bias--variance tradeoff.

\item \textbf{Encoder approximation ($E_{\mathrm{enc}}$).} Captures deterministic mismatch from the encoder; it is not divided by
$\psi_{\min}$ and improves as the learned representation better aligns with the assumed ODE model.
\end{itemize}

%% file: appendix/proofs.tex
\section{Proofs}

\subsection{Proof of Theorem \ref{thm:single-coverage}}\label{proof:thm single-coverage}

\begin{proof}
\setlength{\parindent}{0pt}
\textbf{Step 1: Chain rule expansion and reduction to a polynomial in $z'$.}
Since $\hat z=f\!\circ z$ and $f\in C^2$, along the trajectory we have
\begin{equation*}
\begin{split}
\hat z'(t)&=f'(z(t))\,z'(t),\\
\hat z''(t)&=f''(z(t))\,(z'(t))^2 + f'(z(t))\,z''(t). 
\end{split}    
\end{equation*}

Substitute these into \eqref{eq:latent-hom-recall} and use the true dynamics \eqref{eq:hom-ode} to eliminate $z''(t)$:
\begin{align*}
0
&= \hat z'' + \eta_1 \hat z' + \eta_0 \hat z \\
&= f''(z)\,(z')^2 + f'(z)\,z'' + \eta_1 f'(z)\,z' + \eta_0 f(z) \\
&= f''(z)\,(z')^2 + f'(z)\,(-\gamma_1 z' - \gamma_0 z) + \eta_1 f'(z)\,z' + \eta_0 f(z) \\
&= f''(z)\,(z')^2 + (\eta_1-\gamma_1)\,f'(z)\,z' + \big(\eta_0 f(z) - \gamma_0 z f'(z)\big).
\end{align*}
Define, for any scalar state value $u$ and slope $v$,
\begin{equation}
\label{eq:Q-def}
Q(u,v) \;:=\; f''(u)\,v^2 + (\eta_1-\gamma_1)\,f'(u)\,v 
+ \big(\eta_0 f(u)-\gamma_0 u f'(u)\big).
\end{equation}
Then the previous identity says that for all $t\in I$,
\begin{equation}
\label{eq:Q-trajectory-zero}
Q\big(z(t),\,z'(t)\big)=0.
\end{equation}

\medskip
\textbf{Step 2: Use level-set slope coverage to force a quadratic to vanish identically.}
Fix any $u\in U$. By level-set slope coverage, there exist at least three times $t_i(u)$ with
\begin{equation*}
z\big(t_i(u)\big)=u
\quad\text{and}\quad
z'\big(t_i(u)\big)=:v_i(u)\ \text{ pairwise distinct}.
\end{equation*}
Evaluating \eqref{eq:Q-trajectory-zero} at these times gives
\begin{equation*}
Q\big(u,\,v_i(u)\big)=0\quad(i=1,2,3).
\end{equation*}
But for fixed $u$, $Q(u,\cdot)$ is a quadratic polynomial in $v$ (see \eqref{eq:Q-def}). A nonzero quadratic over $\mathbb{R}$ can have at most two distinct real roots; therefore the fact that $Q(u,\cdot)$ vanishes at three pairwise distinct velocities forces
\begin{equation*}
Q(u,\cdot)\equiv 0 \quad\text{as a polynomial in $v$}.
\end{equation*}
Hence, for this $u$, all three coefficients must be zero:
\begin{equation}
\label{eq:coeff-equalities-pointwise}
\begin{split}
f''(u)&=0, \\
(\eta_1-\gamma_1)\,f'(u)&=0, \\
\eta_0 f(u)-\gamma_0 u f'(u)&=0.  
\end{split}
\end{equation}
Because $u$ was arbitrary in the open set $U$, the equalities in \eqref{eq:coeff-equalities-pointwise} hold for every $u\in U$.

\medskip
\textbf{Step 3: Conclude that $f$ is affine on $U$.}
From $f''(u)=0$ for all $u\in U$, we obtain that $f$ is affine on $U$:
\begin{equation*}
f(u)=a\,u+b \quad\text{for some constants } a,b.
\end{equation*}
Because $f'\not\equiv 0$ on $U$ by assumption, we must have $a\neq 0$.

\medskip
\textbf{Step 4: Identify $\eta_1$ and $\eta_0$.}
Plug $f'(u)=a$ into the second equality of \eqref{eq:coeff-equalities-pointwise}:
\begin{equation*}
(\eta_1-\gamma_1)\,a=0 \ \Rightarrow\ \eta_1=\gamma_1 \quad(\text{since } a\neq 0).
\end{equation*}
Now substitute $f(u)=a u + b$ and $f'(u)=a$ into the third equality of \eqref{eq:coeff-equalities-pointwise}:
\begin{equation*}
\eta_0\,(a u + b) - \gamma_0\,u\,a \;\equiv\; 0 \quad \text{for all } u\in U.
\end{equation*}
Equating the coefficients of the polynomial in $u$ yields
\begin{equation*}
\begin{split}
 \text{coefficient of $u$}:&\quad a(\eta_0-\gamma_0)=0
\quad\Rightarrow\quad \eta_0=\gamma_0,\\
\text{constant term}:&\quad \eta_0\,b=0.
\end{split}
\end{equation*}

Combining the two, we have $(\eta_1,\eta_0)=(\gamma_1,\gamma_0)$, and additionally $\gamma_0\,b=0$. Thus if $\gamma_0\neq 0$ then $b=0$ (the affine intercept vanishes).

\medskip
\textbf{Step 5: Summary.}
We have shown that $f$ is affine on $U$ and that $\eta_1=\gamma_1$, $\eta_0=\gamma_0$; furthermore, if $\gamma_0\neq 0$, then $f(u)=a\,u$ on $U$. This completes the proof.
\end{proof}

\bigskip
\subsection{Proof of Theorem~\ref{thm:underdamped-one-clip}}\label{proof:thm of underdamped-one-clip}

\begin{proof}
\setlength{\parindent}{0pt}

\textbf{Step 1 (amplitude–phase form and monotone envelope).}
In the strictly underdamped case, the solution can be written as
\begin{equation*}
\begin{split}
z(t)&=R(t)\cos\theta(t),\\
R(t)&=A\,e^{-\zeta t}\ (A>0),\\
\theta(t)&=\omega t+\psi.  
\end{split}
\end{equation*}

The envelope $R(t)$ is strictly decreasing and continuous. On $[t_a,t_b]$ set
\begin{equation*}
R_{\min}:=\min_{t\in[t_a,t_b]} R(t)=R(t_b)>0.
\end{equation*}

\medskip
\textbf{Step 2 (open interval of admissible levels).}
Define
\begin{equation*}
U:=(-u_\star,u_\star)\quad\text{with}\quad u_\star:=R_{\min}.
\end{equation*}
Since $u\in(-R_{\min},R_{\min})$, we have $|u|<R_{\min}$. Moreover, by definition of $R_{\min}=\min_{t\in[t_a,t_b]}R(t)$ we have $R(t)\ge R_{\min}$ for all $t\in[t_a,t_b]$. Therefore $|u|<R(t)$ for every $t\in[t_a,t_b]$. That is, the level $z=u$ is reachable throughout the window.

\medskip
\textbf{Step 3 (at least three level crossings for each $u\in U$).}
Fix $u\in U$. The equation $z(t)=u$ is equivalent to $\cos\theta(t)=u/R(t)$.
Let $s:=\theta(t)$; since $\theta'(t)=\omega>0$, $s$ runs over an interval $[s_a,s_b]$ of length $s_b-s_a=\omega L\ge 4\pi$.
Because the set of extrema of $\cos s$ is $\{m\pi:\ m\in\mathbb{Z}\}$ with spacing $\pi$, any interval of length $\ge 4\pi$
contains at least four consecutive points $m\pi$. Thus there exists $m\in\mathbb{Z}$ with
\begin{equation*}
[m\pi,(m+3)\pi]\subseteq[s_a,s_b].
\end{equation*}
Define
\begin{equation*}
C(s):=u/R(\theta^{-1}(s))\in(-1,1)  \quad \text{(continuous)}
\end{equation*}
and
\begin{equation*}
H(s):=\cos s - C(s) . 
\end{equation*}

At the four extrema we have
\begin{equation*}
\begin{split}
H(m\pi)&=\pm 1 - C(m\pi),\\
H((m+1)\pi)&=\mp 1 - C((m+1)\pi),\\
H((m+2)\pi)&=\pm 1 - C((m+2)\pi),\\
H((m+3)\pi)&=\mp 1 - C((m+3)\pi),    
\end{split}
\end{equation*}
where the signs alternate. Since $C(\cdot)\in(-1,1)$, these values alternate in sign:
\begin{equation*}
\begin{split}
H(m\pi)&>0,\\
H((m+1)\pi)&<0,\\
H((m+2)\pi)&>0,\\
H((m+3)\pi)&<0,  
\end{split}
\end{equation*}
or the opposite if $m$ changes parity.
By the intermediate value theorem, $H$ vanishes at least once in each of the three disjoint intervals
$[m\pi,(m+1)\pi]$, $[(m+1)\pi,(m+2)\pi]$, and $[(m+2)\pi,(m+3)\pi]$.
Pulling back via $t=\theta^{-1}(s)$ gives three distinct times $t_1<t_2<t_3$ in $[t_a,t_b]$ with $z(t_i)=u$.

\medskip
\textbf{Step 4 (first-order derivatives at crossings are pairwise distinct).}
Differentiating $z(t)=R\cos\theta$ gives
\begin{equation*}
\begin{split}
z'(t)&=R'(t)\cos\theta(t)-R(t)\theta'(t)\sin\theta(t)\\
    &= -\zeta R(t)\cos\theta(t)-\omega R(t)\sin\theta(t).   
\end{split}
\end{equation*}

At a crossing $z(t)=u$,
\begin{equation*}
\begin{split}
z'(t)&= -\zeta\,u - \omega\,R(t)\,\sin\theta(t)\\
     &= -\zeta\,u \mp \omega\,h\big(R(t)\big),    
\end{split}
\end{equation*}
where
\begin{equation*}
\begin{split}
h(R):&=R\sqrt{1-(u/R)^2}=\sqrt{R^2-u^2},\\
h'(R)&=\frac{R}{\sqrt{R^2-u^2}}>0 \ \ (R>|u|).
\end{split}
\end{equation*}

Let $t_1<t_2<t_3$ be three consecutive crossings from Step~3.
As time increases across them, $\sin\theta$ alternates sign and $R(t)$ strictly decreases, hence
\begin{equation*}
\begin{split}
z'(t_1)&=-\zeta u-\omega h(R(t_1)),\\
z'(t_2)&=-\zeta u+\omega h(R(t_2)),\\
z'(t_3)&=-\zeta u-\omega h(R(t_3)).
\end{split}
\end{equation*}

Since $h$ is strictly increasing in $R$ and $R(t_1)>R(t_2)>R(t_3)$, we have
\(z'(t_1)\neq z'(t_3)\), and \(z'(t_2)>-\zeta u\) while \(z'(t_1),z'(t_3)<-\zeta u\).
Thus the three derivative values are pairwise distinct.

\medskip
\textbf{Step 5 (apply Theorem~\ref{thm:single-coverage}).}
Coverage holds on the nonempty open interval $U$, so Theorem~\ref{thm:single-coverage} yields that $f$ is affine on $U$ and
$(\eta_1,\eta_0)=(\gamma_1,\gamma_0)$.
\end{proof}

\bigskip
\subsection{Proof of Theorem~\ref{thm:single-insufficient other cases}}\label{proof:thm of single-insufficient other cases}

\begin{proof}
\setlength{\parindent}{0pt}

\textbf{(i) Critical/overdamped.}
When $\gamma_1^2\ge 4\gamma_0$, the characteristic roots are real: $r_1,r_2\in\mathbb{R}$.

\smallskip
\emph{1. Case $r_1\neq r_2$.}

The general solution is
\begin{equation*}
z(t)=C_1 e^{r_1 t}+C_2 e^{r_2 t} , 
\end{equation*}
so
\begin{equation*}
z'(t)=r_1 C_1 e^{r_1 t}+r_2 C_2 e^{r_2 t}.
\end{equation*}
If $C_1 = 0$ or $C_2 = 0$, then $z(t)$ is monotonic and each level $u$ is attained at most once.
If $C_1 \neq 0$, solving $z'(t)=0$ gives
\begin{equation*}
e^{(r_1-r_2)t}=-\frac{r_2 C_2}{r_1 C_1},
\end{equation*}
which has \emph{at most one} real solution because the left-hand side is strictly monotone in $t$.
Hence $z$ has at most one critical point (one extremum).
A $C^1$ function with at most one extremum is monotone on $(-\infty,t^\star]$ and on $[t^\star,\infty)$ (for some $t^\star$ or with one part absent), so every horizontal line $z=u$ intersects its graph in \emph{at most two} points.

\smallskip
\emph{2. Case $r_1=r_2=:r$.}

The general solution is 
\begin{equation*}
z(t)=(C_1+C_2 t)e^{rt}
\end{equation*}
and
\begin{equation*}
z'(t)=(C_2 + r(C_1+C_2 t))e^{rt}.
\end{equation*}
The factor in parentheses is affine in $t$, so $z'(t)=0$ has at most one real solution; again $z$ has at most one critical point and intersects any level $u$ in at most two points.

Therefore, in all critical/overdamped cases a single trajectory yields at most two occurrences of the same level $u$, and level-set slope coverage fails (it requires three occurrences with pairwise distinct first-order derivatives at that level).

\medskip
\textbf{(ii) Undamped.}
When $\gamma_1=0$ and $\gamma_0>0$, write $\omega:=\sqrt{\gamma_0}$.
Solutions are harmonic:
\begin{equation*}
z(t)=A\cos(\omega t)+B\sin(\omega t)=\tilde A\cos(\omega t+\psi),
\end{equation*}
with amplitude $\tilde A=\sqrt{A^2+B^2}>0$.
Fix any level $u$ with $|u|<\tilde A$.
The level set $z(t)=u$ is attained infinitely many times:
\begin{equation*}
\omega t+\psi=\pm\arccos(u/\tilde A) + 2\pi k,\quad k\in\mathbb{Z}.
\end{equation*}
At these times,
\begin{equation*}
z'(t)=-\omega \tilde A \sin(\omega t+\psi)=\pm\,\omega\,\sqrt{\tilde A^2-u^2},
\end{equation*}
so the first-order derivatives at level $u$ take exactly two values (opposite signs), never three pairwise distinct values.
Thus level-set slope coverage fails for every single trajectory.

Combining (i) and (ii) completes the proof.
\end{proof}


\subsection{Why the discrete estimator can still recover accurately in the undamped case}
\label{app:undamped-discrete}

\begin{proposition}[Discrete pseudotrue coefficients for an ideal sampled undamped trajectory]
\label{prop:undamped-discrete-pseudotrue}
Consider the ideal undamped solution
\[
z(t)=\tilde A \cos(\omega t+\psi),
\qquad
\omega^2=\gamma_0,
\qquad
\gamma_1=0,
\]
sampled at times
\[
t_k=t_0+k\Delta t,
\qquad k=0,1,\dots,T.
\]
Let
\[
z_k:=z(t_k),
\qquad
d_{1,k}:=\frac{z_{k+1}-z_{k-1}}{2\Delta t},
\qquad
d_{2,k}:=\frac{z_{k+1}-2z_k+z_{k-1}}{\Delta t^2},
\qquad k=1,\dots,T-1.
\]
For fixed sampled latents $\{z_k\}$, define the discrete ODE loss
\[
L_{\mathrm{ODE}}^{\mathrm{disc}}(\eta_1,\eta_0)
:=
\frac{1}{T-1}
\sum_{k=1}^{T-1}
\bigl(d_{2,k}+\eta_1 d_{1,k}+\eta_0 z_k\bigr)^2.
\]
Then the following hold.

\begin{enumerate}
    \item The sampled trajectory satisfies the exact centered-difference relation
    \[
    d_{2,k}+\gamma_0^\Delta z_k=0,
    \qquad
    \gamma_0^\Delta:=\frac{4\sin^2(\omega\Delta t/2)}{\Delta t^2},
    \qquad
    \gamma_1^\Delta:=0,
    \]
    for every $k=1,\dots,T-1$. Hence
    \[
    L_{\mathrm{ODE}}^{\mathrm{disc}}(\gamma_1^\Delta,\gamma_0^\Delta)=0.
    \]

    \item Let
    \[
    X:=
    \begin{bmatrix}
    d_{1,1} & z_1\\
    \vdots & \vdots\\
    d_{1,T-1} & z_{T-1}
    \end{bmatrix},
    \qquad
    y:=
    \begin{bmatrix}
    -d_{2,1}\\
    \vdots\\
    -d_{2,T-1}
    \end{bmatrix}.
    \]
    If $\operatorname{rank}(X)=2$, then
    \[
    (\eta_1^\star,\eta_0^\star)=(\gamma_1^\Delta,\gamma_0^\Delta)=(0,\gamma_0^\Delta)
    \]
    is the unique minimizer of $L_{\mathrm{ODE}}^{\mathrm{disc}}$.

    \item As $\Delta t\to 0$,
    \[
    \gamma_1^\Delta=0,
    \qquad
    \gamma_0^\Delta
    =
    \frac{4\sin^2(\omega\Delta t/2)}{\Delta t^2}
    =
    \omega^2-\frac{\omega^4}{12}\Delta t^2+O(\Delta t^4)
    =
    \gamma_0+O(\Delta t^2).
    \]
\end{enumerate}
\end{proposition}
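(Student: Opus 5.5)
The proof is three direct computations, one per item; none needs the structural theorems.

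\textbf{Item 1.} Write $z_k=\tilde A\cos\theta_k$ with $\theta_k:=\omega t_k+\psi$, so that $\theta_{k\pm1}=\theta_k\pm\omega\Delta t$. The sum-to-product identity gives
\[
z_{k+1}+z_{k-1}=2\tilde A\cos\theta_k\cos(\omega\Delta t)=2\cos(\omega\Delta t)\,z_k .
\]
Hence
\[
z_{k+1}-2z_k+z_{k-1}=-2\bigl(1-\cos(\omega\Delta t)\bigr)z_k=-4\sin^2(\omega\Delta t/2)\,z_k .
\]
Dividing by $\Delta t^2$ yields $d_{2,k}=-\gamma_0^\Delta z_k$ for every interior $k$. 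Substituting $(\eta_1,\eta_0)=(0,\gamma_0^\Delta)$ into the loss makes each summand zero, so the loss vanishes there.

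\textbf{Item 2.} $L^{\mathrm{disc}}_{\mathrm{ODE}}(\eta)=\frac{1}{T-1}\|X\eta-y\|_2^2$ is a least-squares objective. It is nonnegative, and Item 1 shows it attains $0$ at $\eta^\Delta:=(0,\gamma_0^\Delta)$; equivalently $X\eta^\Delta=y$ exactly. So $\eta^\Delta$ is a global minimizer. For uniqueness, suppose $\eta$ is another minimizer. Then its loss is also $0$, so $X\eta=y=X\eta^\Delta$, i.e.\ $X(\eta-\eta^\Delta)=0$. Since $\operatorname{rank}(X)=2$, $X$ has trivial kernel, so $\eta=\eta^\Delta$. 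Equivalently, $X^\top X$ is positive definite, the normal equations have the unique solution $(X^\top X)^{-1}X^\top y$, and this solution equals $\eta^\Delta$.

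\textbf{Item 3.} Taylor-expand $\sin x = x - x^3/6 + O(x^5)$ at $x=\omega\Delta t/2$. Then
\[
\sin^2 x = x^2 - x^4/3 + O(x^6),
\]
so
\[
4\sin^2(\omega\Delta t/2)=\omega^2\Delta t^2-\frac{\omega^4\Delta t^4}{12}+O(\Delta t^6).
\]
Dividing by $\Delta t^2$ gives $\gamma_0^\Delta=\omega^2-\frac{\omega^4}{12}\Delta t^2+O(\Delta t^4)$, and $\omega^2=\gamma_0$ gives $\gamma_0^\Delta=\gamma_0+O(\Delta t^2)$.

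No step is a real obstacle. The only point requiring care is in Item 2: uniqueness comes from the zero-loss argument together with the full-rank hypothesis, and we do not need to verify when that rank condition holds, because the statement assumes it.
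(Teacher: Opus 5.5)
Your proposal is correct and follows essentially the same route as the paper: the second-difference trigonometric identity for Item 1 (your sum-to-product step combined with $1-\cos\alpha=2\sin^2(\alpha/2)$ is just the paper's identity $\cos(\theta+\alpha)-2\cos\theta+\cos(\theta-\alpha)=-4\sin^2(\alpha/2)\cos\theta$ written in two stages), zero loss plus trivial kernel of $X$ for uniqueness in Item 2, and the expansion $\sin^2x=x^2-x^4/3+O(x^6)$ for Item 3. All of your computations check out.
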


\begin{proof}
Write
\[
\theta_k:=\omega t_k+\psi,
\qquad
z_k=\tilde A\cos\theta_k.
\]
Using the trigonometric identity
\[
\cos(\theta+\alpha)-2\cos\theta+\cos(\theta-\alpha)
=
-4\sin^2(\alpha/2)\cos\theta,
\]
we obtain
\[
z_{k+1}-2z_k+z_{k-1}
=
\tilde A\bigl(\cos(\theta_k+\omega\Delta t)-2\cos\theta_k+\cos(\theta_k-\omega\Delta t)\bigr)
=
-4\sin^2(\omega\Delta t/2)\,z_k.
\]
Dividing by $\Delta t^2$ gives
\[
d_{2,k}
=
-\frac{4\sin^2(\omega\Delta t/2)}{\Delta t^2}\,z_k
=
-\gamma_0^\Delta z_k.
\]
Hence
\[
d_{2,k}+\gamma_1^\Delta d_{1,k}+\gamma_0^\Delta z_k
=
d_{2,k}+0\cdot d_{1,k}+\gamma_0^\Delta z_k
=
0,
\]
for every $k=1,\dots,T-1$, which proves part (1).

For part (2), note that
\[
L_{\mathrm{ODE}}^{\mathrm{disc}}(\eta_1,\eta_0)
=
\frac{1}{T-1}\,\|y-X\eta\|_2^2,
\qquad
\eta:=
\begin{bmatrix}
\eta_1\\
\eta_0
\end{bmatrix}.
\]
By part (1),
\[
y
=
X
\begin{bmatrix}
0\\
\gamma_0^\Delta
\end{bmatrix},
\]
so $\eta^\star=(0,\gamma_0^\Delta)$ achieves zero loss.
If another parameter pair $\eta$ also minimizes the loss, then necessarily
\[
0
=
\|y-X\eta\|_2^2
=
\left\|
X
\begin{bmatrix}
0\\
\gamma_0^\Delta
\end{bmatrix}
-
X\eta
\right\|_2^2
=
\|X(\eta^\star-\eta)\|_2^2.
\]
Thus $X(\eta^\star-\eta)=0$.
If $\operatorname{rank}(X)=2$, then $\ker(X)=\{0\}$, so $\eta=\eta^\star$.
Therefore $(0,\gamma_0^\Delta)$ is the unique minimizer.

For part (3), let $x=\omega\Delta t/2$.
Using
\[
\sin x=x-\frac{x^3}{6}+O(x^5),
\]
we have
\[
\sin^2 x=x^2-\frac{x^4}{3}+O(x^6).
\]
Substituting $x=\omega\Delta t/2$ gives
\[
\gamma_0^\Delta
=
\frac{4\sin^2(\omega\Delta t/2)}{\Delta t^2}
=
\frac{4}{\Delta t^2}
\left[
\left(\frac{\omega\Delta t}{2}\right)^2
-\frac{1}{3}\left(\frac{\omega\Delta t}{2}\right)^4
+O(\Delta t^6)
\right]
=
\omega^2-\frac{\omega^4}{12}\Delta t^2+O(\Delta t^4).
\]
Since $\omega^2=\gamma_0$, the claimed expansion follows.
\end{proof}

\paragraph{Interpretation.}
Prop.~\ref{prop:undamped-discrete-pseudotrue} is a statement about the discrete regression problem actually fit by the centered-difference pipeline when the sampled latent trajectory is an ideal undamped sinusoid. It does \emph{not} restore continuous-time single-trajectory identifiability. Thm.~\ref{thm:single-insufficient other cases} remains valid: in continuous time, fixing any level $u$ on an undamped trajectory yields only two possible derivatives,
\[
z'(t)=\pm \omega\sqrt{\tilde A^2-u^2},
\]
so exact level-set slope coverage fails for a single trajectory. What the proposition shows instead is that, after sampling, the centered-difference least-squares fit has a natural discrete pseudotrue target $(\gamma_1^\Delta,\gamma_0^\Delta)=(0,\gamma_0^\Delta)$, and this target is unique whenever the discrete design matrix $X=[d_1\ \ z]$ has full column rank. This also clarifies why long undamped clips can become stable in practice. For short windows, the two columns of $X$ can be nearly collinear, so the regression is poorly conditioned even if the minimizer is formally unique. As the sampled window spans more phase, the conditioning improves, and the estimator stabilizes around $(\gamma_1^\Delta,\gamma_0^\Delta)$. Since
\[
\gamma_1^\Delta=0,
\qquad
\gamma_0^\Delta=\gamma_0+O(\Delta t^2),
\]
accurate undamped estimates on long sampled clips are compatible with, and do not contradict, the continuous-time structural non-identifiability result.

\bigskip
\subsection{Proof of Theorem~\ref{thm:three}}\label{proof:thm of three}

\begin{proof}
\setlength{\parindent}{0pt}
\textbf{Step 1 (chain rule identity).}
For any trajectory $z(t)$ with image in $U$, set $\hat z(t)=f(z(t))$. By the chain rule,
\begin{equation*}
\begin{split}
\hat z'(t)&=f'(z(t))\,z'(t),\\
\hat z''(t)&=f''(z(t))\,(z'(t))^2+f'(z(t))\,z''(t).
\end{split}
\end{equation*}
Using the true dynamics $z''=-\gamma_1 z'-\gamma_0 z$ and the latent ODE
$\hat z''+\eta_1\hat z'+\eta_0\hat z=0$, we obtain along the trajectory
\begin{equation*}
f''(z)\,(z')^2 + (\eta_1-\gamma_1)\,f'(z)\,z'
+ \big(\eta_0 f(z) - \gamma_0 z\,f'(z)\big)=0.
\end{equation*}
For notational clarity, define for $u\in U$ and $v\in\mathbb{R}$ the quadratic polynomial
\begin{equation}
\label{eq:Q-def2}
\begin{split}
Q(u,v):&=f''(u)\,v^2+(\eta_1-\gamma_1)\,f'(u)\,v\\
&+\big(\eta_0 f(u)-\gamma_0 u f'(u)\big).
\end{split}
\end{equation}
Then for any solution $z$ with image in $U$,
\begin{equation}
\label{eq:Q-zero}
Q\big(z(t),\,z'(t)\big)=0\quad\text{whenever } z(t)\in U.
\end{equation}

\smallskip
\textbf{Step 2 (use three trajectories to annihilate the quadratic).}
Fix an arbitrary $u\in U$. By hypothesis there exist times $t_m(u)$ on the three trajectories with
$z^{(m)}(t_m(u))=u$ and pairwise distinct \emph{first-order derivatives (velocities)} $v_m(u)=z^{(m)\,'}(t_m(u))$.
Applying \eqref{eq:Q-zero} to each $m$ yields
\begin{equation*}
Q\big(u,\,v_m(u)\big)=0,\qquad m=1,2,3.
\end{equation*}
For this fixed $u$, $Q(u,\cdot)$ is a real quadratic in $v$. A nonzero real quadratic has at most two distinct real roots; since it vanishes at three pairwise distinct $v_m(u)$, it must be the zero polynomial:
\begin{equation*}
Q(u,\cdot)\equiv 0.
\end{equation*}
Hence all three coefficients in \eqref{eq:Q-def2} vanish at $u$:
\begin{equation}
\label{eq:coeff-eqns}
\begin{split}
f''(u)&=0,\\
(\eta_1-\gamma_1)\,f'(u)&=0,\\
\eta_0 f(u)-\gamma_0 u f'(u)&=0.
\end{split}
\end{equation}
Because $u\in U$ was arbitrary, \eqref{eq:coeff-eqns} holds for every $u\in U$.

\smallskip
\textbf{Step 3 (affinity of $f$ on $U$).}
From $f''(u)=0$ for all $u\in U$ we conclude that $f$ is affine on $U$:
\begin{equation*}
f(u)=a\,u+b\quad\text{on each connected component of }U.
\end{equation*}
By the nondegeneracy assumption $f'\not\equiv 0$ on $U$, we have $a\neq 0$.

\smallskip
\textbf{Step 4 (identify the parameters).}
Using $f'(u)=a$ in $(\eta_1-\gamma_1)f'(u)=0$ yields
\begin{equation*}
\eta_1=\gamma_1.
\end{equation*}
Substituting $f(u)=au+b$ and $f'(u)=a$ into $\eta_0 f(u)-\gamma_0 u f'(u)=0$ gives, for all $u\in U$,
\begin{equation*}
\eta_0\,(a u+b)-\gamma_0\,u\,a \equiv 0.
\end{equation*}
Comparing coefficients in this affine polynomial in $u$,
\begin{equation*}
a(\eta_0-\gamma_0)=0 \ \Rightarrow\ \eta_0=\gamma_0,\qquad
\eta_0\,b=0.
\end{equation*}
Thus $(\eta_1,\eta_0)=(\gamma_1,\gamma_0)$, and if $\gamma_0\neq 0$ then $b=0$ so $f(u)=a\,u$ on $U$.

\smallskip
\textbf{Step 5 (conclusion).}
We have shown $f$ is affine on $U$ and $(\eta_1,\eta_0)=(\gamma_1,\gamma_0)$ as claimed; the intercept vanishes when $\gamma_0\neq 0$.
\end{proof}

\bigskip
\subsection{Proof of Theorem~\ref{thm:finite-sample-ode}}\label{proof:thm of finite-sample-ode}

\begin{proof}
\textbf{Step 1 (bias--variance split via Def.~\ref{def:encoder-bias}).}
By the triangle inequality and Def.~\ref{def:encoder-bias},
\begin{equation*}
\|\hat\eta-\gamma\|_2 \;\le\; \|\hat\eta-\eta^\star\|_2 + E_{\mathrm{enc}}.
\end{equation*}
We now upper bound the \emph{statistical part} $\|\hat\eta-\eta^\star\|_2$; the deterministic approximation bias remains as $E_{\mathrm{enc}}$ in \eqref{eq:single bound}.

\medskip
\noindent
\textbf{Step 2 (normal equations around $\eta^\star$).}
Let $X_k:=[\,\hat z'_k,\ \hat z_k\,]\in\mathbb{R}^{1\times 2}$ and stack
$X\in\mathbb{R}^{(T-1)\times 2}$; let $y_k:=-\hat z''_k$ and $y\in\mathbb{R}^{T-1}$.
The minimizer $\hat \eta$ of $\frac{1}{T-1}\|y-X\eta\|_2^2$ satisfies
\begin{equation*}
\frac{1}{T-1}X^\top(y-X\hat\eta)=0,
\end{equation*}
hence
\begin{equation*}
\label{eq:master-star}
\frac{1}{T-1}X^\top\big(y-X\eta^\star\big)
=\frac{1}{T-1}X^\top X\,(\hat\eta-\eta^\star).
\end{equation*}
Writing $G:=\frac{1}{T-1}X^\top X$ and $\psi_{\min}:=\lambda_{\min}(G)$ yields the master inequality
\begin{equation}
\begin{split}
\label{eq:master}
\|\widehat{\eta}-\eta^\star\|_2
\;&\le\;
\frac{1}{\psi_{\min}}\,
\Big\|\frac{1}{T-1}X^\top r(\eta^\star)\Big\|_2,\\
r_k(\eta)&=\hat z''_k+\eta_1\hat z'_k+\eta_0\hat z_k.
\end{split}
\end{equation}

\medskip
\noindent
\textbf{Step 3 (residual decomposition at $\eta^\star$).}
Use the proxy $\hat z_k=\tilde z_k+\epsilon_k$ and the centered stencil \eqref{eq:fd}:
\begin{equation*}
\begin{split}
\hat z'_k &= \tilde z'_k + \xi^{(1)}_k + \delta^{(1)}_k,\qquad
\hat z''_k = \tilde z''_k + \xi^{(2)}_k + \delta^{(2)}_k,
\end{split}
\end{equation*}

\[
\delta^{(1)}_k := \frac{\tilde z_{k+1}-\tilde z_{k-1}}{2\Delta t}-\tilde z'_k,
\qquad
\delta^{(2)}_k := \frac{\tilde z_{k+1}-2\tilde z_k+\tilde z_{k-1}}{\Delta t^2}-\tilde z''_k,
\]

where $(\xi^{(1)}_k,\xi^{(2)}_k)$ are linear transforms of $(\epsilon_{k-1},\epsilon_k,\epsilon_{k+1})$ (hence sub-Gaussian, with an explicit $\Delta t$-dependent effective scale).
Concretely, define the stencil-induced noise terms
\[
\xi^{(1)}_k:=\frac{\epsilon_{k+1}-\epsilon_{k-1}}{2\Delta t},
\qquad
\xi^{(2)}_k:=\frac{\epsilon_{k+1}-2\epsilon_k+\epsilon_{k-1}}{\Delta t^2}.
\]
Let $\sigma_0$ be the (base) sub-Gaussian noise scale of $\epsilon_k$ as assumed in Thm.~\ref{thm:finite-sample-ode}.
By standard stability of sub-Gaussianity under fixed linear combinations, there exist absolute constants
$c_0,c_1,c_2>0$ (depending only on the fixed stencil and, if applicable, the $\beta$-mixing constants) such that
$\epsilon_k$, $\xi^{(1)}_k$, and $\xi^{(2)}_k$ are all sub-Gaussian with scales bounded by
\[
c_0\sigma_0,\qquad
c_1\frac{\sigma_0}{\Delta t},\qquad
c_2\frac{\sigma_0}{\Delta t^2},
\]
respectively. We therefore define the effective (post-stencil) noise level
\begin{equation}
\label{eq:sigma-fd}
\sigma \;:=\;\sigma_{\mathrm{fd}}(\Delta t)
\;:=\;
\max\!\left\{
c_0\sigma_0,\;
c_1\frac{\sigma_0}{\Delta t},\;
c_2\frac{\sigma_0}{\Delta t^2}
\right\},
\end{equation}
so that $\epsilon_k$, $\xi^{(1)}_k$, and $\xi^{(2)}_k$ may all be treated as sub-Gaussian with a common scale parameter $\sigma$.

Intuitively, finite differences amplify measurement noise by factors $1/\Delta t$ and $1/\Delta t^2$ for first and second derivatives.

Moreover,
$|\delta^{(i)}_k|\le C_\Delta\,\Delta t^{\,2}$ (second-order truncation).
Thus
\begin{equation*}
\label{eq:resid-split-star}
r_k(\eta^\star)
=\underbrace{\tilde r_k(\eta^\star)}_{\text{noiseless residual}}
+\underbrace{\xi^{(2)}_k+\eta^\star_1\xi^{(1)}_k+\eta^\star_0\epsilon_k}_{=:u_k\ \text{(stochastic)}}
+\underbrace{\delta^{(2)}_k+\eta^\star_1\delta^{(1)}_k}_{=:v_k\ \text{(truncation)}},
\end{equation*}
where $\tilde r_k(\eta):=\tilde z''_k+\eta_1\tilde z'_k+\eta_0\tilde z_k$ and, by normal equations for the \emph{noiseless} problem,
\(
\frac{1}{T-1}\tilde X^\top \tilde r(\eta^\star)=0
\)
with $\tilde X_k=[\,\tilde z'_k,\tilde z_k\,]$.

\medskip
\noindent
\textbf{Step 4 (stochastic term).}
Let $\Delta X := X-\tilde X$. Premultiplying \eqref{eq:resid-split-star} by $\frac{1}{T-1}X^\top(\cdot)$ and using
$\frac{1}{T-1}\tilde X^\top \tilde r(\eta^\star)=0$ (normal equations for the noiseless proxy) yields
\[
\frac{1}{T-1}X^\top \tilde r(\eta^\star)
=
\frac{1}{T-1}\Delta X^\top \tilde r(\eta^\star).
\]
Therefore,
\begin{equation*}
\label{eq:stoch-split}
\frac{1}{T-1}X^\top u\;+\;\frac{1}{T-1}(X-\tilde X)^\top \tilde r(\eta^\star)
=
\underbrace{\frac{1}{T-1}\tilde X^\top u}_{(\mathrm{A})}
+
\underbrace{\frac{1}{T-1}\Delta X^\top \tilde r(\eta^\star)}_{(\mathrm{B})}
+
\underbrace{\frac{1}{T-1}\Delta X^\top u}_{(\mathrm{C})}.
\end{equation*}

\emph{Terms (A) and (B).}
Conditional on $\{z_k\}$, the sequence $\{u_k\}$ is mean-zero sub-Gaussian (with $\beta$-mixing allowed),
while $\{\Delta X_k\}$ is also sub-Gaussian since it is a fixed linear transform of
$(\epsilon_{k-1},\epsilon_k,\epsilon_{k+1})$. By \eqref{eq:sigma-fd}, the primitive noises $\epsilon_k$ and the stencil noises
$\xi^{(1)}_k,\xi^{(2)}_k$ are all sub-Gaussian with common scale $\sigma=\sigma_{\mathrm{fd}}(\Delta t)$.
Since $u_k=\xi^{(2)}_k+\eta^\star_1\xi^{(1)}_k+\eta^\star_0\epsilon_k$ is a fixed linear combination,
$\{u_k\}$ is sub-Gaussian with scale at most $C_u\sigma$, where $C_u$ depends only on bounds on $\eta^\star$
over the window. Likewise, $\Delta X_k=[\,\xi^{(1)}_k+\delta^{(1)}_k,\ \epsilon_k\,]$ is sub-Gaussian
with scale $O(\sigma)$ (the deterministic shift $\delta^{(1)}_k$ does not affect tails).
Absorb these constants into $C_{1a}$. Moreover, $\tilde X$ and $\tilde r(\eta^\star)$ depend only on $\{z_k\}$
and the stencil. Hence, Bernstein/Freedman-type concentration for $\beta$-mixing sub-Gaussian sequences implies that,
for any $\delta\in(0,1)$, with probability at least $1-\delta/3$,
\[
\left\|(\mathrm{A})+(\mathrm{B})\right\|_2
\ \le\
C_{1a}\,\sigma\,\sqrt{\frac{\log(3/\delta)}{T-1}}.
\]

\emph{Term (C): error-in-variables interaction.}
Each summand in $(\mathrm{C})$ is a product of sub-Gaussian random variables and is therefore sub-exponential.
We decompose it into a (possibly nonzero) conditional mean plus a centered fluctuation:
\[
(\mathrm{C})
=
\underbrace{\mathbb{E}\!\left[(\mathrm{C})\,\middle|\,\{z_k\}\right]}_{=:b_{\mathrm{eiv}}}
+
\left((\mathrm{C})-b_{\mathrm{eiv}}\right).
\]
By Cauchy--Schwarz and sub-Gaussian moment bounds, $\|b_{\mathrm{eiv}}\|_2\le C_{1b}\sigma^2$.
Moreover, Bernstein/Freedman-type concentration for $\beta$-mixing sub-exponential sequences yields that, with probability at least $1-\delta/3$,
\[
\left\|(\mathrm{C})-b_{\mathrm{eiv}}\right\|_2
\ \le\
C_{1c}\,\sigma^2\,\sqrt{\frac{\log(3/\delta)}{T-1}}.
\]
Combining these bounds and applying a union bound over the two events above gives, with probability at least $1-2\delta/3$,
\begin{equation}
\label{eq:stoch-new}
\Big\|\frac{1}{T-1}X^\top u\ +\ \frac{1}{T-1}(X-\tilde X)^\top \tilde r(\eta^\star)\Big\|_2
\ \le\
C_1\,\sigma\,\sqrt{\frac{\log(3/\delta)}{T-1}}
\;+\;
C_2\,\sigma^2,
\end{equation}
after adjusting constants. Here we absorbed the lower-order fluctuation term
$C_{1c} \sigma^2\sqrt{\log(3/\delta)/(T-1)}$ into $C_2\sigma^2$ by adjusting constants
(e.g., since $\sqrt{\log(3/\delta)/(T-1)}\le 1$ when $T-1\ge \log(3/\delta)$).

\medskip
\noindent
\textbf{Step 5 (truncation term control).}
Write $X=\tilde X+\Delta X$. Then
\[
\Big\|\frac{1}{T-1}X^\top v\Big\|_2
\le
\Big\|\frac{1}{T-1}\tilde X^\top v\Big\|_2
+
\Big\|\frac{1}{T-1}\Delta X^\top v\Big\|_2.
\]
Since $\|v\|_\infty\le C'_\Delta\,\Delta t^{\,2}$ and $\tilde X_k=[\,\tilde z'_k,\tilde z_k\,]$ has bounded moments on the window,
\[
\Big\|\frac{1}{T-1}\tilde X^\top v\Big\|_2 \le C_3\,\Delta t^{\,2}.
\]
Moreover, $\Delta X_k$ is sub-Gaussian with scale $O(\sigma)$ by \eqref{eq:sigma-fd}, and each summand of
$\frac{1}{T-1}\Delta X^\top v$ is multiplied by $v_k=O(\Delta t^2)$. 
Hence, Bernstein/Freedman-type concentration for $\beta$-mixing sub-Gaussian sequences implies that,
with probability at least $1-\delta/3$,
\[
\Big\|\frac{1}{T-1}\Delta X^\top v\Big\|_2
\le
C_{3a}\,\sigma\,\Delta t^{\,2}\sqrt{\frac{\log(3/\delta)}{T-1}}.
\]

\begin{equation}
\label{eq:trunc}
\Big\|\frac{1}{T-1}X^\top v\Big\|_2
\ \le\
C_3\,\Delta t^{\,2}
\;+\;
C_{3a}\,\sigma\,\Delta t^{\,2}\sqrt{\frac{\log(3/\delta)}{T-1}}.
\end{equation}

Using $\Delta t\le 1$, this term is of the same order as the leading stochastic term in Step~4 and can be absorbed into
$C_1\sigma\sqrt{\log(3/\delta)/(T-1)}$ after adjusting constants.

\medskip
\noindent
\textbf{Step 6 (combine).}
Plugging \eqref{eq:stoch-new} (which holds with probability at least $1-2\delta/3$) and
\eqref{eq:trunc} (which holds with probability at least $1-\delta/3$) into \eqref{eq:master}
and taking a union bound yields, with probability at least $1-\delta$, and after adjusting constants,

\[
\|\hat\eta-\eta^\star\|_2
\ \le\
\frac{C_1\sigma}{\psi_{\min}}\sqrt{\frac{\log(3/\delta)}{T-1}}
\;+\;
\frac{C_2\sigma^2}{\psi_{\min}}
\;+\;
\frac{C_3\Delta t^2}{\psi_{\min}}.
\]


Finally, adding the approximation bias $E_{\mathrm{enc}}=\|\eta^\star-\gamma\|_2$ from Step~1 yields \eqref{eq:single bound}.
\end{proof}

\bigskip
\subsection{Proof of Theorem~\ref{thm:finite-sample-ode-multi}}\label{proof:thm of finite-sample-ode-multi}

\begin{proof}
\textbf{Step 1 (bias--variance split across clips).}
By Def.~\ref{def:encoder-bias-multi} and the triangle inequality,
\begin{equation*}
\|\hat \eta-\gamma\|_2\ \le\ \|\hat \eta-\eta^\star\|_2\ +\ E_{\mathrm{enc}}.
\end{equation*}
We now bound the \emph{statistical part} $\|\hat \eta-\eta^\star\|_2$; the deterministic approximation bias contributes the final $E_{\mathrm{enc}}$ term in \eqref{eq:multiple bound}.

\medskip
\noindent
\textbf{Step 2 (stacked normal equations around $\eta^\star$).}
For clip $m$, set $X_k^{(m)}=[\,\hat z_k'^{(m)},\ \hat z_k^{(m)}\,]\in\mathbb{R}^{1\times 2}$ and
$y_k^{(m)}:=-\hat z_k''^{(m)}$. Stack all clips:
\begin{equation*}
X\in\mathbb{R}^{N\times 2},\quad
y\in\mathbb{R}^{N},\quad
N=\sum_{m=1}^M\big(T^{(m)}-1\big).
\end{equation*}
The minimizer $\hat \eta$ of $\frac{1}{N}\|y-X\eta\|_2^2$ satisfies
\begin{equation*}
\frac{1}{N}X^\top\big(y-X\hat \eta\big)=0,
\end{equation*}
hence
\begin{equation*}
\frac{1}{N}X^\top\big(y-X\eta^\star\big)
=
\frac{1}{N}X^\top X\,(\hat \eta-\eta^\star).
\end{equation*}
Let $G:=\frac{1}{N}X^\top X$ and $\psi_{\min}:=\lambda_{\min}(G)$. Then
\begin{equation}
\begin{split}
\label{eq:master-multi}
\|\hat \eta-\eta^\star\|_2
\;&\le\;
\frac{1}{\psi_{\min}}\,
\Big\|\frac{1}{N}X^\top r(\eta^\star)\Big\|_2,\\
r_k^{(m)}(\eta)&:=\hat z_k''^{(m)}+\eta_1\hat z_k'^{(m)}+\eta_0\hat z_k^{(m)}.
\end{split}
\end{equation}

\medskip
\noindent
\textbf{Step 3 (per-clip residual decomposition at $\eta^\star$).}
Using the proxy $\hat z_k^{(m)}=\tilde z_k^{(m)}+\epsilon_k^{(m)}$ and the centered stencil \eqref{eq:fd},
\begin{equation*}
\begin{split}
\hat z_k'^{(m)} &= \tilde z_k'^{(m)} + \xi_k^{(1,m)} + \delta_k^{(1,m)},\\
\hat z_k''^{(m)}&= \tilde z_k''^{(m)} + \xi_k^{(2,m)} + \delta_k^{(2,m)},
\end{split}
\end{equation*}

where $(\xi_k^{(1,m)},\xi_k^{(2,m)})$ are linear transforms of $(\epsilon^{(m)}_{k-1},\epsilon^{(m)}_{k},\epsilon^{(m)}_{k+1})$
(hence sub-Gaussian with an explicit $\Delta t^{(m)}$-dependent effective scale).
Define
\[
\xi_k^{(1,m)}:=\frac{\epsilon_{k+1}^{(m)}-\epsilon_{k-1}^{(m)}}{2\Delta t^{(m)}},\qquad
\xi_k^{(2,m)}:=\frac{\epsilon_{k+1}^{(m)}-2\epsilon_k^{(m)}+\epsilon_{k-1}^{(m)}}{(\Delta t^{(m)})^2}.
\]
Let $\sigma_0$ be the base sub-Gaussian noise scale of $\epsilon_k^{(m)}$.
As in the single-clip case, there exist absolute constants $c_0,c_1,c_2>0$ such that
$\epsilon_k^{(m)},\xi_k^{(1,m)},\xi_k^{(2,m)}$ are sub-Gaussian with scales bounded by
\[
c_0\sigma_0,\qquad c_1\frac{\sigma_0}{\Delta t^{(m)}},\qquad c_2\frac{\sigma_0}{(\Delta t^{(m)})^2}.
\]

Define
\[
\sigma_{\mathrm{fd}}(\Delta t^{(m)}):=\max\!\left\{c_0\sigma_0,\ c_1\frac{\sigma_0}{\Delta t^{(m)}},\ c_2\frac{\sigma_0}{(\Delta t^{(m)})^2}\right\}
\]
as the corresponding effective scale and set

\begin{equation}
\label{eq:fd-noise-multi}
   \sigma:=\max_m \sigma_{\mathrm{fd}}(\Delta t^{(m)})=\sigma_{\mathrm{fd}}(\Delta t_{\min}) .
\end{equation}

Moreover,
$|\delta_k^{(i,m)}|\le C_\Delta(\Delta t^{(m)})^2$.
Thus
\begin{equation}
\label{eq:resid-split-multi-star}
r_k^{(m)}(\eta^\star)
=
\underbrace{\tilde r_k^{(m)}(\eta^\star)}_{\text{noiseless residual}}
+
\underbrace{\xi_k^{(2,m)}+\eta_1^\star\xi_k^{(1,m)}+\eta_0^\star\epsilon_k^{(m)}}_{=:u_k^{(m)}\ \text{(stochastic)}}
+\;
\underbrace{\delta_k^{(2,m)}+\eta_1^\star\delta_k^{(1,m)}}_{=:v_k^{(m)}\ \text{(truncation)}},
\end{equation}
where $\tilde r_k^{(m)}(\eta):=\tilde z_k''^{(m)}+\eta_1\tilde z_k'^{(m)}+\eta_0\tilde z_k^{(m)}$ and, by the noiseless normal equations,
\(
\frac{1}{N}\sum_{m,k} (\tilde X_k^{(m)})^\top \tilde r_k^{(m)}(\eta^\star)=0
\)
with $\tilde X_k^{(m)}=[\,\tilde z_k'^{(m)},\tilde z_k^{(m)}\,]$.

\medskip
\noindent
\textbf{Step 4 (stochastic term; independent clips).}
Let $\Delta X:=X-\tilde X$. Premultiplying \eqref{eq:resid-split-multi-star} by $\frac{1}{N}X^\top(\cdot)$ and using the
noiseless normal equations
$\frac{1}{N}\tilde X^\top \tilde r(\eta^\star)=0$ yields
\[
\frac{1}{N}X^\top \tilde r(\eta^\star)
=
\frac{1}{N}\Delta X^\top \tilde r(\eta^\star).
\]
Therefore,
\begin{equation*}
\label{eq:stoch-split-multi}
\frac{1}{N}X^\top u\;+\;\frac{1}{N}(X-\tilde X)^\top \tilde r(\eta^\star)
=
\underbrace{\frac{1}{N}\tilde X^\top u}_{(\mathrm{A})}
+
\underbrace{\frac{1}{N}\Delta X^\top \tilde r(\eta^\star)}_{(\mathrm{B})}
+
\underbrace{\frac{1}{N}\Delta X^\top u}_{(\mathrm{C})}.
\end{equation*}

\emph{Terms (A) and (B).}
Conditional on $\{z_k^{(m)}\}$, the sequence $\{u_k^{(m)}\}$ is mean-zero sub-Gaussian (with within-clip $\beta$-mixing allowed).
Moreover, by construction, $\Delta X_k^{(m)}$ is sub-Gaussian as a fixed linear transform of
$(\epsilon_{k-1}^{(m)},\epsilon_k^{(m)},\epsilon_{k+1}^{(m)})$.
Under the assumption that clips are independent, the collections
$\{u^{(m)}\}_m$ and $\{\Delta X^{(m)}\}_m$ are independent across $m$.
Since $\tilde X$ and $\tilde r(\eta^\star)$ depend only on the noiseless trajectories and the stencil,
Bernstein/Freedman-type concentration (applied within each clip and aggregated over independent clips) implies that,
for any $\delta\in(0,1)$, with probability at least $1-\delta/3$,
\[
\left\|(\mathrm{A})+(\mathrm{B})\right\|_2
\ \le\
C_{1a}\,\sigma\,\sqrt{\frac{\log(3/\delta)}{N}}.
\]

\emph{Term (C): error-in-variables interaction.}
Each summand in $(\mathrm{C})$ is a product of sub-Gaussian random variables and is therefore sub-exponential.
Write
\[
(\mathrm{C})
=
\underbrace{\mathbb{E}\!\left[(\mathrm{C})\,\middle|\,\{z_k^{(m)}\}\right]}_{=:b_{\mathrm{eiv}}}
+
\left((\mathrm{C})-b_{\mathrm{eiv}}\right).
\]
By Cauchy--Schwarz and sub-Gaussian moment bounds,
\[
\|b_{\mathrm{eiv}}\|_2 \le C_{1b}\,\sigma^2.
\]
Moreover, Bernstein/Freedman-type concentration for within-clip $\beta$-mixing and across-clip independence gives that,
with probability at least $1-\delta/3$,
\[
\left\|(\mathrm{C})-b_{\mathrm{eiv}}\right\|_2
\ \le\
C_{1c}\,\sigma^2\,\sqrt{\frac{\log(3/\delta)}{N}}.
\]
Combining these bounds and applying a union bound over the two events above gives, with probability at least $1-2\delta/3$,
\begin{equation}
\label{eq:stoch-multi}
\Big\|\frac{1}{N}X^\top u\ +\ \frac{1}{N}(X-\tilde X)^\top \tilde r(\eta^\star)\Big\|_2
\ \le\
C_1\,\sigma\,\sqrt{\frac{\log(3/\delta)}{N}}
\;+\;
C_2\,\sigma^2,
\end{equation}
after adjusting constants (absorbing the lower-order deviation term into $C_2\sigma^2$ as before).

\medskip
\noindent
\textbf{Step 5 (truncation term control).}
Write $X=\tilde X+\Delta X$ and stack $v$ across clips. Then
\[
\Big\|\frac{1}{N}X^\top v\Big\|_2
\le
\Big\|\frac{1}{N}\tilde X^\top v\Big\|_2
+
\Big\|\frac{1}{N}\Delta X^\top v\Big\|_2.
\]
Since $\|v_k^{(m)}\|_\infty\le C'_\Delta(\Delta t^{(m)})^2\le C'_\Delta\,\Delta t_{\max}^2$
and $\tilde X_k^{(m)}=[\,\tilde z_k'^{(m)},\tilde z_k^{(m)}\,]$ has bounded moments on each window,
\[
\Big\|\frac{1}{N}\tilde X^\top v\Big\|_2 \le C_3\,\Delta t_{\max}^{\,2}.
\]

Moreover, $\Delta X_k^{(m)}$ is sub-Gaussian with uniform scale $O(\sigma)$ (by the definition of
$\sigma=\sigma_{\mathrm{fd}}(\Delta t_{\min})$), and each summand in $\frac{1}{N}\Delta X^\top v$ is multiplied by
$v_k^{(m)}=O(\Delta t_{\max}^2)$. Under within-clip $\beta$-mixing and independence across clips, a Bernstein/Freedman-type
concentration bound implies that, for any $\delta\in(0,1)$, with probability at least $1-\delta/3$,
\[
\Big\|\frac{1}{N}\Delta X^\top v\Big\|_2
\le
C_{3a}\,\sigma\,\Delta t_{\max}^{\,2}\sqrt{\frac{\log(3/\delta)}{N}}.
\]
Consequently, on this event,
\begin{equation}
\label{eq:trunc-multi}
\Big\|\frac{1}{N}X^\top v\Big\|_2
\le
C_3\,\Delta t_{\max}^{\,2}
+
C_{3a}\,\sigma\,\Delta t_{\max}^{\,2}\sqrt{\frac{\log(3/\delta)}{N}}.
\end{equation}
Assuming $\Delta t^{(m)}\le 1$ for all $m$ (hence $\Delta t_{\max}\le 1$), the second term above is of the same order as the
leading stochastic term in Step~4 and can be absorbed into $C_1\sigma\sqrt{\log(3/\delta)/N}$ after adjusting constants.

\medskip
\noindent
\textbf{Step 6 (combine).}
Insert \eqref{eq:stoch-multi} (which holds with probability at least $1-2\delta/3$ from Step~4)
and \eqref{eq:trunc-multi} (which holds with probability at least $1-\delta/3$ from Step~5) into \eqref{eq:master-multi}.
By a union bound, the following holds with probability at least $1-\delta$:
\[
\|\hat \eta-\eta^\star\|_2
\ \le\
\frac{C_1\,\sigma}{\psi_{\min}}\sqrt{\frac{\log(3/\delta)}{N}}
+
\frac{C_2\,\sigma^2}{\psi_{\min}}
+
\frac{C_3\,\Delta t_{\max}^{\,2}}{\psi_{\min}},
\]
after adjusting constants (absorbing the $\Delta t_{\max}^2$-weighted stochastic remainder from \eqref{eq:trunc-multi} into the
leading term when $\Delta t_{\max}\le 1$).

Finally, adding the approximation bias $E_{\mathrm{enc}}=\|\eta^\star-\gamma\|_2$ from Step~1 yields \eqref{eq:multiple bound}.

\end{proof}

%% file: appendix/design_conditioning.tex
\section{Design conditioning}\label{app: design conditioning}

\begin{lemma}[Design conditioning via coverage and boundary control]
\label{lem:conditioning}
Let $X_k=[\,\hat z'_k,\ \hat z_k\,]$ for $k=1,\dots,T-1$ and
\begin{equation*}
\begin{split}
G \;=\; \frac{1}{T-1}\sum_{k=1}^{T-1} X_k^\top X_k
\;=\;
\begin{bmatrix}
\overline{(\hat z')^2} & \overline{\hat z'\hat z}\\
\overline{\hat z'\hat z} & \overline{\hat z^2}
\end{bmatrix},
\\
\overline{(\cdot)}:=\frac{1}{T-1}\sum_{k=1}^{T-1}(\cdot).
\end{split}
\end{equation*}
Assume:
\begin{enumerate}
\item level-set slope coverage holds on a nonempty open interval $U$, and let
$\mathcal K:=\{k:\ \hat z_k\in U\}$, $\rho:=|\mathcal K|/(T-1)\in(0,1]$;
\item there exists a mean squared slope floor $\upsilon_\star>0$ such that
$\frac{1}{|\mathcal K|}\sum_{k\in\mathcal K}(\hat z'_k)^2\ge \upsilon_\star$;
\item the variance-floor regularizer \eqref{eq:loss-var-floor} enforces
$\sqrt{\widehat{\operatorname{Var}}(\hat z)}\ge\tau$ at optimum; and
\item centered differences \eqref{eq:fd} are used with fixed $\Delta t$.
\end{enumerate}
Let $L:=(T-1)\Delta t$ be the window length and define the boundary constant
\begin{equation*}
C_{\mathrm{bd}} \;:=\; \tfrac12\bigl(|\hat z_{T-1}\hat z_T -\hat z_1\hat z_0|\bigr).
\end{equation*}
Then
\begin{equation}
\label{eq:lambda-lb-main}
\lambda_{\min}(G)
\;\ge\;
\min\!\bigl\{\ \rho\,\upsilon_\star,\ \overline{\hat z^2}\ \bigr\}
\;-\;
\frac{C_{\mathrm{bd}}}{L}.
\end{equation}

Define
\[
b_1 \;:=\; \frac{T+1}{T-1} \;-\; \frac{\hat z_0^2+\hat z_T^2}{(T-1)\tau^2},
\qquad
c_1 \;:=\; \max\{0,\ b_1\}.
\]
Then the variance floor implies the explicit bound
\begin{equation}
\label{eq:z2-lb}
\overline{\hat z^2}
\;\ge\;
\frac{T+1}{T-1}\,\widehat{\operatorname{Var}}(\hat z)
\;-\;
\frac{\hat z_0^2+\hat z_T^2}{T-1}
\;\ge\;
b_1\,\tau^2.
\end{equation}
Since $\overline{\hat z^2}\ge 0$ always, we may strengthen this to
\begin{equation}
\label{eq:z2-lb-max}
\overline{\hat z^2}
\;\ge\;
\max\{0,\ b_1\tau^2\}
\;=\;
c_1\,\tau^2.
\end{equation}

So that, combining \eqref{eq:lambda-lb-main} and \eqref{eq:z2-lb-max},
\begin{equation}
\label{eq:lambda-lb-final}
\lambda_{\min}(G)
\;\ge\;
\min\!\bigl\{\ \rho\,\upsilon_\star,\ c_1\tau^2\ \bigr\}
\;-\;
\frac{C_{\mathrm{bd}}}{L}.
\end{equation}
\end{lemma}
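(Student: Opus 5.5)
The plan is to bound $\lambda_{\min}$ of the $2\times 2$ symmetric matrix $G$ by its diagonal entries minus the off-diagonal magnitude. For $G=\begin{bmatrix}\alpha & c\\ c & d\end{bmatrix}$,
\[
\lambda_{\min}(G)=\tfrac{\alpha+d}{2}-\sqrt{\bigl(\tfrac{\alpha-d}{2}\bigr)^2+c^2}.
\]
Using $\sqrt{x^2+c^2}\le |x|+|c|$ gives $\lambda_{\min}(G)\ge \min\{\alpha,d\}-|c|$, which is the Gershgorin bound for the symmetric case. So \eqref{eq:lambda-lb-main} reduces to three separate estimates: a lower bound on $\alpha=\overline{(\hat z')^2}$, the identity $d=\overline{\hat z^2}$, and an upper bound on $|c|=|\overline{\hat z'\hat z}|$.

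\textbf{Diagonal (slope) entry.} Drop all indices outside $\mathcal K$, since the summands are nonnegative. Then assumption 2 gives
\[
\overline{(\hat z')^2}\ \ge\ \frac{1}{T-1}\sum_{k\in\mathcal K}(\hat z'_k)^2=\rho\,\frac{1}{|\mathcal K|}\sum_{k\in\mathcal K}(\hat z'_k)^2\ \ge\ \rho\,\upsilon_\star .
\]
Coverage (assumption 1) enters only through guaranteeing $\mathcal K\neq\emptyset$, so that $\rho>0$.

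\textbf{Off-diagonal entry (the key step).} With the centered stencil \eqref{eq:fd} (assumption 4), each summand is a telescoping difference:
\[
\hat z'_k\hat z_k=\frac{\hat z_{k+1}\hat z_k-\hat z_k\hat z_{k-1}}{2\Delta t}.
\]
Summing over $k=1,\dots,T-1$ gives $(\hat z_T\hat z_{T-1}-\hat z_1\hat z_0)/(2\Delta t)$. Dividing by $T-1$ and using $L=(T-1)\Delta t$ yields exactly $|\overline{\hat z'\hat z}|=C_{\mathrm{bd}}/L$. Combining the three pieces gives \eqref{eq:lambda-lb-main}. This telescoping identity is the discrete analogue of $\int zz'=\tfrac12[z^2]$, and it is the only non-routine step; I expect it to be the main point to get right, in particular the index bookkeeping at both ends of the sum.

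\textbf{Variance-floor part.} The mean square dominates the variance, so $\sum_{k=0}^{T}\hat z_k^2\ge (T+1)\widehat{\operatorname{Var}}(\hat z)$. Removing the two endpoint terms gives
\[
\sum_{k=1}^{T-1}\hat z_k^2\ge (T+1)\widehat{\operatorname{Var}}(\hat z)-\hat z_0^2-\hat z_T^2 .
\]
Dividing by $T-1$ gives the first inequality in \eqref{eq:z2-lb}. Applying assumption 3, $\widehat{\operatorname{Var}}(\hat z)\ge\tau^2$, gives $\tfrac{T+1}{T-1}\tau^2-\tfrac{\hat z_0^2+\hat z_T^2}{T-1}=b_1\tau^2$. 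Nonnegativity of $\overline{\hat z^2}$ upgrades this to $c_1\tau^2$, which is \eqref{eq:z2-lb-max}. Since $\min\{\rho\upsilon_\star,\cdot\}$ is monotone in its second argument, substituting into \eqref{eq:lambda-lb-main} yields \eqref{eq:lambda-lb-final}.
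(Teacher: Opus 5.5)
Your proposal is correct and follows the paper's proof step for step: the Gershgorin-type bound $\lambda_{\min}(G)\ge\min\{G_{11},G_{22}\}-|G_{12}|$ (which you derive from the explicit $2\times 2$ eigenvalue formula, an alternative the paper itself mentions), dropping indices outside $\mathcal K$ for the slope entry, the telescoping identity $\sum_{k=1}^{T-1}\hat z_k\hat z'_k=(\hat z_{T-1}\hat z_T-\hat z_1\hat z_0)/(2\Delta t)$ for the cross term, and the mean square dominating the variance for the position entry. One small correction to a side remark: coverage is a property of the continuous true state $z$, while $\mathcal K$ is defined through the sampled latents $\hat z_k$, so $\rho>0$ comes from the stated hypothesis $\rho\in(0,1]$ rather than from coverage itself; this does not affect the argument.
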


\begin{proof}
Write the entries of $G$ as
$G_{11}=\overline{(\hat z')^2}$, $G_{22}=\overline{\hat z^2}$, and
$G_{12}=G_{21}=\overline{\hat z'\hat z}$.
By Gershgorin (or the $2\times2$ eigenvalue formula),
\begin{equation}
\label{eq:gersh}
\lambda_{\min}(G)\ \ge\ \min\{\,G_{11},\,G_{22}\,\}\ -\ |G_{12}|.
\end{equation}

\paragraph{Step 1 (velocity energy via coverage).}
Since $(\hat z'_k)^2\ge0$ for all $k$ and $\mathcal K\subset\{1,\ldots,T-1\}$,
\begin{equation*}
G_{11}
=\frac{1}{T-1}\sum_{k=1}^{T-1}(\hat z'_k)^2
\ \ge\ \frac{|\mathcal K|}{T-1}\cdot \frac{1}{|\mathcal K|}\sum_{k\in\mathcal K}(\hat z'_k)^2
\ \ge\ \rho\,\upsilon_\star.
\end{equation*}

\paragraph{Step 2 (cross term is a boundary term under centered differences).}
Using the centered stencil \eqref{eq:fd},
\begin{equation*}
\hat z'_k=\frac{\hat z_{k+1}-\hat z_{k-1}}{2\Delta t},
\end{equation*}
hence the cross term telescopes:
\begin{align*}
\sum_{k=1}^{T-1}\hat z_k\,\hat z'_k
&=\frac{1}{2\Delta t}\sum_{k=1}^{T-1}\bigl(\hat z_k\hat z_{k+1}-\hat z_k\hat z_{k-1}\bigr)\\
&=\frac{1}{2\Delta t}\Bigl(\hat z_{T-1}\hat z_T-\hat z_1\hat z_0\Bigr).
\end{align*}
Therefore
\begin{equation}
\label{eq:cross}
|G_{12}|=\Bigl|\overline{\hat z'\hat z}\Bigr|
=\frac{1}{T-1}\Bigl|\sum_{k=1}^{T-1}\hat z_k\hat z'_k\Bigr|
\ =\ \frac{1}{2(T-1)\Delta t}\Bigl(|\hat z_{T-1}\hat z_T - \hat z_1\hat z_0|\Bigr)
\ =\ \frac{C_{\mathrm{bd}}}{L}.
\end{equation}

\paragraph{Step 3 (position energy and variance floor).}
By definition,
\begin{equation*}
\overline{\hat z^2}=\frac{1}{T-1}\sum_{k=1}^{T-1}\hat z_k^2
=\frac{T+1}{T-1}\cdot\frac{1}{T+1}\sum_{k=0}^{T}\hat z_k^2\ -\ \frac{\hat z_0^2+\hat z_T^2}{T-1}.
\end{equation*}
Since
$\frac{1}{T+1}\sum_{k=0}^{T}\hat z_k^2
=\widehat{\operatorname{Var}}(\hat z)+\widehat\mu^{\,2}\ge \widehat{\operatorname{Var}}(\hat z)\ge\tau^2$,
we obtain \eqref{eq:z2-lb}.

\paragraph{Step 4 (combine).}
Combining \eqref{eq:gersh}, Step~1, \eqref{eq:cross}, and Step~3 yields
\eqref{eq:lambda-lb-main}. Combining \eqref{eq:lambda-lb-main} with
\eqref{eq:z2-lb-max} yields \eqref{eq:lambda-lb-final}.
\end{proof}

\begin{remark}[Positivity of the lower bound and sufficient window length]
With the centered stencil, the design cross term telescopes to a boundary quantity:
\begin{equation*}
G_{12}
=\frac{1}{T-1}\sum_{k=1}^{T-1}\hat z_k \hat z_k'
=\frac{1}{2L}\bigl(\hat z_{T-1}\hat z_T-\hat z_1\hat z_0\bigr),
\qquad L:=(T-1)\Delta t.
\end{equation*}
Hence, with
\(
C_{\mathrm{bd}}:=\tfrac12\bigl|\hat z_{T-1}\hat z_T-\hat z_1\hat z_0\bigr|,
\)
we have the \emph{exact} identity $|G_{12}|=C_{\mathrm{bd}}/L$.
Consequently, the explicit lower bound in~\eqref{eq:lambda-lb-final} can become vacuous
when the effective window $L$ is short or when the \emph{boundary products are mismatched},
i.e., $|\hat z_{T-1}\hat z_T-\hat z_1\hat z_0|$ is large, even though $\lambda_{\min}(G)\ge 0$ always holds.

Let $m:=\min\{\rho\,\upsilon_\star,\ c_1\tau^2\}>0$.
For any target margin fraction $\alpha\in(0,1)$, the condition
\begin{equation*}
L \;\ge\; \frac{C_{\mathrm{bd}}}{(1-\alpha)\,m}
\quad \Longrightarrow\quad
\lambda_{\min}(G)\ \ge\ \alpha\,m\;>\;0
\end{equation*}
follows immediately from $\lambda_{\min}(G)\ge m-C_{\mathrm{bd}}/L$.
In particular, choosing $\alpha=\tfrac12$ yields the simple sufficient condition
\begin{equation*}
L \;\ge\; L_0:=\frac{2\,C_{\mathrm{bd}}}{m}
\quad \Longrightarrow\quad
\lambda_{\min}(G)\ \ge\ \tfrac12\,m\;>\;0.
\end{equation*}

Thus, sufficiently long windows ensure a strictly positive conditioning margin whenever $m>0$.
In the strictly underdamped regime, the phase-agnostic choice $L\ge 2P$ in
Theorem~\ref{thm:underdamped-one-clip} guarantees level-set slope coverage (i.e., $\rho\,\upsilon_\star>0$ for a suitable interval $U$),
and increasing $L$ further shrinks the boundary correction $C_{\mathrm{bd}}/L$.
In practice, one can additionally tighten the bound by choosing/cropping windows that reduce $C_{\mathrm{bd}}$.
\end{remark}

\paragraph{Practical strategies for improving conditioning.}
Lemma~\ref{lem:conditioning} bounds the conditioning of the design Gram matrix
\(
G=\frac{1}{T-1}\sum_{k=1}^{T-1}X_k^\top X_k
\)
in terms of two \emph{in-window energy} quantities and one \emph{boundary} quantity:
\[
\lambda_{\min}(G)\ \gtrsim\ m-\frac{C_{\mathrm{bd}}}{L},
\qquad
m:=\min\{\rho\,\upsilon_\star,\ c_1\tau^2\},
\qquad
C_{\mathrm{bd}}:=\tfrac12\bigl|\hat z_{T-1}\hat z_T-\hat z_1\hat z_0\bigr|,
\quad
L:=(T-1)\Delta t.
\]
With the centered stencil,
\(
|G_{12}|=C_{\mathrm{bd}}/L
\),
so conditioning can be improved by (i) reducing the boundary mismatch $C_{\mathrm{bd}}$,
(ii) increasing the slope-energy coverage term $\rho\,\upsilon_\star$, and/or
(iii) increasing the position/variance term $c_1\tau^2$.
Below are simple, objective-preserving data-arrangement and training strategies that target each term.

\smallskip
\noindent\textbf{(I) Reduce $C_{\mathrm{bd}}$ (boundary mismatch / cross term).}
\begin{itemize}
\item \textbf{Boundary-matched window selection / cropping.}
If longer clips are available, form training windows by scanning candidate subwindows of the desired length $L$
and selecting those with small boundary mismatch
\(
|\hat z_{T-1}\hat z_T-\hat z_1\hat z_0|
\)
(equivalently, small $C_{\mathrm{bd}}$).
In oscillatory regimes, a practical heuristic is to choose endpoints near the same reference phase
(e.g., two equilibrium crossings with the same velocity sign), or simply minimize the above score over
a small set of candidate offsets.

\item \textbf{Time-shift (multi-window) augmentation from long trajectories.}
Because the ODE is translation-invariant in time, one may extract multiple subwindows (random offsets) from each long clip
and treat them as separate training samples.
When stacking multiple shifted windows, the aggregated cross term is the average of the per-window boundary mismatches,
which often partially cancels across offsets, while the diagonal terms $G_{11},G_{22}$ average nonnegative energies.
\end{itemize}

\smallskip
\noindent\textbf{(II) Increase $\rho\,\upsilon_\star$ (coverage fraction and slope energy on the covered set).}
\begin{itemize}
\item \textbf{Prefer windows with repeated crossings of the coverage interval $U$.}
Since $\rho$ is the fraction of indices with $\hat z_k\in U$, choose $U$ in regions the trajectory visits frequently
(e.g., near equilibrium for underdamped oscillations), and prefer longer windows when feasible.
In the strictly underdamped regime, the phase-agnostic rule $L\ge 2P$ (Theorem~\ref{thm:underdamped-one-clip})
guarantees such coverage for a suitable interval $U$.

\item \textbf{Avoid near-stationary windows (boost $\upsilon_\star$).}
Windows dominated by turning points have $\hat z'\approx 0$, yielding small $\upsilon_\star$ even if $\rho$ is large.
As a simple filter, discard or downweight candidate windows whose mean squared slope
\(
\frac{1}{T-1}\sum_{k=1}^{T-1}(\hat z_k')^2
\)
falls below a threshold.

\item \textbf{Exploit multi-trajectory diversity.}
If single trajectories provide poor slope separation at shared state levels, aggregate multiple trajectories (or multiple
initial conditions) so that the covered set $U$ is realized with a richer range of slopes, increasing the effective
$\rho\,\upsilon_\star$.
\end{itemize}

\smallskip
\noindent\textbf{(III) Increase $c_1\tau^2$ (prevent collapse via a variance floor).}
\begin{itemize}
\item \textbf{Maintain a nontrivial variance floor in the latent.}
The term $c_1\tau^2$ arises from the lower bound on $\overline{\hat z^2}$ induced by the variance-floor regularizer.
In practice, choose $\tau>0$ and $\lambda_{\mathrm{var}}$ large enough to prevent latent collapse
(especially in non-oscillatory regimes), while noting that overall parameter estimates are typically insensitive to the
absolute scale due to homogeneity of the governing ODE.
\end{itemize}

%% file: appendix/constant_term_g.tex
\section{Constant term $g$ (equilibrium and discrete-time intercept)}
\label{app:g-term}
This appendix serves two purposes. First, it shows that a constant forcing term $g$ can be removed by centering the state around its equilibrium when $\gamma_0\neq 0$, so the structural identifiability results for $(\gamma_1,\gamma_0)$ remain unchanged. Second, it gives the corresponding discrete-time centered-stencil AR(2) form with an intercept, which is useful if one chooses to estimate $g$ directly from sampled latents.

We emphasize that, unlike $(\gamma_1,\gamma_0)$, the constant $g$ is not identifiable from raw video up to arbitrary affine latent reparameterization, because under an affine reparameterization $\hat z = a z + b$ ($a\neq 0$), the dynamical coefficients $(\gamma_1, \gamma_0)$ are invariant while the constant transforms as $\hat g = a g - \gamma_0 b$. The inverse map below should therefore be read as recovering $g$ in a fixed latent coordinate, not as a claim that $g$ itself is identifiable; recovering the physical $g$ requires additional calibration fixing the scale and origin of the state.

\paragraph{Equilibrium shift (centering).}
If $\gamma_0\neq 0$, define $z^\star:=-g/\gamma_0$ and set $y:=z-z^\star$. Then
\begin{equation*}
y''+\gamma_1 y' + \gamma_0 y = 0,
\end{equation*}
so all homogeneous identifiability statements apply to $y$ unchanged.  

\paragraph{Centered-stencil AR(2) with intercept and inverse map.}\label{app:centered-ar2}
With the centered stencil \eqref{eq:fd}, setting the residual $r_k=0$ yields
\begin{equation*}
\begin{split}
&\Big(1+\frac{\gamma_1\Delta t}{2}\Big)\,\hat z_{k+1}\\
= &\,\big(2-\gamma_0 \Delta t^{2}\big)\,\hat z_k
- \Big(1-\frac{\gamma_1\Delta t}{2}\Big)\,\hat z_{k-1}
- \Delta t^{2}\,g.
\end{split}
\end{equation*}
Dividing by $\big(1+\frac{\gamma_1\Delta t}{2}\big)$ we obtain an AR(2) with intercept
\begin{equation*}
    \hat z_{k+1}=\alpha\,\hat z_k+\beta\,\hat z_{k-1}+c\,,
\end{equation*}
where
\begin{equation}
\begin{split}
\alpha&=\big(2-\gamma_0 \Delta t^{2}\big)/\Big({1+\frac{\gamma_1\Delta t}{2}}\Big),\\
\beta&=-\,\Big({1-\frac{\gamma_1\Delta t}{2}}\Big)/\Big({1+\frac{\gamma_1\Delta t}{2}}\Big),\\
c&=-{\Delta t^{2} g}/\Big({1+\frac{\gamma_1\Delta t}{2}}\Big).
\label{eq:centered-map}
\end{split}
\end{equation}
Conversely, letting $s:=\gamma_1\Delta t/2$ and assuming $1+s\neq 0$, we have
\begin{equation*}
\begin{split}
s&=\frac{1+\beta}{1-\beta},\\
\gamma_1&=\frac{2}{\Delta t}\,\frac{1+\beta}{1-\beta},\\
\gamma_0&=\frac{2-\alpha(1+s)}{\Delta t^{2}},\\
g&=-\,\frac{c(1+s)}{\Delta t^{2}}.
\end{split}
\end{equation*}
\noindent\textit{Time‑shift invariance.} The coefficients $(\alpha,\beta,c)$ in \eqref{eq:centered-map} depend on $\Delta t$ and $(\gamma_1,\gamma_0,g)$ but not on the unknown origin $t_0$.

\paragraph{Notes on the corner case $\gamma_0=0$.}
If $\gamma_0=0$ there is no equilibrium; the homogeneous reduction by centering is unavailable, but the multi-trajectory and three-trajectory conclusions still apply with the obvious modifications.  

%% file: appendix/experimental_settings_and_additional_results.tex
\section{Experimental settings and additional results}
\label{app:experiment-settings}

\subsection{Experimental settings}

\subsubsection{Model Architecture}
\label{sec:arch}

\paragraph{Encoder (shared across all experiments).}
We use the same \emph{per-frame} CNN encoder for both synthetic and real-world videos.
Given a grayscale clip $\boldsymbol{x}\in\mathbb{R}^{B\times T\times H\times W}$, the encoder processes each frame independently and outputs a scalar latent trajectory $z\in\mathbb{R}^{B\times T\times 1}$.
The network consists of $K{=}3$ convolutional blocks with base width 32; each block has two $3{\times}3$ convolutions followed by BatchNorm and GELU activations.
We downsample using stride~2 in the first $K{-}1$ blocks and stride~1 in the last block.
Finally, we apply global average pooling and a one-hidden-layer MLP (width 128) to map features to a scalar.

\subsubsection{Optimization and Hyperparameters}
\label{sec:hparams}

\paragraph{Initialization.}
ODE parameters are initialized as $(\gamma_1,\gamma_0)=(1,1)$.

\paragraph{Learning rates.}
We use separate learning rates for the encoder and the ODE-residual/physics terms:
\begin{itemize}
  \item Encoder learning rate: $1\times 10^{-3}$.
  \item ODE residual learning rate: $1\times 10^{-2}$.
\end{itemize}

\paragraph{Regularization and time scale.}
We set the variance-floor regularization weight $\lambda_{\mathrm{var}}=1.0$.
We use $\tau=1$ in all default experiments; when studying the effect of $\tau$, we sweep across different orders of magnitude and report results in Tab.~\ref{tab:tau_sensitivity}.

\subsection{Simulation}
\label{app:simulation-settings}

\subsubsection{Latent dynamics and rendering}

All synthetic systems render video frames from a \emph{scalar} latent state $z(t)$ governed by the homogeneous second-order LTI ODE in Eq.~\eqref{eq:hom-ode}, under four damping regimes.
We consider three video generators:
\begin{enumerate}
  \item \textbf{Pendulum} (motion): the observed pendulum angle follows Eq.~\eqref{eq:hom-ode}.
  \item \textbf{Intensity} (non-motion): the latent state controls the grayscale intensity of an irregular shape, with a normalized range $z\in[0.2,1.0]$.
  \item \textbf{Scale} (non-motion): the latent state controls the radius of a filled circle, with a normalized range $z\in[-10,10]$.
\end{enumerate}

\paragraph{Video resolution and frame rate.}
All videos are rendered at spatial resolution $H\times W = 64\times 64$.
The pendulum videos are sampled at $\mathrm{fps}=20$, while the intensity and scale videos are sampled at $\mathrm{fps}=10$.

\paragraph{Ground truth parameters, initial conditions, and clip duration.}
Ground-truth ODE parameters for each damping regime are listed in Tab.~\ref{tab:gt}, and representative frames are shown in Figs.~\ref{fig:simulation frames}, \ref{fig:pendulum_frames}.
For each experimental setting, we report the initial conditions $(z_0, z_0')$ and video duration in the corresponding appendix result tables.
Unless otherwise stated, all synthetic results are reported as mean$\pm$std over $5$ random seeds.

\begin{figure}[t]
  \centering
  \includegraphics[width=\columnwidth]{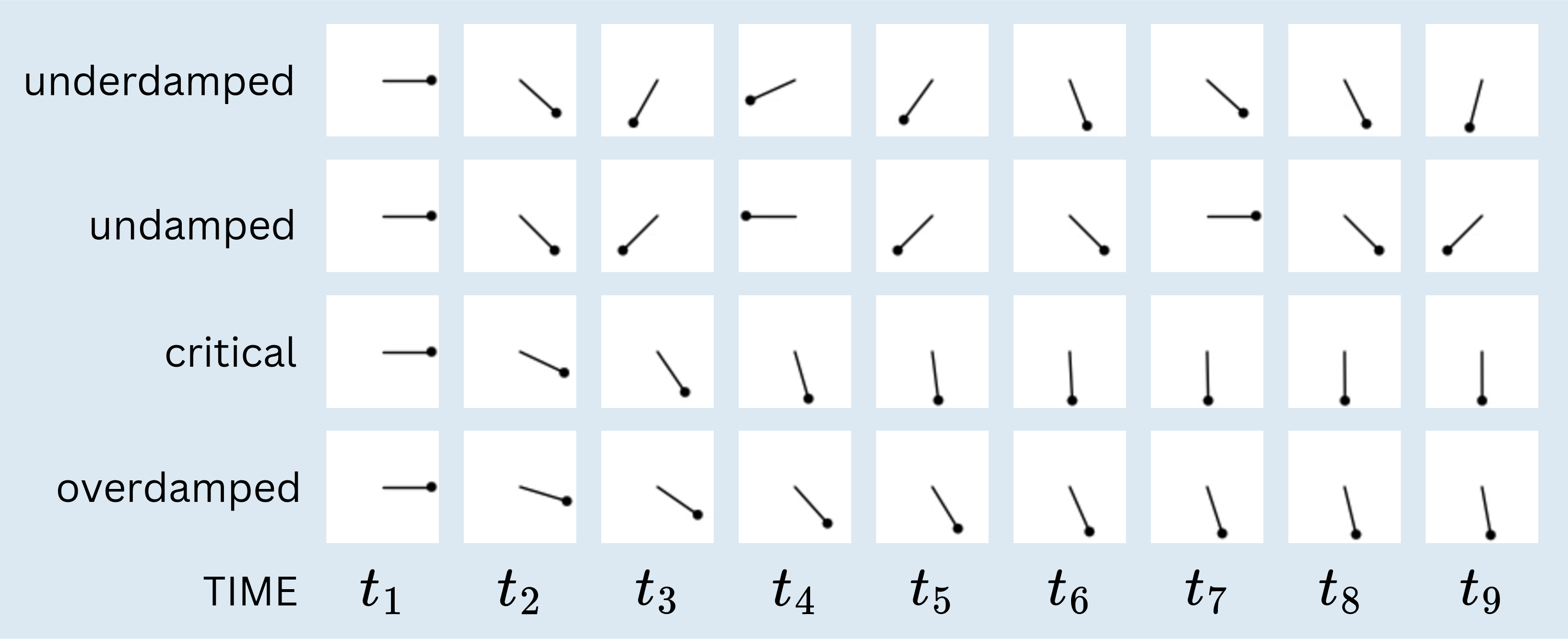}
  \caption{\textbf{Pendulum motion under four damping regimes (sample frames).}
  Read each row from left to right: the columns are equally spaced times $t_1,\ldots,t_9$.
  The black line is the pendulum rod and the dot is the bob. (The vertical downward position is the equilibrium.)
  \textbf{Underdamped:} the pendulum swings back and forth (it \emph{oscillates}); because energy is lost to damping, the swing gets smaller over time, so each new ``highest point'' is lower than the previous one.
  \textbf{Undamped:} it also swings back and forth, but without damping the swing size stays the same from cycle to cycle.
  \textbf{Critical damping:} the pendulum does \emph{not} swing past the vertical line; instead it returns to the resting position smoothly and as fast as possible without oscillating.
  \textbf{Overdamped:} it also returns without swinging back, but more slowly than the critically damped case.
  }
  \label{fig:pendulum_frames}
\end{figure}

\subsection{Real-world experiments}
\label{app:real-settings}

\paragraph{Clean pendulum benchmark (\citet{garcia2025learning})}
For the clean pendulum example, we downsample the original frames to $192\times108$ and use $\mathrm{fps}=30$.
For each string length, we estimate parameters on $5$ different videos and report the mean$\pm$std across these videos.

\paragraph{Wheel-mounted phone pendulum (\citet{monteiro2014exploring})}
For the mounted pendulum example, we downsample the original frames to $120\times120$ and use $\mathrm{fps}=10$.
Results are reported as mean$\pm$std over $5$ random seeds.

\subsubsection{Clean pendulum benchmark (\citet{garcia2025learning})}
\label{app:gt_pendulum_lpfv}

\begin{figure*}[h]
  \centering
  \includegraphics[width=\textwidth]{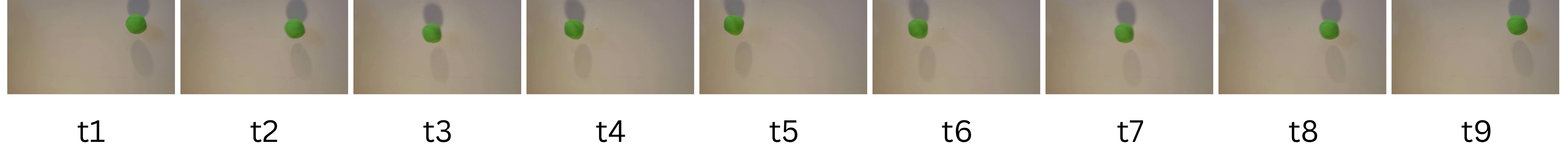}
  \caption{
  \textbf{LPFV \cite{garcia2025learning} real-video pendulum: sampled frames.}
  We show sequential frames sampled at equally spaced time steps from a representative clip in the $L=0.45$\,m setting (left to right, increasing time).
  The benchmark uses a static camera and a clean background to reduce exogenous variation, consistent with the state-consistency assumption discussed in the main text.
  }
  \label{fig:appendix_frames_pendulum_lpfv}
\end{figure*}

\paragraph{Pendulum dynamics and ground-truth parameters}
For small swing angles, the pendulum motion can be approximated by the linear ODE in Eq.~\eqref{eq:hom-ode}.
Under this approximation, the stiffness parameter satisfies
\[
\gamma_0^{\star} = \frac{g}{L}.
\]
Using $g = 9.81~\mathrm{m/s^2}$, this yields
\begin{equation*}
\begin{split}
 \gamma_0^{\star} \approx 21.8~\mathrm{s^{-2}} \ \text{for } L=0.45~\mathrm{m},\\
\gamma_0^{\star} \approx 10.9~\mathrm{s^{-2}} \ \text{for } L=0.90~\mathrm{m},\\
\gamma_0^{\star} \approx 6.54~\mathrm{s^{-2}} \ \text{for } L=1.50~\mathrm{m}.   
\end{split}   
\end{equation*}

We do not report a physical ground truth for $\gamma_1$ because it aggregates multiple dissipation sources (e.g., pivot friction and air drag) and typically requires additional calibration; in our evaluation $\gamma_1$ is treated as an effective parameter estimated from video.

\paragraph{Additional discussion: Why do we outperform LPFV?}
\label{app:lpfv_comparison}

Both LPFV \cite{garcia2025learning} and our method are encoder-only and constrain the latent trajectory by physical laws, aligning with the setting emphasized by our identifiability analysis in oscillatory (underdamped) regimes.
Here we refine the comparison by focusing on (i) the discrete-time derivative construction used inside the ODE residual and (ii) the regularization used to prevent latent collapse.

\paragraph{(i) Discrete derivatives: Euler / one-sided stencils vs.\ centered stencils.}
Let $\Delta t$ be the frame interval and let $\{\hat z_k\}_{k=0}^{T}$ denote the scalar latent predicted by the encoder at times $t_k=k\Delta t$.
LPFV constructs the physics constraint using an Euler-style update and one-sided (consecutive-frame) differences.
In particular, it uses the first derivative approximation
\[
\hat z'_k \approx \frac{\hat z_{k+1}-\hat z_k}{\Delta t}
\qquad \text{or} \qquad
\hat z'_k \approx \frac{\hat z_k-\hat z_{k-1}}{\Delta t},
\]
which is first-order accurate:
\[
\hat z'_k = z'(t_k) + \mathcal{O}(\Delta t).
\]
A consistent second-derivative approximation induced by one-sided differences is, for example,
\[
\hat z''_k \approx \frac{\hat z_{k+1}-2\hat z_k+\hat z_{k-1}}{\Delta t^2},
\]
but when combined with an Euler / one-step formulation the effective residual is still governed by a first-order discretization in $\Delta t$.

In contrast, our objective explicitly uses centered finite differences for both first and second derivatives:
\[
\hat z'_k = \frac{\hat z_{k+1}-\hat z_{k-1}}{2\Delta t},
\qquad
\hat z''_k = \frac{\hat z_{k+1}-2\hat z_k+\hat z_{k-1}}{\Delta t^2}.
\]
These centered stencils are second-order accurate:
\[
\hat z'_k = z'(t_k) + \mathcal{O}(\Delta t^2),
\qquad
\hat z''_k = z''(t_k) + \mathcal{O}(\Delta t^2).
\]
Since both methods ultimately estimate parameters by minimizing a discrete ODE residual, higher-order derivative estimates can reduce discretization bias and mitigate noise amplification from differencing, which is particularly relevant in real videos where $\hat z_k$ is noisy.
This is a direct, plausible mechanism for our consistently lower error compared with LPFV under the same benchmark protocol.

\paragraph{(ii) Regularization to prevent collapse: variance-floor vs.\ KL prior.}
Both approaches include explicit regularization to avoid degenerate latents.
LPFV uses a KL divergence term encouraging a Gaussian prior, typically $\mathcal{N}(0,1)$, while we use a variance-floor penalty that enforces a minimum within-clip latent variance.
Importantly, our synthetic ablation (Tab.~\ref{tab:loss_ablation_pendulum}) shows that in the underdamped case \emph{both} regularizers yield strong recovery, suggesting that the performance gap on the real-video pendulum benchmark is unlikely to be driven solely by the choice of anti-collapse regularizer.
Instead, the discretization/derivative construction in (i) provides a more direct explanation consistent with the empirical trend.

\begin{remark}
We emphasize that the above factors are meant to capture the most salient and theoretically grounded differences.
Other implementation details (e.g., preprocessing and optimization hyperparameters) may further contribute, but the discretization order and the resulting numerical properties of the ODE residual are the most immediate mechanisms affecting single-trajectory regression from video.
\end{remark}

\subsubsection{Wheel-mounted phone pendulum (\citet{monteiro2014exploring})}
\label{app:gt_vertical}

\begin{figure*}[h]
  \centering
  \includegraphics[width=\textwidth]{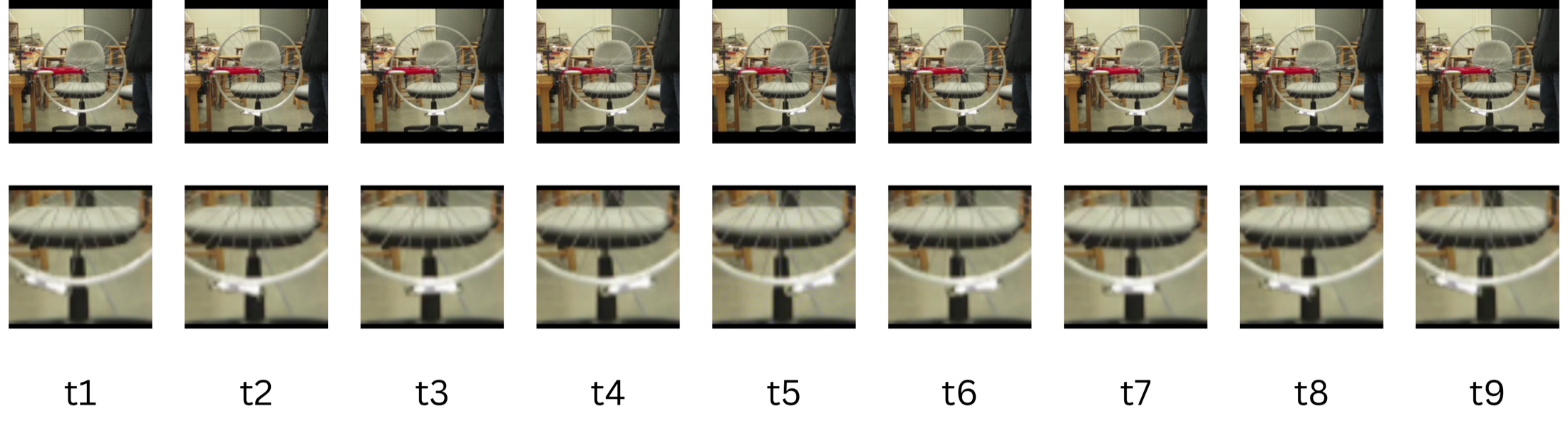}
  \caption{
  \textbf{Sampled frames: wheel-mounted phone pendulum (noisy background).}
  A smartphone attached near the rim of a bicycle wheel swings in a vertical plane, forming a physical pendulum.
  We show sequential frames sampled at equally spaced time steps from a clip (left to right, increasing time).
  The first row shows the raw capture; the second row shows the cropped/resized version used in our experiments.
  }
  \label{fig:appendix_frames_vertical}
\end{figure*}

\paragraph{Model and mapping to identification parameters.}
In this setup, a smartphone of mass $m$ is attached near the rim of a bicycle wheel, producing a physical pendulum about the wheel axis.
Let $\theta(t)$ denote the angular displacement (radians), and let $\omega(t)=\dot{\theta}(t)$.
A rigid-body model can be written as
\[
I\,\dot{\omega}(t) = -R\,m g \sin\theta(t) + \tau_{\mathrm{diss}}(\omega,t),
\]
where $R$ is the offset from the wheel center to the phone center of mass, $I$ is the total rotational inertia of the wheel-plus-phone system, and $\tau_{\mathrm{diss}}$ summarizes non-conservative torques (e.g., rolling resistance / bearing friction, and potentially air drag).
In the notation used in our apparatus notes, the dissipation term may be represented as $Mr$; here we keep the more explicit functional form $\tau_{\mathrm{diss}}(\omega,t)$ to emphasize that it can depend on velocity and time.

For small oscillations, we use the standard approximation
\[
\sin\theta(t) \approx \theta(t),
\]
which yields
\[
I\,\theta''(t) + R\,m g\,\theta(t) = \tau_{\mathrm{diss}}(\theta'(t),t).
\]
To connect this physical model to our identification ODE in Eq.~\eqref{eq:hom-ode}, we introduce an \emph{effective linear damping} approximation over the clip duration:
\[
\tau_{\mathrm{diss}}(\theta'(t),t) \approx -c\,\theta'(t),
\]
where $c\ge 0$ is an unknown effective damping coefficient that aggregates multiple dissipation sources.
Substituting this approximation gives the linear ODE
\[
I\,\theta''(t) + c\,\theta'(t) + R\,m g\,\theta(t) = 0.
\]
Dividing by $I$ yields
\[
\theta''(t) + \frac{c}{I}\,\theta'(t) + \frac{m g R}{I}\,\theta(t) = 0.
\]

Matching the linearized dynamics to our identification model in Eq.~\eqref{eq:hom-ode},
\[
z''(t) + \gamma_1 z'(t) + \gamma_0 z(t)=0,
\]
we obtain
\[
\gamma_0^{\star} = \frac{m g R}{I},
\qquad
\gamma_1 \approx \frac{c}{I}.
\]
We emphasize that $\gamma_1$ here is an \emph{effective} damping parameter: it depends on the validity of the approximation $\tau_{\mathrm{diss}}(\theta',t)\approx -c\,\theta'$ over the chosen clip duration.

\paragraph{Ground truth and interpretation.}
We compute the total inertia as
\[
I = I_w + m R^2,
\]
where $I_w$ is the wheel inertia.
Using the measured setup parameters
\[
R = 0.300~\mathrm{m},
\qquad
m = 0.146~\mathrm{kg},
\qquad
I_w = 0.0388~\mathrm{kg\cdot m^2},
\]
we obtain
\[
I = 0.0388 + 0.146\times(0.300)^2 \approx 0.0520~\mathrm{kg\cdot m^2}.
\]
Therefore,
\[
\gamma_0^{\star} = \frac{m g R}{I}
= \frac{0.146 \times 9.81 \times 0.300}{0.0520}
\approx 8.26~\mathrm{s^{-2}}.
\]


Unlike $\gamma_0^{\star}$, which is determined by geometry and inertia, the damping parameter depends on non-conservative effects (bearing friction, rolling resistance, air drag) that are not uniquely specified by the apparatus geometry and may vary across time scales.
Moreover, these effects are not necessarily well-described by a constant viscous torque $-c\,\theta'$ without additional instrumentation and calibration.
Accordingly, we do not report a physical ground truth for $\gamma_1$ and treat it as an effective parameter estimated from video.

\paragraph{Interpreting estimation error.}
Our measured estimate $\hat{\gamma}_0$ can differ from $\gamma_0^{\star}$ due to (i) the small-angle approximation $\sin\theta\approx \theta$, (ii) deviations of $\tau_{\mathrm{diss}}$ from a constant linear damping model over the clip duration, and (iii) discretization and measurement noise because derivatives are estimated from pixel observations.

\subsection{State consistency stress test}
\label{app:state_consistency_stress}

The goal of this section is to characterize the practical boundary of the state-consistency assumption (Def.~\ref{def:state-consistency}). Since this assumption cannot be directly verified from raw pixels, we instead probe empirically how far the finite-sample estimator tolerates deviations from the exact-theorem regime. We keep the underdamped pendulum ODE fixed and perturb only the observation model with four categories of exogenous nuisance---Background Noise, Moving Clutter, Camera Jitter, and Brightness---each at three severity levels (L1--L3). Sample frames from the perturbed videos are shown in Fig.~\ref{fig:ablation_frames}; the corresponding per-seed parameter estimates are reported in Tab.~\ref{tab:ablation_detailed}; per-category sample frames for all five seeds are shown in Figs.~\ref{fig:ablation_bg_noise_full}--\ref{fig:ablation_brightness_full} (each row shows 10 equally spaced frames from $t=0$ to $t=1.5\pi$ for one seed).

\begin{figure}[h]
  \centering
  \includegraphics[width=0.91\textwidth]{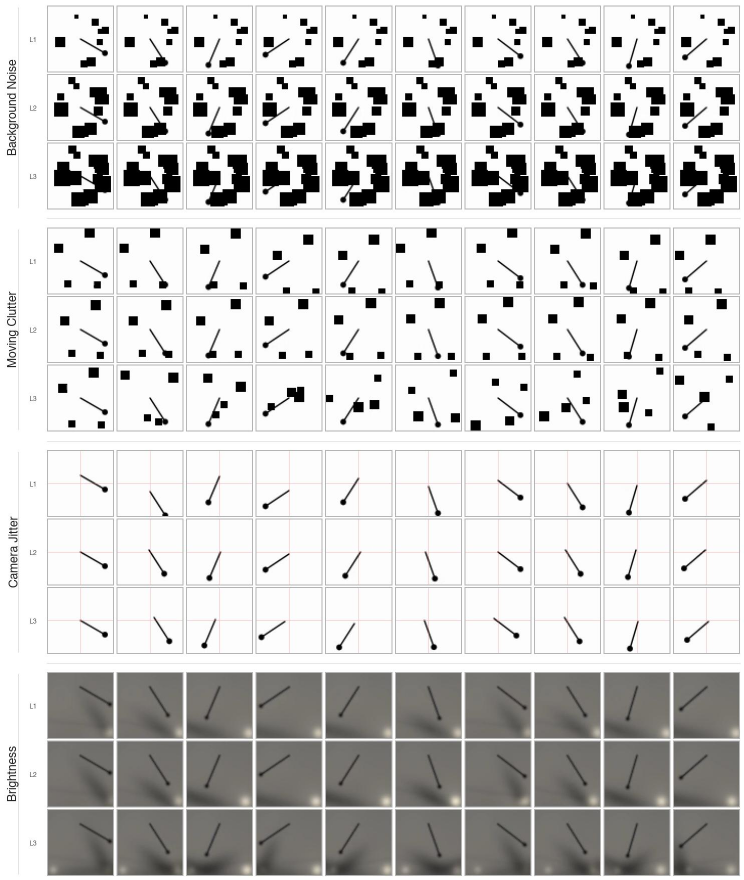}
  \caption{State consistency stress test: sample frames from the ablation videos (10 equally spaced frames from $t=0$ to $t=1.5\pi$; seed 1 shown).
  Each row corresponds to one severity level (L1--L3) of a visual nuisance category.
  The red crosshairs overlaid on the Camera Jitter rows are visualization-only reference lines (not present in the actual training videos) that mark the original frame center, making it easier to see that the pendulum is no longer centered after the simulated camera shake.
  \textbf{Background Noise:} static black squares of random sizes are overlaid on the background.
  L1: 8 squares (4--10\,px); L2: 12 squares (6--14\,px); L3: 16 squares (7--16\,px).
  \textbf{Moving Clutter:} 4 black squares (4--10\,px) move as distractors.
  L1: all squares follow a coordinated sinusoidal trajectory ($\pm10$\,px horizontal, $\pm6$\,px vertical);
  L2: each square moves independently along its own smooth sinusoidal path ($\pm3$\,px horizontal, $\pm1.6$\,px vertical);
  L3: each square bounces independently in random directions at 1.6\,px/frame.
  \textbf{Camera Jitter:} the entire frame is shifted to simulate camera shake (exposed regions filled white).
  L1: smooth vertical oscillation ($\pm8$\,px, 7 full cycles);
  L2: 2D random walk (max $\pm3/\pm2$\,px, 72\% step probability, 45\% directional momentum);
  L3: aggressive 2D random walk (max $\pm5/\pm4$\,px, 88\% step probability) with 18\% chance of sudden 2--4\,px burst jumps.
  \textbf{Brightness:} the scene is rendered with pseudo-3D lighting.
  L1: 1 light source with fixed position and fixed intensity;
  L2: 1 light source whose position and intensity both vary smoothly over time;
  L3: 2 independent light sources, each with smoothly varying position and intensity.
  }
  \label{fig:ablation_frames}
\end{figure}

\definecolor{PoorRed}{HTML}{FA7F6F}
\definecolor{SumGray}{gray}{0.93}

\begin{table}[h]
    \centering
\caption{Detailed per-seed results for the state consistency stress test (underdamped pendulum).
Ground truth: $(\gamma_0,\gamma_1)=(4.0016,0.08)$.
The baseline (no nuisance) estimate from clean videos is $\hat{\gamma}_0=4.0081 \pm 0.0003$ and $\hat{\gamma}_1=0.0788 \pm 0.0005$.
Entries in \textcolor{PoorRed}{red} indicate individual runs that deviated noticeably from the ground truth.
The rightmost columns report the mean $\pm$ std across the 5 seeds.
For Brightness Level~2, one of the five seeds converged to a poor local optimum;
excluding it, the remaining four seeds yield $\hat{\gamma}_0=4.0070 \pm 0.0114$ and $\hat{\gamma}_1=0.0680 \pm 0.0111$.
See Fig.~\ref{fig:ablation_frames} for sample frames (seed 1 shown).}
    \label{tab:ablation_detailed}
    \setlength{\tabcolsep}{2.6pt}
    \begin{scriptsize}
    \begin{sc}
    \begin{tabular}{llcccccccccc>{\columncolor{SumGray}}c>{\columncolor{SumGray}}c}
        \toprule
        & & \multicolumn{2}{c}{Seed 1} & \multicolumn{2}{c}{Seed 2} & \multicolumn{2}{c}{Seed 3} & \multicolumn{2}{c}{Seed 4} & \multicolumn{2}{c}{Seed 5} & \multicolumn{2}{>{\columncolor{SumGray}}c}{Mean $\pm$ Std} \\
        \cmidrule(lr){3-4} \cmidrule(lr){5-6} \cmidrule(lr){7-8} \cmidrule(lr){9-10} \cmidrule(lr){11-12} \cmidrule(lr){13-14}
        Category & Level & $\hat\gamma_0$ & $\hat\gamma_1$ & $\hat\gamma_0$ & $\hat\gamma_1$ & $\hat\gamma_0$ & $\hat\gamma_1$ & $\hat\gamma_0$ & $\hat\gamma_1$ & $\hat\gamma_0$ & $\hat\gamma_1$ & $\hat\gamma_0$ & $\hat\gamma_1$ \\
        \midrule
        \multirow{3}{*}{Bg-Noise}
            & L1 & 4.0090 & 0.0785 & 4.0084 & 0.0815 & 4.0033 & 0.0796 & 4.0022 & 0.0801 & 4.0026 & 0.0804 & 4.0051 $\pm$ 0.0033 & 0.0800 $\pm$ 0.0011 \\
            & L2 & 4.0025 & 0.0807 & 4.0029 & 0.0811 & 4.0081 & 0.0811 & 4.0077 & 0.0792 & 4.0029 & 0.0805 & 4.0048 $\pm$ 0.0028 & 0.0805 $\pm$ 0.0008 \\
            & L3 & 4.0039 & 0.0813 & 4.0039 & 0.0806 & 4.0028 & 0.0813 & 4.0049 & 0.0810 & 4.0042 & 0.0817 & 4.0039 $\pm$ 0.0008 & 0.0812 $\pm$ 0.0004 \\
        \midrule
        \multirow{3}{*}{Mv-Clutter}
            & L1 & 4.0072 & 0.0798 & 4.0112 & 0.0769 & 4.0061 & 0.0793 & 4.0070 & 0.0796 & 4.0132 & 0.0790 & 4.0089 $\pm$ 0.0031 & 0.0789 $\pm$ 0.0012 \\
            & L2 & 4.0082 & 0.0812 & 3.9988 & 0.0801 & 3.9926 & 0.0871 & 4.0136 & 0.0768 & 4.0137 & 0.0696 & 4.0054 $\pm$ 0.0094 & 0.0790 $\pm$ 0.0064 \\
            & L3 & 4.0011 & 0.0559 & \textcolor{PoorRed}{2.4322} & \textcolor{PoorRed}{$-$2.6736} & \textcolor{PoorRed}{3.8276} & \textcolor{PoorRed}{$-$0.1967} & \textcolor{PoorRed}{3.9565} & \textcolor{PoorRed}{$-$0.2040} & \textcolor{PoorRed}{2.9499} & \textcolor{PoorRed}{0.1525} & \textcolor{PoorRed}{3.4335 $\pm$ 0.7049} & \textcolor{PoorRed}{$-$0.5732 $\pm$ 1.1845} \\
        \midrule
        \multirow{3}{*}{Cam-Jitter}
            & L1 & 4.0025 & 0.0801 & 4.0024 & 0.0807 & 4.0030 & 0.0796 & 4.0019 & 0.0794 & 4.0021 & 0.0796 & 4.0024 $\pm$ 0.0004 & 0.0799 $\pm$ 0.0005 \\
            & L2 & 4.0034 & 0.0803 & 4.0022 & 0.0799 & 4.0026 & 0.0794 & 4.0053 & 0.0798 & 4.0020 & 0.0798 & 4.0031 $\pm$ 0.0013 & 0.0798 $\pm$ 0.0003 \\
            & L3 & 4.0021 & 0.0805 & 4.0031 & 0.0799 & 4.0020 & 0.0795 & 4.0031 & 0.0804 & 4.0025 & 0.0783 & 4.0026 $\pm$ 0.0005 & 0.0797 $\pm$ 0.0009 \\
        \midrule
        \multirow{3}{*}{Brightness}
            & L1 & 4.0116 & 0.0776 & 4.0107 & 0.0788 & 4.0108 & 0.0779 & 4.0088 & 0.0782 & 4.0090 & 0.0773 & 4.0102 $\pm$ 0.0012 & 0.0780 $\pm$ 0.0006 \\
            & L2 & \textcolor{PoorRed}{3.4615} & \textcolor{PoorRed}{$-$3.0378} & 4.0093 & 0.0521 & 3.9902 & 0.0758 & 4.0140 & 0.0684 & 4.0145 & 0.0755 & \textcolor{PoorRed}{3.8979 $\pm$ 0.2442} & \textcolor{PoorRed}{$-$0.5532 $\pm$ 1.3890} \\
            & L3 & 4.0228 & 0.0632 & 3.9969 & \textcolor{PoorRed}{0.0404} & 4.0256 & \textcolor{PoorRed}{0.0441} & 4.0050 & \textcolor{PoorRed}{0.0444} & 4.0020 & \textcolor{PoorRed}{0.0011} & 4.0105 $\pm$ 0.0129 & \textcolor{PoorRed}{0.0386 $\pm$ 0.0228} \\
        \bottomrule
    \end{tabular}
    \end{sc}
    \end{scriptsize}
\end{table}

\paragraph{Findings.}
Background noise and camera jitter remain robust across all three severities: estimates stay close to the no-nuisance baseline. Moving clutter is tolerated at weak and moderate levels (L1, L2) but breaks at L3, with most seeds diverging. Brightness is mostly tolerable at L1 but unstable at higher severities: at L2, one of the five seeds converges to a poor local optimum (excluding it, the remaining four yield $\hat{\gamma}_0=4.0070 \pm 0.0114$ and $\hat{\gamma}_1=0.0680 \pm 0.0111$); at L3, $\hat{\gamma}_1$ is consistently biased downward, although $\hat{\gamma}_0$ remains accurate. Overall, the picture is not all-or-nothing: the finite-sample estimator retains nontrivial robustness to modest violations of Def.~\ref{def:state-consistency}, while stronger time-varying nuisance does break recovery.

\begin{figure}[h]
  \centering
  \includegraphics[width=0.93\textwidth]{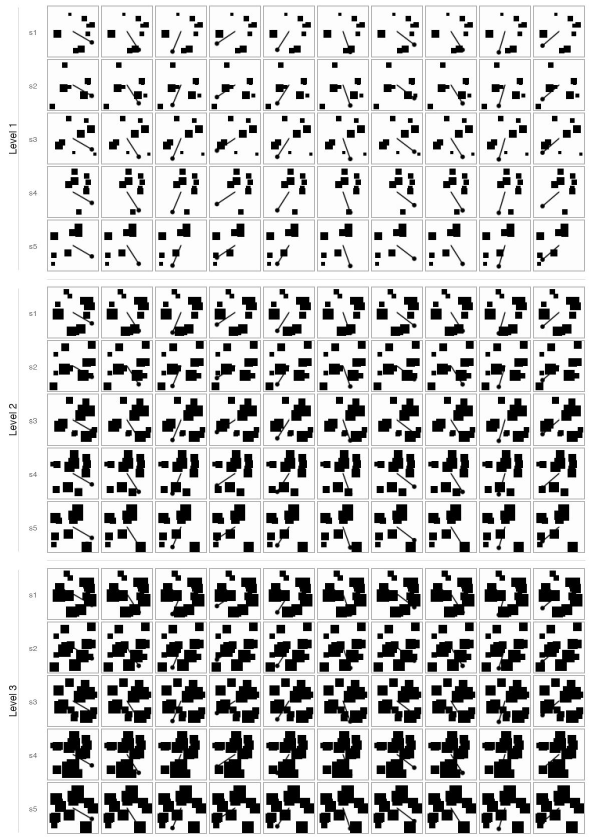}
  \caption{Background Noise ablation: all seeds (s1--s5) across Levels 1--3. See Fig.~\ref{fig:ablation_frames} for level definitions.}
  \label{fig:ablation_bg_noise_full}
\end{figure}

\begin{figure}[h]
  \centering
  \includegraphics[width=0.93\textwidth]{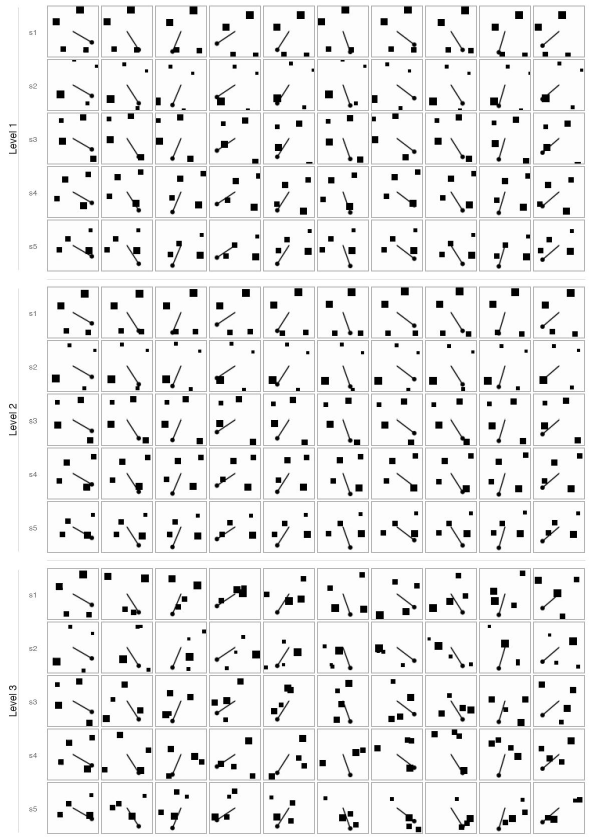}
  \caption{Moving Clutter ablation: all seeds (s1--s5) across Levels 1--3. See Fig.~\ref{fig:ablation_frames} for level definitions.}
  \label{fig:ablation_mv_clutter_full}
\end{figure}

\begin{figure}[h]
  \centering
  \includegraphics[width=0.93\textwidth]{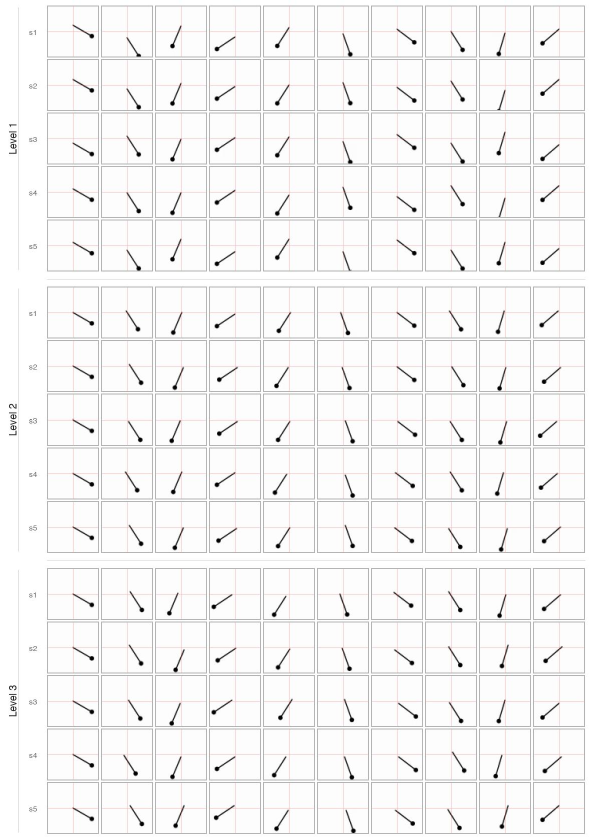}
  \caption{Camera Jitter ablation: all seeds (s1--s5) across Levels 1--3. See Fig.~\ref{fig:ablation_frames} for level definitions.}
  \label{fig:ablation_cam_jitter_full}
\end{figure}

\begin{figure}[h]
  \centering
  \includegraphics[width=0.93\textwidth]{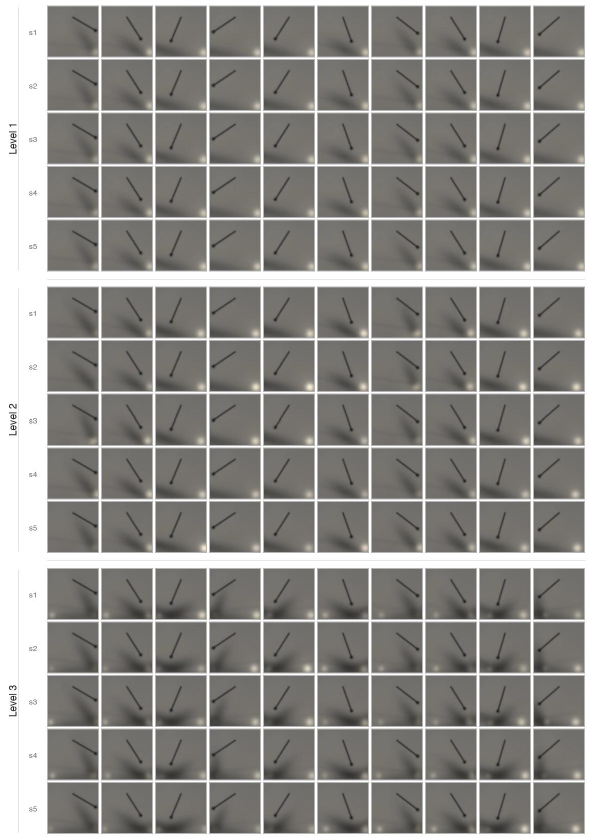}
  \caption{Brightness ablation: all seeds (s1--s5) across Levels 1--3. See Fig.~\ref{fig:ablation_frames} for level definitions.}
  \label{fig:ablation_brightness_full}
\end{figure}

\clearpage
\subsection{Additional simulation results}
\label{app:additional results}
In this section, we report additional simulation results.

\begin{table}[h]
  \caption{\textbf{Pendulum underdamped case}: results of estimated parameters from \textbf{one} single trajectory. \textcolor{UnidentRed}{For $T=0.5\pi$, all settings are unidentifiable}; \textcolor{IdentBlue}{for $T=1.5\pi$ and $T=2.5\pi$, all settings
  are identifiable}.}
  \label{tab:underdamped-1traj}
  \centering
  \begin{small}
  \begin{sc}
  \begin{tabular}{cccc}
    \toprule
    Initial value & Time (s) & $\hat\gamma_0$ & $\hat\gamma_1$ \\
    \midrule
    \multirow{3}{*}{$(50,0)$}
      & $0.5\pi$ & $4.6187 \pm 0.1080$  & $-0.7503 \pm 0.4279$ \\
      & $1.5\pi$ & $4.0011 \pm 0.0003$ & $0.0755 \pm 0.0005$ \\
      & $2.5\pi$ & $3.9957 \pm 0.0003$ & $0.0822 \pm 0.0003$ \\
    \midrule
    \multirow{3}{*}{$(60,0)$}
      & $0.5\pi$ & $4.5260 \pm 0.0902$  & $-0.3445 \pm 0.2945$ \\
      & $1.5\pi$ & $4.0017 \pm 0.0009$ & $0.0847 \pm 0.0004$ \\
      & $2.5\pi$ & $4.0081 \pm 0.0003$ & $0.0788 \pm 0.0005$ \\
    \midrule
    \multirow{3}{*}{$(70,0)$}
      & $0.5\pi$ & $5.0615 \pm 0.2923$ & $-0.9726 \pm 0.4321$ \\
      & $1.5\pi$ & $4.0183 \pm 0.0011$ & $0.0797 \pm 0.0003$ \\
      & $2.5\pi$ & $4.0037 \pm 0.0003$ & $0.0800 \pm 0.0003$ \\
    \midrule
    \multicolumn{2}{l}{Ground truth (all)} & $4.0016$ & $0.08$ \\
    \bottomrule
  \end{tabular}
  \end{sc}
  \end{small}
\end{table}

\begin{table}[h]
  \caption{\textbf{Pendulum undamped case}: results of estimated parameters from \textbf{one} single trajectory. \textcolor{UnidentRed}{For $T=0.5\pi$, all settings are unidentifiable}; \textcolor{IdentBlue}{for $T=1.5\pi$ and $T=2.5\pi$, all settings
  are identifiable}.}
  \label{tab:undamped-1traj}
  \centering
  \begin{small}
  \begin{sc}
  \begin{tabular}{cccc}
    \toprule
    Initial value & Time (s) & $\hat\gamma_0$ & $\hat\gamma_1$ \\
    \midrule
    \multirow{3}{*}{$(50,0)$}
      & $0.5\pi$ & $4.7613 \pm 0.2621$ & $-0.9288 \pm 0.5538$ \\
      & $1.5\pi$ & $4.0162 \pm 0.0006$ & $-0.0001 \pm 0.0001$ \\
      & $2.5\pi$ & $4.0043 \pm 0.0009$ & $0.0003 \pm 0.0005$ \\
    \midrule

    \multirow{3}{*}{$(60,0)$}
      & $0.5\pi$ & $4.5404 \pm 0.1089$ & $-0.5516 \pm 0.3015$ \\
      & $1.5\pi$ & $4.0249 \pm 0.0020$ & $-0.0012 \pm 0.0004$ \\
      & $2.5\pi$ & $4.0024 \pm 0.0008$ & $0.0017 \pm 0.0004$ \\
    \midrule

    \multirow{3}{*}{$(70,0)$}
      & $0.5\pi$ & $5.2246 \pm 0.3527$ & $-1.0348 \pm 0.6159$ \\
      & $1.5\pi$ & $4.0191 \pm 0.0007$ & $0.0003 \pm 0.0002$ \\
      & $2.5\pi$ & $4.0032 \pm 0.0010$ & $0.0002 \pm 0.0004$ \\
    \midrule

    \multicolumn{2}{l}{Ground truth (all)} & $4$ & $0$ \\
    \bottomrule
  \end{tabular}
  \end{sc}
  \end{small}
\end{table}

\begin{table}[h]
  \caption{\textbf{Pendulum critical case}: results of estimated parameters from \textbf{one} single trajectory (\textcolor{UnidentRed}{all unidentifiable}).}
  \label{tab:critical-1traj}
  \centering
  \begin{small}
  \begin{sc}
  \begin{tabular}{cccc}
    \toprule
    Initial value & Time (s) & $\hat\gamma_0$ & $\hat\gamma_1$ \\
    \midrule
    \multirow{3}{*}{$(50,0)$}
      & 1  & $8.3073 \pm 0.3260$ & $0.5131 \pm 0.2225$ \\
      & 2  & $2.5257 \pm 0.1637$ & $0.7948 \pm 0.4642$ \\
      & 3  & $1.3892 \pm 0.0395$ & $0.8086 \pm 0.0670$ \\
    \midrule
    \multirow{3}{*}{$(60,0)$}
      & 1  & $6.6972 \pm 0.2655$ & $0.5538 \pm 0.1308$ \\
      & 2  & $3.1256 \pm 0.3415$ & $1.7637 \pm 0.4217$ \\
      & 3  & $1.3927 \pm 0.1343$ & $0.7721 \pm 0.4944$ \\
    \midrule
    \multirow{3}{*}{$(70,0)$}
      & 1  & $6.7580 \pm 0.5776$ & $0.8701 \pm 0.3439$ \\
      & 2  & $3.0945 \pm 0.1550$ & $1.5450 \pm 0.2624$ \\
      & 3  & $1.3124 \pm 0.1115$ & $0.5736 \pm 0.3260$ \\
    \midrule
    \multicolumn{2}{l}{Ground truth (all)} & $4$ & $4$ \\
    \bottomrule
  \end{tabular}
  \end{sc}
  \end{small}
\end{table}

\begin{table}[h]
  \caption{\textbf{Pendulum overdamped case}: results of estimated parameters from \textbf{one} single
  trajectory (\textcolor{UnidentRed}{all unidentifiable}).}
  \label{tab:overdamped-1traj}
  \centering
  \begin{small}
  \begin{sc}
  \begin{tabular}{cccc}
    \toprule
    Initial value & Time (s) & $\hat{\gamma}_0$ & $\hat{\gamma}_1$ \\
    \midrule
    \multirow{3}{*}{(50,0)}
      & 1  & $8.4446 \pm 0.5896$ & $1.0713 \pm 0.2950$ \\
      & 2  & $2.3675 \pm 0.0591$ & $0.9689 \pm 0.2481$ \\
      & 3  & $1.3218 \pm 0.1109$ & $0.9170 \pm 0.2339$ \\
    \midrule
    \multirow{3}{*}{(60,0)}
      & 1  & $6.7109 \pm 0.4008$ & $0.8854 \pm 0.4445$ \\
      & 2  & $3.3553 \pm 0.2659$ & $1.9423 \pm 0.2899$ \\
      & 3  & $1.5812 \pm 0.1313$ & $1.2932 \pm 0.2085$ \\
    \midrule
    \multirow{3}{*}{(70,0)}
      & 1  & $6.5106 \pm 0.6147$ & $0.8947 \pm 0.6298$ \\
      & 2  & $2.8975 \pm 0.2008$ & $1.4462 \pm 0.2637$ \\
      & 3  & $1.6166 \pm 0.1250$ & $1.4672 \pm 0.1912$ \\
    \midrule
    \multicolumn{2}{l}{Ground truth (all)} & $4$ & $5$ \\
    \bottomrule
  \end{tabular}
  \end{sc}
  \end{small}
\end{table}

\begin{table}[h]
  \caption{\textbf{Critical case}: results of estimated parameters from \textbf{three}
  trajectories under different initial values.}
  \label{tab:critical-3traj}
  \centering
  \begin{small}
  \begin{sc}
  \begin{tabular}{c l c c c}
    \toprule
    Case & Initial value & Time (s) & $\hat{\gamma}_0$ & $\hat{\gamma}_1$ \\
    \midrule

    \multirow{12}{*}{\rotatebox[origin=c]{90}{\textcolor{IdentBlue}{Identifiable}}} &
    (50,-200) & 2 & \multirow{3}{*}{$3.9946 \pm 0.0033$} &
                    \multirow{3}{*}{$3.9891 \pm 0.0044$} \\
    & (50,-600)  & 2 & & \\
    & (50,-1000) & 2 & & \\
    \cmidrule(lr){2-5}

    & (60,-200) & 2 & \multirow{3}{*}{$3.9633 \pm 0.0032$} &
                      \multirow{3}{*}{$3.9908 \pm 0.0026$} \\
    & (60,-600)  & 2 & & \\
    & (60,-1000) & 2 & & \\
    \cmidrule(lr){2-5}

    & (70,-200) & 2 & \multirow{3}{*}{$4.0218 \pm 0.0040$} &
                      \multirow{3}{*}{$4.0352 \pm 0.0017$} \\
    & (70,-600)  & 2 & & \\
    & (70,-1000) & 2 & & \\
    \midrule

    \multirow{12}{*}{\rotatebox[origin=c]{90}{\textcolor{UnidentRed}{Unidentifiable}}} &
    (50,0)   & 2 & \multirow{3}{*}{$2.4572 \pm 0.0299$} &
                    \multirow{3}{*}{$2.9322 \pm 0.0366$} \\
    & (50,-100) & 2 & & \\
    & (-50,100) & 2 & & \\
    \cmidrule(lr){2-5}

    & (60,0)   & 2 & \multirow{3}{*}{$2.4919 \pm 0.1090$} &
                    \multirow{3}{*}{$3.0487 \pm 0.0815$} \\
    & (60,-100) & 2 & & \\
    & (-60,100) & 2 & & \\
    \cmidrule(lr){2-5}

    & (70,0)   & 2 & \multirow{3}{*}{$2.3462 \pm 0.0437$} &
                    \multirow{3}{*}{$2.7642 \pm 0.0787$} \\
    & (70,-100) & 2 & & \\
    & (-70,100) & 2 & & \\
    \midrule

    \multicolumn{3}{l}{Ground truth (all)} & 4 & 4 \\
    \bottomrule
  \end{tabular}
  \end{sc}
  \end{small}
\end{table}

\begin{table}[h]
  \caption{\textbf{Overdamped case}: results of estimated parameters from \textbf{three}
  trajectories under different initial values.}
  \label{tab:overdamped-3traj}
  \centering
  \begin{small}
  \begin{sc}
  \begin{tabular}{c l c c c}
    \toprule
    Case & Initial value & Time (s) & $\hat{\gamma}_0$ & $\hat{\gamma}_1$ \\
    \midrule

    \multirow{12}{*}{\rotatebox[origin=c]{90}{\textcolor{IdentBlue}{Identifiable}}} &
    (50,-200)  & 2 & \multirow{3}{*}{$3.9569 \pm 0.0024$} &
                        \multirow{3}{*}{$4.9413 \pm 0.0004$} \\
    & (50,-600)  & 2 & & \\
    & (50,-1000) & 2 & & \\
    \cmidrule(lr){2-5}

    & (60,-200)  & 2 & \multirow{3}{*}{$4.0311 \pm 0.0082$} &
                        \multirow{3}{*}{$4.9912 \pm 0.0057$} \\
    & (60,-600)  & 2 & & \\
    & (60,-1000) & 2 & & \\
    \cmidrule(lr){2-5}

    & (70,-200)  & 2 & \multirow{3}{*}{$3.9733 \pm 0.0033$} &
                        \multirow{3}{*}{$4.9723 \pm 0.0010$} \\
    & (70,-600)  & 2 & & \\
    & (70,-1000) & 2 & & \\
    \midrule

    \multirow{12}{*}{\rotatebox[origin=c]{90}{\textcolor{UnidentRed}{Unidentifiable}}} &
    (50,0)    & 2 & \multirow{3}{*}{$5.8776 \pm 0.1955$} &
                        \multirow{3}{*}{$6.0282 \pm 0.1127$} \\
    & (50,-200) & 2 & & \\
    & (-50,200) & 2 & & \\
    \cmidrule(lr){2-5}

    & (60,0)    & 2 & \multirow{3}{*}{$2.8135 \pm 0.0166$} &
                        \multirow{3}{*}{$4.1708 \pm 0.0178$} \\
    & (60,-200) & 2 & & \\
    & (-60,200) & 2 & & \\
    \cmidrule(lr){2-5}

    & (70,0)    & 2 & \multirow{3}{*}{$2.6314 \pm 0.0080$} &
                        \multirow{3}{*}{$4.0396 \pm 0.0095$} \\
    & (70,-200) & 2 & & \\
    & (-70,200) & 2 & & \\
    \midrule

    \multicolumn{3}{l}{Ground truth (all)} & 4 & 5 \\
    \bottomrule
  \end{tabular}
  \end{sc}
  \end{small}
\end{table}

\begin{table}[h]
  \centering
  \caption{Additional results for the \textbf{intensity} system under different damping regimes.}
  \label{tab:intensity_more_results}
  \begin{small}
  \begin{sc}
  \setlength{\tabcolsep}{4pt}
  \begin{tabular}{l l c c c}
    \toprule
    Regime & Initial value & Time (s) & $\hat{\gamma}_0$ & $\hat{\gamma}_1$ \\
    \midrule
    \multirow{5}{*}{Underdamped}
      & $(0.4, 0)$ & $3\pi$ & $4.006 \pm 0.004$ & $0.090 \pm 0.008$ \\
      & $(0.6, 0)$ & $3\pi$ & $4.007 \pm 0.007$ & $0.088 \pm 0.004$ \\
      & $(0.8, 0)$ & $3\pi$ & $4.010 \pm 0.003$ & $0.084 \pm 0.005$ \\
      & $(1.0, 0)$ & $3\pi$ & $4.008 \pm 0.004$ & $0.083 \pm 0.003$ \\
      & Ground truth & -- & $4.0016$ & $0.08$ \\
    \midrule
    \multirow{5}{*}{Undamped}
      & $(0.4, 0)$ & $3\pi$ & $3.971 \pm 0.001$ & $0.005 \pm 0.003$ \\
      & $(0.6, 0)$ & $3\pi$ & $3.970 \pm 0.001$ & $0.002 \pm 0.004$ \\
      & $(0.8, 0)$ & $3\pi$ & $3.979 \pm 0.012$ & $-0.001 \pm 0.007$ \\
      & $(1.0, 0)$ & $3\pi$ & $3.975 \pm 0.002$ & $0.008 \pm 0.006$ \\
      & Ground truth & -- & $4$ & $0$ \\
    \midrule
    \multirow{5}{*}{Critical}
      & $(0.4;\ 0,-200,200)$ & $2$ & $3.934 \pm 0.030$ & $3.929 \pm 0.011$ \\
      & $(0.6;\ 0,-200,200)$ & $2$ & $4.031 \pm 0.022$ & $3.964 \pm 0.012$ \\
      & $(0.8;\ 0,-200,200)$ & $2$ & $4.088 \pm 0.030$ & $3.940 \pm 0.043$ \\
      & $(1.0;\ 0,-200,200)$ & $2$ & $4.059 \pm 0.014$ & $3.957 \pm 0.006$ \\
      & Ground truth & -- & $4$ & $4$ \\
    \midrule
    \multirow{5}{*}{Overdamped}
      & $(0.4;\ 0,-200,200)$ & $2.2$ & $3.929 \pm 0.020$ & $4.967 \pm 0.028$ \\
      & $(0.6;\ 0,-200,200)$ & $2.2$ & $3.962 \pm 0.030$ & $4.968 \pm 0.059$ \\
      & $(0.8;\ 0,-200,200)$ & $2.2$ & $4.055 \pm 0.129$ & $5.015 \pm 0.024$ \\
      & $(1.0;\ 0,-200,200)$ & $2.2$ & $3.976 \pm 0.030$ & $4.877 \pm 0.017$ \\
      & Ground truth & -- & $4$ & $5$ \\
    \bottomrule
  \end{tabular}
  \end{sc}
  \end{small}
\end{table}

\begin{table}[h]
  \centering
  \caption{Additional results for the \textbf{scale} system under different damping regimes.}
  \label{tab:scale_more_results}
  \setlength{\tabcolsep}{4pt}
  \begin{small}
  \begin{sc}
  \begin{tabular}{l l c c c}
    \toprule
    Regime & Initial value & Time (s) & $\hat{\gamma}_0$ & $\hat{\gamma}_1$ \\
    \midrule
    \multirow{5}{*}{Underdamped}
      & $(3, 0)$ & $3\pi$ & $4.020 \pm 0.0005$ & $0.074 \pm 0.0003$ \\
      & $(5, 0)$ & $3\pi$ & $4.019 \pm 0.0003$ & $0.074 \pm 0.0003$ \\
      & $(7, 0)$ & $3\pi$ & $4.017 \pm 0.0056$ & $0.075 \pm 0.0016$ \\
      & $(9, 0)$ & $3\pi$ & $4.020 \pm 0.0002$ & $0.074 \pm 0.0002$ \\
      & Ground truth & -- & $4.0016$ & $0.08$ \\
    \midrule
    \multirow{5}{*}{Undamped}
      & $(3, 0)$ & $3\pi$ & $3.983 \pm 0.0003$ & $0.00004 \pm 0.0001$ \\
      & $(5, 0)$ & $3\pi$ & $3.984 \pm 0.0021$ & $0.00006 \pm 0.0005$ \\
      & $(7, 0)$ & $3\pi$ & $3.983 \pm 0.0005$ & $0.00002 \pm 0.0002$ \\
      & $(9, 0)$ & $3\pi$ & $3.983 \pm 0.0003$ & $0.00012 \pm 0.0004$ \\
      & Ground truth & -- & $4$ & $0$ \\
    \midrule
    \multirow{5}{*}{Critical}
      & $(3;\ 0,-200,200)$ & $2.5$ & $4.025 \pm 0.0001$ & $4.074 \pm 0.0002$ \\
      & $(5;\ 0,-200,200)$ & $2.5$ & $4.124 \pm 0.0003$ & $3.908 \pm 0.0002$ \\
      & $(7;\ 0,-200,200)$ & $2.5$ & $4.122 \pm 0.0003$ & $3.917 \pm 0.0003$ \\
      & $(9;\ 0,-200,200)$ & $2.5$ & $4.113 \pm 0.0004$ & $4.086 \pm 0.0007$ \\
      & Ground truth & -- & $4$ & $4$ \\
    \midrule
    \multirow{5}{*}{Overdamped}
      & $(3;\ 0,-200,200)$ & $2.5$ & $4.033 \pm 0.0003$ & $4.855 \pm 0.0002$ \\
      & $(5;\ 0,-200,200)$ & $2.5$ & $4.179 \pm 0.0004$ & $4.976 \pm 0.0003$ \\
      & $(7;\ 0,-200,200)$ & $2.5$ & $4.128 \pm 0.0001$ & $4.918 \pm 0.0001$ \\
      & $(9;\ 0,-200,200)$ & $2.5$ & $4.130 \pm 0.0002$ & $4.918 \pm 0.0002$ \\
      & Ground truth & -- & $4$ & $5$ \\
    \bottomrule
  \end{tabular}
  \end{sc}
  \end{small}
\end{table}


\begin{table}[t]
  \caption{Effect of the variance-floor target $\tau$ on parameter estimation in the pendulum system across damping regimes.}
  \label{tab:tau_sensitivity}
  \centering
  \begin{small}
  \begin{sc}
  \begin{tabular}{lccc}
    \toprule
    Regime & $\tau$ & $\hat{\gamma}_0$ & $\hat{\gamma}_1$ \\
    \midrule
    \multirow{3}{*}{Under}
      & 1   & $4.0037 \pm 0.0003$ & $0.0800 \pm 0.0003$ \\
      & 10  & $4.0048 \pm 0.0004$ & $0.0797 \pm 0.0001$ \\
      & 100 & $4.0048 \pm 0.0008$ & $0.0801 \pm 0.0006$ \\
    \midrule
    \multirow{3}{*}{Un}
      & 1   & $4.0032 \pm 0.0010$ & $0.0002 \pm 0.0004$ \\
      & 10  & $4.0029 \pm 0.0008$ & $0.0003 \pm 0.0003$ \\
      & 100 & $4.0032 \pm 0.0005$ & $0.0005 \pm 0.0006$ \\
    \midrule
    \multirow{3}{*}{Cri}
      & 1   & $4.0218 \pm 0.0040$ & $4.0352 \pm 0.0017$ \\
      & 10  & $4.0212 \pm 0.0026$ & $4.0353 \pm 0.0015$ \\
      & 100 & $4.0232 \pm 0.0025$ & $4.0406 \pm 0.0022$ \\
    \midrule
    \multirow{3}{*}{Over}
      & 1   & $3.9733 \pm 0.0033$ & $4.9723 \pm 0.0010$ \\
      & 10  & $3.9735 \pm 0.0018$ & $4.9743 \pm 0.0020$ \\
      & 100 & $3.9818 \pm 0.0058$ & $4.9799 \pm 0.0051$ \\
    \bottomrule
  \end{tabular}
  \end{sc}
  \end{small}
\end{table}